\newtheorem{definition}{Definition}
\newtheorem{thm}{Theorem}
\newtheorem{corollary}{Corollary}
\newtheorem{remark}{Remark}
\newtheorem{proposition}{Proposition}
\newtheorem{lemma}{Lemma}
\newcommand\be{\begin{equation}}
\newcommand\ba{\begin{eqnarray}}
\newcommand\ee{\end{equation}}
\newcommand\ea{\end{eqnarray}}
\newcommand{\R}{\mathbb R}
\newcommand{\dd}{\mathrm d}
\newcommand{\pos}{\mathrm q}
\newcommand{\mom}{\mathrm p}
\DeclareMathOperator{\diag}{diag}
\newcommand*{\email}[1]{\href{mailto:#1}{\nolinkurl{#1}} } 
\title{Symplectic Neural Flows for Modeling and Discovery}
\author[*]{Priscilla Canizares}
\author[*]{Davide Murari}
\author[*]{Carola-Bibiane Sch\"onlieb}
\author[*]{Ferdia Sherry}
\author[*]{Zakhar Shumaylov}
\affil[*]{Department of Applied Mathematics and Theoretical Physics, University of Cambridge}
\affil[ ]{\texttt{\{pc464,dm2011,cbs31,fs436,zs334\}@cam.ac.uk}\vspace{-2.5em}}
\newcommand{\one}[1]{{#1}}
\newcommand{\two}[1]{{#1}}
\begin{document}


\maketitle
\begin{abstract}
Hamilton's equations are fundamental for modeling complex physical systems, where preserving key properties such as energy and momentum is crucial for reliable long-term simulations. Geometric integrators are widely used for this purpose, but neural network-based methods that incorporate these principles remain underexplored. This work introduces \texttt{SympFlow}, a time-dependent symplectic neural network designed using parameterized Hamiltonian flow maps. This design allows for backward error analysis and ensures the preservation of the symplectic structure. \texttt{SympFlow} allows for two key applications: (i) providing a time-continuous symplectic approximation of the exact flow of a Hamiltonian system\textemdash purely based on the differential equations it satisfies, and (ii) approximating the flow map of an unknown Hamiltonian system relying on trajectory data. We demonstrate the effectiveness of \texttt{SympFlow} on diverse problems, including chaotic and dissipative systems, showing improved energy conservation compared to general-purpose numerical methods and accurate approximations from sparse irregular data. {We also provide a thorough theoretical analysis of \texttt{SympFlow}, showing it can approximate the flow of any time-dependent Hamiltonian system, and providing an a-posteriori error estimate in terms of energy conservation.}
\end{abstract}

\keywords{
Deep learning, Physics-informed, Dynamical systems, Hamiltonian systems, Symplectic maps, Symplectic integrators, Neural Flows
}

\section{Introduction}\label{sec:introduction}
Recent advances have demonstrated the ability of machine learning models to predict dynamics from time series data \cite{schmidhuber1997long,graves2012supervised}, solve differential equations in an unsupervised manner \cite{lagaris1998artificial,raissi2019physics}, learn Hamiltonian or Lagrangian structures from data \cite{greydanusHamiltonian2019,dierkes2023hamiltonian,cranmer2020lagrangian,kaltsas2024}, and symbolically discover underlying differential equations \cite{brunton2016discovering,cory2024evolving,Cranmer2020}. While promising, the applicability of these methods is often hindered by challenges related to trainability \cite{krishnapriyan2021characterizing}, scalability \cite{grossmann2024can}, and the specific complexities of the systems under consideration \cite{mcgreivy2024weak}.  

A key challenge in numerically solving differential equations is preserving critical physical quantities of interest, such as energy or momentum. Numerically reproducing a solution's qualitative behavior is often essential for obtaining reliable long-term predictions. General-purpose numerical solvers frequently fail in doing so. 

To address this problem, there is a thorough study of geometric integrators \cite{hairerGeometricNumericalIntegration2006}, specifically designed to preserve first integrals of motion or other properties of the underlying system. Similar geometric extensions for neural network-based integrators remain underdeveloped. This, coupled with the inherent errors in learned models, leads to error propagation that becomes increasingly severe over time. As a result, effectively training such models becomes a significant challenge \cite{10191822,krishnapriyan2021characterizing}, hindering the feasibility of long-term simulations \cite{wang2023long,mccabe2023towards,michalowska2024neural}.

This work introduces \texttt{SympFlow}, a neural flow constrained to be symplectic by construction. The symplectic constraint enables it to provide accurate and reliable long-term solutions for Hamiltonian systems. \texttt{SympFlow} is a universal approximator in the space of Hamiltonian flows (\Cref{thm:universal}) and its architecture permits the extraction of the exact \say{shadow} Hamiltonian corresponding to the neural flow (\Cref{eq:iterativeHam}). This facilitates backward analysis through Hamiltonian matching and allows for error-free switching between the Hamiltonian and flow representations. This flexibility makes our approach adaptable to diverse tasks, including unsupervised learning scenarios where the objective is to solve a given Hamiltonian differential equation, and supervised settings where the network learns to evolve a system solely based on observed trajectories. In essence, \texttt{SympFlow} acts as both: (i) a Hamiltonian neural network \cite{greydanusHamiltonian2019} capable of error-free evolution, and (ii) a neural flow \cite{bilovs2021neural} from which the underlying Hamiltonian can be extracted. We demonstrate the efficacy of \texttt{SympFlow} across a range of supervised and unsupervised tasks, including those involving noise and dissipation.

\subsection{Outline of the paper}
This paper is structured as follows: \Cref{sec:related} provides an overview of related work. \Cref{sec:methodology} introduces \texttt{SympFlow}\footnote{See \href{https://github.com/davidemurari/sympflow}{https://github.com/davidemurari/sympflow} for a PyTorch implementation of our architecture.}, detailing its architecture, discussing the various training approaches and in \Cref{sec:theory} we prove various theoretical properties related to \texttt{SympFlow}. Finally, \Cref{sec:experiments} presents the numerous experimental results demonstrating its effectiveness and usefulness for long-term stable integration, both in the \emph{supervised} and \emph{unsupervised} settings. We conclude with \Cref{sec:conclusions} summarizing our findings and expanding on future research directions.

\subsection{Related Work}\label{sec:related}
\paragraph{Deep Learning Integrators}
The deep learning revolution has profoundly impacted scientific computing, particularly in simulating physical systems governed by differential equations.

Physics-Informed Neural Networks (PINNs) \cite{lagaris1998artificial, raissi2019physics, wang2021physics} have emerged as a prominent approach, integrating domain knowledge, physical laws, and constraints directly into the learning process. PINNs have been demonstrated effective in solving forward and inverse problems for partial differential equations (PDEs) \cite{raissi2019physics, sun2020surrogate, zhu2019physics, karumuri2020simulator, sirignano2018dgm} and regularizing learning \cite{zhang2024biophysics}.

Among the efforts in using PINNs to solve differential equations, we mention in particular the works \cite{haitsiukevich2022learning,mattheakis2022hamiltonian}, where the authors focus on applying these techniques to Hamiltonian systems. Both of these papers focus on solving a single initial value problem, i.e., they fix an initial condition, and do not constrain their neural networks to reproduce the expected qualitative behavior of the Hamiltonian flow, such as being symplectic.

The surge in interest in PINNs stems from the limitations of traditional numerical solvers, which often struggle with computationally demanding scenarios, such as high-dimensional problems, non-linear and non-smooth PDEs requiring expensive fine grid discretization, and the need for repeated simulations across varying domain geometries, parameters, and different initial and boundary conditions \cite{han2018solving}.

Incorporating physical constraints, such as energy conservation, within PINNs has led to the development of structured neural networks \cite{celledoni2021structure, galimberti2023hamiltonian, jin2020sympnets, celledoni2023dynamical,tierz2025graph,hernandez2022thermodynamics}, for example enforcing point symmetry equivariance into the network architectures \cite{arora2024invariant, lagrave2022equivariant, shumaylov2024liealgebracanonicalizationequivariant}. It is crucial to note that while PINNs leverage physical constraints for regularization, they do not typically enforce these constraints explicitly within their architecture. This contrasts with our work, which does not follow a PINN-based strategy but focuses on constructing \emph{neural networks that intrinsically preserve the symplectic structure} of the phase space. 

\paragraph{Symplecticity in Neural Networks}
\one{The study of Hamiltonian and Lagrangian systems in the context of neural networks is motivated by several applications. The most popular is the discovery and simulation of physical systems, \cite{offen_symplectic_2022,OBERBLOBAUM2023114780,bertalanLearningHamiltonianSystems2019,Burby_2020,jin2020sympnets,tapley2024symplectic,greydanusHamiltonian2019,thangamuthu2022unravelling}, as we do in our work. Still, we also mention their relevance to improve the network adversarial robustness \cite{zhao2023adversarial}, the network trainability by preventing vanishing gradient issues \cite{galimberti2023hamiltonian}, or also in generative modeling through the paradigm of flow matching \cite{liu2023generalized,koshizuka2022neural,neklyudov2023computational}.} In recent years, there has been a surge of research dedicated to integrating symplectic structures into neural networks modeling Hamiltonian systems. These efforts can be broadly classified into two categories: \emph{fixed-step} and \emph{variable-step} methods, with significant variations in how the symplectic structure is used.

\emph{Fixed-step methods} explicitly construct a symplectic mapping between consecutive time steps.  Examples include SympNet \cite{jin2020sympnets}, and H\'enonNet \cite{Burby_2020}, which can interface with separable and non-separable Hamiltonian systems. Despite the symplectic constraint, these networks are universal approximators within the space of symplectic maps \cite{jin2020sympnets,tapley2024symplectic} and lead to controlled prediction errors \cite{pmlr-v139-chen21r}. 

\emph{Variable-step methods} predominantly utilize a Hamiltonian Neural Network (HNN) \cite{greydanusHamiltonian2019} or Neural ODE \cite{chen2018neural} framework. HNNs directly learn the Hamiltonian function, ensuring the recovery of conservative dynamics. Conversely, Neural ODEs recover continuous dynamics by integrating neural network-based differential equations. Both approaches often rely on general-purpose numerical methods, which can disrupt the symplectic structure and the conservative nature of the system. 

\noindent To address this, several works embed symplectic integrators within the network architecture.  Notable examples include SRNN \cite{Chen2020Symplectic}, TaylorNet \cite{tong2021symplectic}, SSINN \cite{NEURIPS2020_439fca36}, and NSSNN \cite{xiong2021nonseparable}. SymODEN \cite{Zhong2020Symplectic} further extends this framework by incorporating an external control term into the Hamiltonian dynamics. These methods aim to predict continuous system trajectories while preserving the symplectic structure of the phase space. 

\noindent However, in the broader literature, recent research in neural flows \cite{bilovs2021neural} has demonstrated significant improvements in long-time integration compared to Neural ODE counterparts, without the use of numerical solvers. This motivates our extension of fixed-step methods, particularly SympNets \cite{jin2020sympnets}, to the flow framework. By directly learning the Hamiltonian flow-map and being able to associate it with an exact analytical Hamiltonian function, the analysis is considerably simplified, circumventing the need for intricate procedures, for example, through discrete gradients \cite{matsubara2020deep}.

In a similar fashion to \texttt{SympFlow}, also the TSympOCNet network architecture in \cite{zhang2024timedependentsymplecticnetworknonconvex} is a time-dependent symplectic neural network. Both these two architecture take inspiration from SympNets, and introduce a time-dependency modifying its architecture. Our work departs from \cite{zhang2024timedependentsymplecticnetworknonconvex} in how the time-dependency is introduced and also in the focus of the research. While our work aims to theoretically analyse the properties of the proposed model, and experimentally evaluate its effectiveness compared to unconstrained networks, the focus of \cite{zhang2024timedependentsymplecticnetworknonconvex} is on developing a model to be used in the context of path planning problems.

While this review is not exhaustive, it is worth noting the existence of alternative approaches such as learning modified generating functions as symplectic map representations \cite{pmlr-v139-chen21r}, or directly addressing constrained Hamiltonian systems \cite{finzi2020simplifying, celledoni2023learning}.

\paragraph{Non-conservative systems}
Real-world dynamical systems often exhibit energy dissipation due to irreversible processes such as heat transfer, friction, and radiation. Accurately capturing such dynamics necessitates incorporating these effects into the equations of motion.

While neural networks have shown promise in modeling conservative Hamiltonian systems, extending these approaches to non-conservative dynamics presents significant challenges. Existing efforts primarily focus on augmenting the Hamiltonian framework. Within the context of HNNs and NODEs, one approach involves the separate parameterization of the Hamiltonian and the dissipative term \cite{sosanya2022dissipative}. Another more prevalent approach leverages the framework of port-Hamiltonian dynamics. For instance, \cite{PhysRevE.104.034312} extends Hamiltonian Neural Networks (HNNs) to port-Hamiltonian systems, while \cite{zhong2019dissipative} adapts SymODEN to this framework.  

However, to the best of our knowledge, no prior work addresses non-conservative dynamics within the context of symplectic flows.

This work adopts a distinct and simpler strategy based on the formulation proposed by \cite{Galley_2013, Galley_2014}. This formulation recasts non-conservative dynamics within a classical Hamiltonian framework by doubling the phase-space variables and computing the evolution equations in the corresponding doubled space. The counterparts of the phase-space variables follow a time-reversed trajectory, thus keeping the total energy of the augmented system constant. The final form of the solution is obtained by projecting back onto the original system, that is, taking the physical limit, recovering the original non-conservative dynamics (see \Cref{app:NCA} for a description of the procedure). This method allows for a unified treatment of conservative and non-conservative systems, enabling direct application of existing neural network architectures and techniques developed for Hamiltonian systems, without explicit modeling of dissipative forces. We have adapted this technique to recast non-conservative dynamics in a symplectic form and seamlessly model dissipation within \texttt{SympFlow}.

\subsection{Main contributions}

The main contributions can be summarized as follows:
\begin{itemize}
\item We introduce a novel time-dependent symplectic neural flow, \texttt{SympFlow}, designed using parameterized Hamiltonian flow maps. This network can be used both for approximating the flow of a Hamiltonian system, given the governing equations, and learning the underlying Hamiltonian directly from observed trajectory data.
\item Theoretically, we show that \texttt{SympFlow} is a universal approximator in the space of Hamiltonian flows in \Cref{thm:universal}. Furthermore, the ability to extract the underlying Hamiltonian from a trained \texttt{SympFlow} enables a-posteriori backward analysis of the approximated system in \Cref{thm:aposteriori}.
\item Practically, we demonstrate that \texttt{SympFlow} can effectively model and learn both conservative and non-conservative dynamics, preserving the symplectic structure even in the presence of dissipation. This is demonstrated for three systems: a Simple Harmonic Oscillator (\Cref{sec:SHO}), Henon-Heiles (\Cref{sec:HH}) and Damped Harmonic Oscillator (\Cref{sec:DHO}). 
\item Numerical examples highlight that \texttt{SympFlow} exhibits improved long-term energy behavior compared to unstructured neural networks and is more data efficient in supervised learning tasks.
\end{itemize}

\section{Methodology}\label{sec:methodology}

In this work we focus on canonical Hamiltonian systems, that is systems of ordinary differential equations (ODEs) of the form:
\begin{equation}
\frac{\dd x}{\dd t} = \mathbf J \nabla H(x),\quad\text{with}\quad \mathbf J = \begin{pmatrix}0 & \mathbf{I}_d \\ -\mathbf{I}_d & 0\end{pmatrix},
\label{eq:hamiltonian_ode}
\end{equation}
for \two{$d\in\mathbb{N}$}, a state variable $x\in \R^{2d}$ and a twice-continuously differentiable \emph{Hamiltonian} function $H:\R^{2d}\to\R$. In \Cref{eq:hamiltonian_ode}, the matrices $\mathbf{I}_d,0\in\R^{d\times d}$ are the identity and zero matrices respectively. The phase space variable is typically partitioned into a position $q\in \R^d$ and momentum $p\in\R^d$, $x = (q,p)$. Under standard, non-restrictive, assumptions on $H$, the corresponding initial value problem has a unique solution for any initial condition and initial time \cite{arnoldOrdinaryDifferentialEquations1991}, which can be used to define the corresponding time-$t$ \emph{flow map} $\phi_{H,t}:\R^{2d} \to \R^{2d}$:
\begin{equation}
\frac{\dd}{\dd t} \phi_{H, t}(x_0) = \mathbf J \nabla H(\phi_{H, t}(x_0))\quad\text{and}\quad \phi_{H, 0}(x_0)= x_0.\label{eq:hamiltonian_flow_map}
\end{equation}
\two{The notation $\frac{\dd}{\dd t}\phi_{H,t}(x_0)$ stands for the time derivative of the flow map $\phi_H:\R\times\R^{2d}\to\R^{2d}$ defined as $\phi_H(t,x_0)=\phi_{H,t}(x_0)$ for every $(t,x)\in\R\times \R^{2d}$, i.e., $\frac{\dd}{\dd t}\phi_{H,t}:=\partial_t \phi_H(t,\cdot)$.} Time-independent Hamiltonian systems as in \Cref{eq:hamiltonian_flow_map} conserve the Hamiltonian energy function, i.e., $H(\phi_{H,t}(x_0)) = H(x_0)$ for every $t\geq 0$.

The exact flow map in \Cref{eq:hamiltonian_flow_map} is generally not accessible, and we have to approximate it. For Hamiltonian systems, the time-$t$ flow is \emph{symplectic} since the Jacobian matrix $D\phi_{H, t}(x)$ satisfies the identity $[D\phi_{H, t}(x)] ^\top\mathbf J [D\phi_{H, t}(x)] = \mathbf J$. Consequently, the flow $\phi_{H,t}$ preserves the canonical phase space volume \cite{hairerGeometricNumericalIntegration2006}. It is thus desirable that, when approximating $\phi_{H,t}$, such qualitative properties are reproduced by the approximate map. \two{We provide the proof of the symplecticity of $\phi_{H,t}$, together with some additional background material on Hamiltonian systems and symplectic maps, in \Cref{app:background}.}

\subsection{The \texttt{SympFlow} architecture}

\texttt{SympFlow} generalizes the gradient modules of SympNets \cite{jin2020sympnets} to accommodate time dependence. Each of its layers is the exact time-$t$ flow of a suitable time-dependent Hamiltonian system. This design ensures the approximated Hamiltonian flow retains the essential symplectic structure across layers.
\begin{wrapfigure}{R}{0.4\textwidth}
    \centering
\includegraphics[width=\linewidth]{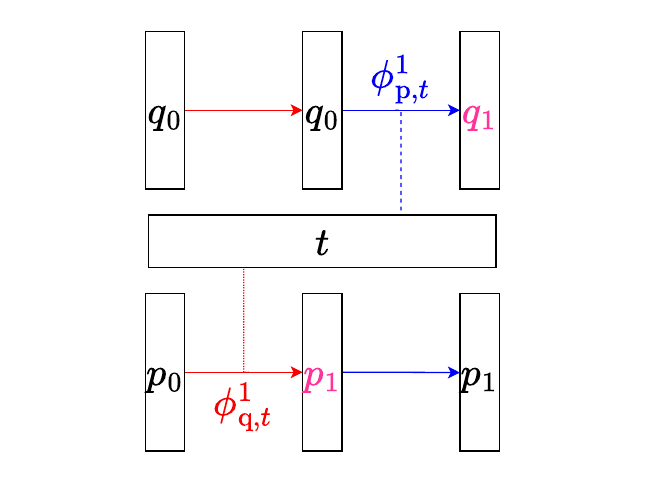}
    \caption{One layer of the \texttt{SympFlow} \two{ where first $\phi^1_{\pos,t}$ updates $p_0$ to $p_1$ and fixes $q_0$, and then the map $\phi_{\mom,t}^1$ fixes $p_1$ and updates $q_0$ to $q_1$, in analogy to splitting methods.}}
    \label{fig:architecture}
\end{wrapfigure}

More specifically, \texttt{SympFlow} is defined by composing exact flow maps of time-dependent Hamiltonians, each of which depends either on position or momentum, but not both. Hence, given an arbitrary \two{twice} continuously differentiable function $V_{\pos}:\R\times \R^d \to \R$, we can consider the flow map (starting from time $0$):
\begin{equation}\phi_{\pos,t}(q,p)=\begin{pmatrix}q \\ p - \left(\nabla_q V_{\pos}\left(t, q\right) - \nabla_q V_{\pos}\left(0, q\right)\right)\end{pmatrix},\label{eq:pos_flow_map}\end{equation}
which corresponds to the Hamiltonian $    H_{\pos, t}(q, p) = \dot V_{\pos}(t, q),$ where $\dot{V}_{\pos}$ stands for $\partial_t V_{\pos}$ and the subscript $\pos$ indicates that the Hamiltonian depends on position, but not momentum. Similarly, for a \two{twice} continuously differentiable function $V_{\mom}:\R\times \R^d \to \R$, we can consider the flow map (starting from time $0$):
\begin{equation}
    \phi_{\mom,t}(q,p)=\begin{pmatrix}q + \left(\nabla_p V_{\mom}\left(t, p\right) - \nabla_p V_{\mom} \left(0, p\right)\right)\\ p\end{pmatrix},\label{eq:mom_flow_map}
\end{equation}
which corresponds to the Hamiltonian $H_{\mom, t}(q, p) = \dot V_{\mom}(t,p)$. As above, the subscript $\mom$ indicates that the Hamiltonian depends on momentum but not position.

Although the Hamiltonians above take a very particular form, they naturally arise when applying splitting integration methods to separable Hamiltonians, \two{see \Cref{app:background}}. This work capitalizes on these methods, which have previously been used to good effect in designing neural networks with desirable structural properties \cite{celledoni2023dynamical}. This connection with separable Hamiltonian systems only provides an intuitive way to interpret the network layers. However, there is no inherent limitation in applying \texttt{SympFlow} to non-separable systems; see \Cref{thm:universal}. By parameterizing $V_{\pos}$ and $V_{\mom}$ as multi-layer perceptrons (MLPs) and composing such steps, we obtain a time-dependent symplectic map, the parameters of which can be optimized to perform different tasks such as fitting data and, more generally, minimizing an objective function. A \texttt{SympFlow} $\bar{\psi}:\R\times \R^{2d}\to\R^{2d}$ with $L\in\mathbb{N}$ layers is a map of the form:
\begin{equation}
\bar{\psi}(t,\cdot) = \phi_{\mom, t}^L \circ \phi_{\pos, t}^L\circ \ldots \circ \phi_{\mom, t}^1 \circ \phi_{\pos, t}^1.\label{eq:sympflow}
\end{equation}
We represent one layer of the architecture in \Cref{fig:architecture}. \two{We note here that individual flows depend on $i$, meaning they are associated with different Hamiltonian systems. Thus, the composition stands for the flow of a new Hamiltonian at time $t$, \emph{not} $L\cdot t$.} We remark that since \texttt{SympFlow} is obtained by composing \emph{exact} flow maps, it satisfies $\bar{\psi}(0,x)=x$ for every point $x\in\R^{2d}$. 

In \Cref{eq:sympflow}, $\phi_{q, t}^i$ denotes the map in \Cref{eq:pos_flow_map} where $V_{\pos}$ is replaced by a $V_{\pos}^i$, and similarly for $\phi_{\mom,t}^i$. We will interchangeably use the notation $\bar{\psi}(t,x)=\bar{\psi}_t(x)$ to denote the action of a \texttt{SympFlow}.

\subsection{The Hamiltonian of the \texttt{SympFlow}}\label{sec:hamSympFlow}
One of the key properties of our architecture is that it is the composition of Hamiltonian flows. This implies that it is also a Hamiltonian flow, as described by the next proposition.
\begin{proposition}[\two{Proposition 2.54 \cite{de2006symplectic}}]
Let $H^1,H^2:\R\times \R^{2d}\to\R$ be \two{twice} continuously differentiable functions, and $\phi_{H^1,t},\phi_{H^2,t}:\R^{2d}\to\R^{2d}$ the exact time-$t$ flows (starting from time $0$) of the Hamiltonian systems they define. Then, the map $\psi_t=\phi_{H^2,t}\circ \phi_{H^1,t}:\R^{2d}\to\R^{2d}$ is the exact time-$t$ flow of the Hamiltonian system defined by the Hamiltonian function
\begin{equation}\label{eq:compositionHam}
H^3(t,x)=H^2(t,x)+H^1\Big(t,\phi_{H^2,t}^{-1}(x)\Big).
\end{equation}
\label{pr:symplectic}
\end{proposition}
\two{The notation $\phi_{H^2,t}^{-1}$ in \Cref{eq:compositionHam} denotes the inverse of the time-$t$ flow of the Hamiltonian system defined by $H^2$, i.e., the inverse of the map $\phi_{H^2,t}:\mathbb{R}^{2d}\to\mathbb{R}^{2d}$ defined as $\phi_{H^2,t}(\cdot)=\phi_{H^2}(t,\cdot)$.} Thanks to \Cref{pr:symplectic}, we can associate a \texttt{SympFlow} with a time-dependent Hamiltonian function. To assemble such a Hamiltonian function, we can group the pairs of alternated momentum and position flows, finding the Hamiltonian associated with $\phi_{\mom,t}^i\circ \phi_{\pos,t}^i$, which is
\[
H^i_t(q,p)=\dot{V}_{\mom}^i(t,p) + \dot{V}_{\pos}^i(t,q - (\nabla_p V_{\mom}^i(t, p) - \nabla_p V_{\mom}^i(0, p))).
\]
The Hamiltonian of the \texttt{SympFlow} in \Cref{eq:sympflow} can then be expressed iteratively, aggregating from the last layer to the first as
\begin{equation}\label{eq:iterativeHam}
H_t^{L:i}(x)=H_t^{L:(i+1)}(x)+H_t^i\left(\phi_{H_t^{L:(i+1)}, t}^{-1}(x)\right)
,\,\,i=1,\ldots,L-1,
\end{equation}
where $H_t^{L:L} = H_t^L$, and
\[
\phi_{H_t^{L:i}, t}^{-1} = \left(\phi_{H_t^{L:(i+1)}, t}\circ \phi_{H_t^{i}, t}\right)^{-1} = \phi_{H_t^{i}, t}^{-1} \circ \phi_{H_t^{L:(i+1)}, t}^{-1}.
\]
To lighten the notation, we introduce the operator $\mathcal{H}$ sending a \texttt{SympFlow} $\bar{\psi}$ into one of its generating Hamiltonian functions $\mathcal{H}(\bar{\psi}):\R\times\R^{2d}\to\R^{2d}$, all of which differ by a function of the time variable $t$\footnote{\two{We remark that two twice-continuously differentiable Hamiltonian functions $H_1,H_2:\mathbb{R}\times \mathbb{R}^{2d}\to\mathbb{R}$ admitting a function $f:\mathbb{R}\to\mathbb{R}$ such that $H_1(t,x)=H_2(t,x)+f(t)$ for every $(t,x)\in\mathbb{R}\times\mathbb{R}^{2d}$ generate the same Hamiltonian vector fields, and hence $\phi_{H_1,t}=\phi_{H_2,t}$, given that $\nabla_x H_1(t,x)=\nabla_x H_2(t,x)$.}}. In this way, the Hamiltonian of the network $\mathcal{H}(\bar{\psi})$ corresponds to $H_t^{L:1}$ defined as in \Cref{eq:iterativeHam}. \two{The symbol $L:i$ denotes that the aggregation has occurred from the $L$th layer to the $i$th.}

\one{We remark that, since the set of time-$t$ flows of autonomous Hamiltonian systems does not form a group (in particular, it is not closed under composition as seen, for example, from \Cref{eq:compositionHam}), we directly define the layers of \texttt{SympFlow} as time-$t$ flows of non-autonomous Hamiltonian systems, despite our aim to approximate the flow of autonomous systems. This choice allows for more expressiveness, at no additional cost.}

\subsection{Training the \texttt{SympFlow}}\label{sec:training}
In \Cref{sec:experiments} we apply \texttt{SympFlow} to approximate the flow map of an autonomous Hamiltonian system of the form $\dot{x}(t)=\mathbf{J}\nabla H(x(t))\in\R^{2d}$ in both \emph{supervised} and \emph{unsupervised} settings. In what follows we describe the relevant training objectives for both of these. In all the experiments below, we assume there exists a forward invariant compact subset $\Omega\subset\R^{2d}$ \one{for the \emph{true} Hamiltonian system}, meaning that if $x_0\in\Omega$ also $\phi_{H,t}(x_0)\in\Omega$ for every $t\geq 0$.  This forward-invariance assumption allows us to make predictions for any time $t\geq 0$, once we are able to make them sufficiently accurate for initial conditions in $\Omega$, and time instants in a compact interval $[0,\Delta t]$. More explicitly, let us consider a function $\bar{\psi}:[0,\Delta t]\times\Omega\to\Omega$ providing an accurate approximation of the exact flow $\phi_{H,t}$ for $t\in [0,\Delta t]$. We can extend $\bar{\psi}$ to $[0,+\infty)$ via $\psi:[0,+\infty)\times \Omega\to\Omega$ defined as 
\begin{equation}\label{eq:longTimeExtension}
\psi(t,x_0):=\bar{\psi}_{t-\Delta t\lfloor t/\Delta t\rfloor}\circ \left(\bar{\psi}_{\Delta t}\right)^{\lfloor t/\Delta t\rfloor}(x_0),
\end{equation}
where we recall that $\bar{\psi}_t(x_0)=\bar{\psi}(t,x_0)$,
\[
\left(\bar{\psi}_{\one{\Delta t}}\right)^{\lfloor t/\Delta t\rfloor} = \underbrace{\bar{\psi}_{\one{\Delta t}}\circ \cdots \circ \bar{\psi}_{\one{\Delta t}}}_{\lfloor t/\Delta t\rfloor\text{ times}}, 
\]
and $\lfloor t/\Delta t\rfloor$ is the largest integer smaller or equal than $t/\Delta t\in [0,+\infty)$. The map $\psi(t,x_0)$ provides an approximation of $\phi_{H,t}(x_0)$ for every $t\geq 0$ and $x_0\in\Omega$. 

\paragraph{Regression loss term}
In the \emph{supervised} setting, we aim to approximate the flow map $\phi_{H,t}:\Omega\to\Omega$ of an unknown Hamiltonian system with Hamiltonian function $H$. For this supervised problem, we suppose to have access to observed trajectories, all collected in the set
\[
\left\{y_m^n=\phi_{H,t_{m}^n}(x^n_0)+\varepsilon_{m}^n:\,\,n=1,...,N,\,m=1,...,M\right\},
\]
where $\varepsilon_m^n$ is a perturbation due to noise or discretization errors, $x_0^n\in\Omega\subset\R^{2d}$, $t_m^n\in [0,\Delta t]$, and \two{$N,M\in\mathbb{N}$ are, respectively, the number of initial conditions in the dataset and the number of time samples for each of their trajectories}. The training process is thus purely based on data, and we minimize the mean squared error
\begin{equation}\label{eq:supervisedLoss}
\mathcal{L}(\bar{\psi}) = \frac{1}{NM}\sum_{n=1}^N\sum_{m=1}^M\left\|\bar{\psi}(t_m^n,x^n_0) - y_m^n\right\|_2^2,
\end{equation}
where $\bar{\psi}:[0,\Delta t]\times\R^{2d}\to\R^{2d}$ can be a \texttt{SympFlow} or a generic Multilayer Perceptron (MLP). Both networks take time as an input, allowing us to deal with non-uniformly sampled trajectory data. An example dataset for the simple harmonic oscillator is shown in \Cref{fig:dataSHO}.

\begin{wrapfigure}{R}{0.25\textwidth}
    \centering
    \includegraphics[width=0.8\linewidth]{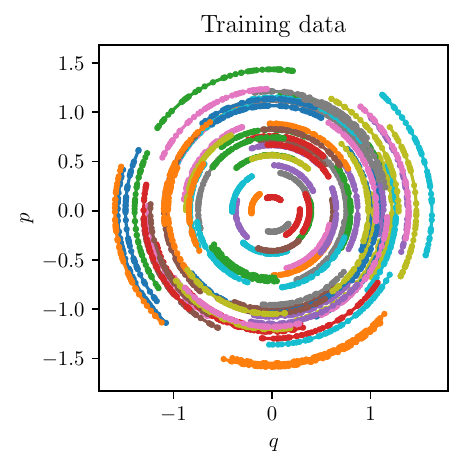}
    \caption{Training data with $N=100$ initial conditions, $M=50$ sampling times, and no noise, i.e., $\varepsilon_m^n=0$.}
    \label{fig:dataSHO}
\end{wrapfigure}
In the \emph{unsupervised} setting, based on the analysis in \Cref{sec:hamSympFlow}, we consider a loss function composed of two terms, namely a residual loss term and a Hamiltonian matching term, to find the weights of a \texttt{SympFlow}. 

\paragraph{Residual loss term}The first term is typical for operator learning tasks, see for example \cite{wang2023long}, and is defined as
\begin{equation}\label{eq:L1}
\mathcal{L}_1(\bar{\psi})=\frac{1}{N}\sum_{\one{n}=1}^N\left\|\left.\frac{\mathrm{d}}{\mathrm{d} t}\bar{\psi}(t,x_0^{\one{n}})\right|_{t=t_{\one{n}}} - \mathbf{J}\nabla H\left(\bar{\psi}(t_{\one{n}},x_0^{\one{n}})\right)\right\|_2^2,
\end{equation}
where $x_0^{\one{n}}\in \Omega$ and $t_{\one{n}}\in [0,\Delta t]$ for every $\one{n}=1,...,N$. For this, $\bar{\psi}:[0,\Delta t]\times\R^{2d}\to\R^{2d}$ can be a \texttt{SympFlow} or an MLP. \two{In \Cref{eq:L1}, $N\in\mathbb{N}$ denotes the number of collocation points we use to assemble the residual loss.
}

\paragraph{Hamiltonian Matching term}The Hamiltonian structure of the \texttt{SympFlow} gives us a natural way to regularize the training phase. To that end, we use a Hamiltonian matching approach \cite{canizares2024hamiltonian} that introduces an additional term in the loss function, defined as
\begin{equation}\label{eq:L2}
\mathcal{L}_2(\bar{\psi})=\frac{1}{M}\sum_{\one{m}=1}^M \left(\mathcal{H}(\bar{\psi})(t_{\one{m}},x^{\one{m}}_0) - H(x^{\one{m}}_0) \right)^2,
\end{equation}
where $x^{\one{m}}_0\in \Omega$ and $t_{\one{m}}\in [0,\Delta t]$ for every $\one{m}=1,...,\one{M}$. 

In the experimental analysis, we will compare \texttt{SympFlow} to an unconstrained MLP. To have an unbiased comparison, the MLP is trained with the residual loss function to which we add the alternative energy regularization term
\begin{equation}\label{eq:mlpReg}
\widetilde{\mathcal{L}}_2(\bar{\psi}) = \frac{1}{M}\sum_{\one{m}=1}^M\left(H(\bar{\psi}(t_{\one{m}},x^{\one{m}}_0)) - H(x^{\one{m}}_0)\right)^2,
\end{equation}
where $t_{\one{m}}\in [0,\Delta t]$, $x^{\one{m}}_0\in\Omega$, and $\bar{\psi}:[0,\Delta t]\times \Omega\to\Omega$ is the MLP. \two{In \Cref{eq:L2} and \Cref{eq:mlpReg}, $M\in\mathbb{N}$ denotes the number of collocation points we use to assemble the regularization term. Furthermore, $N$ in \Cref{eq:L1}, and $M$ in \Cref{eq:L2} and \Cref{eq:mlpReg}, are independent since one could use different points for the two components of the loss.} This regularization term informs this unconstrained network that the Hamiltonian energy should be conserved, \one{similarly to what is done in \cite{mattheakis2022hamiltonian}}. We remark that \Cref{eq:mlpReg} differs from \Cref{eq:L2} in that we do not have a modified Hamiltonian $\mathcal{H}(\bar{\psi})$ in the MLP case, and hence we only promote the conservation of the actual Hamiltonian energy $H$. 

The loss function is then given by
\begin{equation}\label{eq:fullLoss}
\mathcal{L}(\bar{\psi})=\mathcal{L}_1(\bar{\psi})+\one{\gamma}\mathcal{L}_2(\bar{\psi}),\,\,\one{\gamma\in\{0,1\}},
\end{equation}
with $\mathcal{L}_1$ and $\mathcal{L}_2$ defined as in \Cref{eq:L1} and \Cref{eq:L2}. For the MLP, we replace $\mathcal{L}_2$ with $\widetilde{\mathcal{L}}_2$. \one{In the experiments, we compare the case without regularization, $\gamma=0$, with the one with regularization, $\gamma=1$. Such a regularization significantly improves the results for the MLP. This does not occur for \texttt{SympFlow}, which performs well in the unregularized case as well{, which, sometimes, also outperforms the regularized network.}} Notice that $\mathcal{H}(\bar{\psi})$ is the exact Hamiltonian behind the \texttt{SympFlow} $\bar{\psi}$, and the loss $\mathcal{L}_2(\bar{\psi})$ provides a comparison between the exact flow $\phi_{H,t}$ and the \texttt{SympFlow} via zeroth-order information, i.e., based on the exact energies behind those maps. Importantly, the $\mathcal{L}_2$ term allows us to perform a \emph{backward error analysis}\textemdash while \texttt{SympFlow} does not generally solve the ODE under consideration, it solves a time-dependent Hamiltonian ODE, for a Hamiltonian which is driven towards $H$ during the training process. The fact that $\mathcal{H}(\bar{\psi})$ is time-dependent implies that the flow $\bar{\psi}$ does not conserve $\mathcal{H}$, as it is expected for non-autonomous Hamiltonian systems. However, if \Cref{eq:L2} is small enough, the Hamiltonian $\mathcal{H}(\bar{\psi})$ will not be strongly dependent on time, hence leading to an almost conservation of $H$ by the \texttt{SympFlow}. We further expand on this aspect below.

\section{Theoretical analysis of the SympFlow}\label{sec:theory}

This section provides a theoretical a-posteriori analysis of a \texttt{SympFlow} approximating the target map $\phi_{H,t}$. This result relies on the assumption that the map $\phi_{H,t}$ can be accurately approximated for $t\in [0,\Delta t]$ by a \texttt{SympFlow}. This is  possible since \texttt{SympFlows} are universal approximators in the space of Hamiltonian flows. We state this result in \Cref{thm:universal}, and prove it in \Cref{app:universal}.
\begin{thm}[Universal approximation theorem for \texttt{SympFlow}]\label{thm:universal}
Let $H:\R\times\R^{2d}\to\R$ be a \two{twice-}continuously differentiable function, and fix $\Omega\subset\R^{2d}$ compact. \one{Suppose that $\Omega$ is forward invariant for $\phi_{H,t}$.} For any $\varepsilon>0$, there is a \texttt{SympFlow} $\bar{\psi}_t$ such that
\[
\sup_{\substack{t\in [0,\Delta t] \\ x\in\Omega}}\left\|\bar{\psi}_t(x) - \phi_{H,t}(x)\right\|_{\infty}<\varepsilon.
\]
\end{thm}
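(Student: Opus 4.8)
The plan is to reduce the universal approximation claim for \texttt{SympFlow} to the known universal approximation property of SympNets \cite{jin2020sympnets,tapley2024symplectic}, together with a standard flow-approximation argument that handles the time dependence. I would proceed in three stages.

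\emph{Stage 1: discretize time and reduce to a product of near-identity symplectic maps.} Fix $\varepsilon>0$ and a positive integer $K$, and set $h=\Delta t/K$, $t_k = kh$. Since $H$ is $C^1$ on the compact set $[0,\Delta t]\times\Omega$ (and using the forward-invariance / local-existence setup from \Cref{sec:training}), the exact flow $\phi_{H,t}$ is Lipschitz in $x$ uniformly in $t\in[0,\Delta t]$, and for each $k$ the one-step map $\Phi_k := \phi_{H,t_k+h}\circ \phi_{H,t_k}^{-1}$ (the exact flow of the non-autonomous system over $[t_k,t_k+h]$) is a symplectic diffeomorphism that is $O(h)$-close to the identity on $\Omega$, with $\phi_{H,t_K} = \Phi_{K-1}\circ\cdots\circ\Phi_0$. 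The goal becomes: approximate each $\Phi_k$ by the time-$t_k\!\to\! t_k+h$ increment of a \texttt{SympFlow} well enough that the composition error, propagated through $K$ Lipschitz maps, stays below $\varepsilon$; the standard telescoping estimate $\|\,g_{K-1}\cdots g_0 - \Phi_{K-1}\cdots\Phi_0\,\| \le \sum_k (\prod_{j>k}\mathrm{Lip}(g_j))\,\|g_k-\Phi_k\|$ controls this, so it suffices to make each one-step error $o(1/K)$ uniformly on (a slightly enlarged, still compact) $\Omega$. A minor bookkeeping point: a \texttt{SympFlow} as defined in \Cref{eq:sympflow} is a \emph{single} map $\bar\psi_t$ with $\bar\psi_0=\mathrm{id}$, not a composition of independent blocks per time step; but because the $V_{\pos}^i,V_{\mom}^i$ may depend on $t$ through an MLP, one can realize a map whose restriction to each subinterval $[t_k,t_k+h]$ implements the desired increment — alternatively, and more cleanly, use the long-time extension \eqref{eq:longTimeExtension} to reduce to approximating a single one-step map, then re-expand. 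I would pick whichever of these two framings keeps the write-up shortest.

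\emph{Stage 2: approximate each near-identity symplectic map by a SympNet.} For a fixed $k$, $\Phi_k$ is a symplectic map on the compact set $\Omega$; by the universal approximation theorem for SympNets \cite{jin2020sympnets} (gradient-module version, see also \cite{tapley2024symplectic}), there is a SympNet — i.e. a composition of maps of exactly the form $(q,p)\mapsto(q,\,p-\nabla_q V(q))$ and $(q,p)\mapsto(q+\nabla_p V(p),\,p)$ with $V$ an MLP — approximating $\Phi_k$ uniformly on $\Omega$ to any desired tolerance. The key observation is that each such SympNet gradient module is \emph{exactly} the time-$1$ flow $\phi_{\pos,1}$ (resp. $\phi_{\mom,1}$) of \Cref{eq:pos_flow_map}–\eqref{eq:mom_flow_map} for the time-dependent potential $V_{\pos}(t,q) = t\,V(q)$ (resp. $V_{\mom}(t,p)=t\,V(p)$), since then $\nabla_q V_{\pos}(1,q)-\nabla_q V_{\pos}(0,q) = \nabla_q V(q)$. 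Hence any SympNet is the time-$1$ map of a \texttt{SympFlow} layer stack with this particular affine-in-$t$ parameterization, and conversely we can interpolate: replacing $t\,V(\cdot)$ by a schedule that ramps from $0$ on $[0,t_k]$ to $V(\cdot)$ on $[t_k,t_k+h]$ (smoothly, using an MLP-in-$t$ factor) gives a \texttt{SympFlow} whose increment over $[t_k,t_k+h]$ equals the chosen SympNet module while it is the identity on the earlier subintervals. Composing over $k=0,\dots,K-1$ yields a single \texttt{SympFlow} $\bar\psi_t$ whose value at the grid points $t_k$ matches the SympNet approximations of the partial flows.

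\emph{Stage 3: fill in the times between grid points and assemble the estimate.} At non-grid times $t\in(t_k,t_{k+1})$ we only have the \texttt{SympFlow} realize \emph{some} interpolating symplectic path between two controlled endpoints; since each building block $\phi_{\pos,t},\phi_{\mom,t}$ is continuous in $t$ and, with the ramp schedule chosen above, stays within $O(h)$ of the identity composed with the partial product, the error at intermediate $t$ is bounded by the grid error plus an $O(h)$ term — take $K$ large enough that this is $<\varepsilon$. Combining Stages 1–3 and choosing, in order, $K$ (to kill the $O(h)$ discretization and interpolation terms), then the per-step SympNet tolerances (to kill the propagated composition error via the telescoping bound), gives $\sup_{t\in[0,\Delta t],\,x\in\Omega}\|\bar\psi_t(x)-\phi_{H,t}(x)\|_\infty<\varepsilon$, which is the claim (passing from $\|\cdot\|_2$ in the SympNet statement to $\|\cdot\|_\infty$ costs only a dimensional constant).

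\emph{Main obstacle.} The substantive point is \emph{not} the SympNet approximation itself (that is quoted) but the interface between the per-step/discrete picture and the single continuous-in-$t$ map $\bar\psi_t$ with $\bar\psi_0=\mathrm{id}$ required by \Cref{eq:sympflow}: one must exhibit an explicit $t$-dependent parameterization of the $V_{\pos}^i,V_{\mom}^i$ (a ``ramp schedule'' times an MLP in the spatial variables) that (i) makes $\bar\psi_{t_k}$ equal to the desired partial-flow approximant for each $k$, (ii) keeps the intermediate-time values within $O(h)$ so Stage 3 closes, and (iii) still falls within the admissible architecture (MLPs in $(t,q)$ or $(t,p)$). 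I expect the bulk of the real proof in \Cref{app:universal} to be this construction plus the routine Lipschitz/telescoping bookkeeping; the chaotic or long-time behavior of $H$ is irrelevant here because the statement is confined to the finite window $[0,\Delta t]$ on the compact set $\Omega$.
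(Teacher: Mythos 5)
Your route is genuinely different from the paper's, and it contains a real gap at precisely the point you flag as the ``main obstacle'' but do not resolve. For context: the paper never invokes the SympNet universal approximation theorem. It first replaces $H$ by a polynomial Hamiltonian $\widetilde H$ (a simultaneous $C^1$ Weierstrass approximation plus Gronwall, \Cref{lemma:weierstrass}), then uses Turaev's lemma (\Cref{lemma:polynomialTuraev}) to approximate each short-time piece $\widetilde\phi_{nt/N}$ by the time-$1$ flow of a \emph{separable} Hamiltonian whose non-quadratic potential carries an explicit $t/N$ prefactor, replaces the polynomial potentials by shallow networks in the $C^1$ topology (\Cref{corollary:2ndOrderMLP}), and finally applies a Lie--Trotter splitting \eqref{eq:lietrotter} so that each factor is the \emph{exact} flow of a time-dependent Hamiltonian depending only on $q$ or only on $p$ --- literally a \texttt{SympFlow} layer of the form \eqref{eq:pos_flow_map}--\eqref{eq:mom_flow_map}. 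The detour through separable Hamiltonians with provably small potentials is not cosmetic: it is what makes the statement uniform in $t$ come out for free, because the whole approximant is continuous in $t$ by construction and every factor is near-identity.

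The gap in your version is Stage 3. You need the partial ramp of the $k$-th block to stay within $O(h)$ of $\bar\psi_{t_k}$ for $t\in(t_k,t_{k+1})$. That would follow if each individual gradient module of the SympNet approximating $\Phi_k$ were itself $O(h)$-close to the identity, but the universality theorem you quote gives no control on the size of the individual modules: a near-identity symplectic map can perfectly well be approximated by a composition of large shears that cancel only after all of them have been applied. During your ramp these shears appear at intermediate amplitudes at which they do not cancel, so $\bar\psi_t$ at non-grid times can be far from both endpoints and the uniform-in-$t$ bound fails. Closing this hole essentially requires proving that near-identity Hamiltonian transition maps admit gradient-module decompositions with uniformly small factors --- which is exactly what \Cref{lemma:polynomialTuraev} together with the Lie--Trotter step supplies in the paper. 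Two secondary issues: your ramp $\rho(t)V(q)$ is a product of a function of $t$ and an MLP in $q$, not itself an admissible $V_{\pos}(t,q)$, so the ``identity on earlier subintervals'' property only holds after a further approximation whose error you must propagate; and your telescoping bound needs the Lipschitz constants $\prod_j\mathrm{Lip}(g_j)$ bounded, which requires $C^1$ rather than $C^0$ closeness of the approximants. Both are fixable bookkeeping, but the intermediate-time control is a missing idea, not a missing detail.
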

\begin{proof}[Outline of the proof of \Cref{thm:universal}]
The proof is based on the following steps:
\begin{enumerate}[align=left,leftmargin=0.5cm]
\item Approximate the flow $\phi_{H,t}$ with the flow $\phi_{\widetilde{H},t}$ of a polynomial Hamiltonian system with Hamiltonian $\widetilde{H}:\R\times\Omega\to \R$.
\item Split the exact polynomial flow $\phi_{\widetilde{H},t}$ into sufficiently many $N$ substeps of size $t/N$.
\item Approximate to $\mathcal{O}(1/N^2)$ each of the flows defining the $N$ substeps with flows of separable Hamiltonian systems.
\item Approximate the flows of separable Hamiltonian systems of the previous point with \texttt{SympFlows}.
\item Combine the approximations and use the fact that the composition of \texttt{SympFlows} is again a \texttt{SympFlow}.
\end{enumerate}
The details can be found in \Cref{app:universal}.
\end{proof}

Based on the time extension provided in \Cref{eq:longTimeExtension} and \Cref{remark:invariance}, one can obtain a function which accurately approximates a target flow map $\phi_{H,t}:\Omega\to\Omega$ for every time $t\geq 0$. This time extension is again the composition of Hamiltonian flows, hence possessing an underlying time-dependent Hamiltonian. For a fixed time, $t\geq 0$, the function in \Cref{eq:longTimeExtension} satisfies
\begin{align}\label{eq:longTimeHamiltonian}
&\frac{\mathrm{d}}{\mathrm{d} t}\psi(t,x_0) = \frac{\mathrm{d}}{\mathrm{d} t}\left(\bar{\psi}_{t-\Delta t\lfloor t/\Delta t\rfloor}\circ \left(\bar{\psi}_{\Delta t}\right)^{\lfloor t/\Delta t\rfloor}\right)(x_0)\\
&= \mathbf{J}\nabla \left(\mathcal{H}(\bar{\psi})\right)\left(t-\Delta t\lfloor t/\Delta t\rfloor, \psi(t,x_0)\right)\nonumber,
\end{align}
almost everywhere. We provide details about the derivation of \Cref{eq:longTimeHamiltonian} in \Cref{app:longTimeHam}. From \Cref{eq:longTimeHamiltonian}, we see that the long-time extension of a \texttt{SympFlow} is again the solution of a non-autonomous Hamiltonian system with time-dependent Hamiltonian function $\mathcal{H}(\psi):[0,+\infty)\times\R^{2d}\to\R^{2d}$ defined as 
\begin{equation}\label{eq:hamExtension}
\mathcal{H}(\psi)(t,x)=\mathcal{H}(\bar{\psi})(t-\Delta t\lfloor t/\Delta t\rfloor, x). 
\end{equation}
Therefore, the Hamiltonian in \Cref{eq:hamExtension} is piecewise continuous, and this is because $\psi$ is not differentiable at the time instants $t_k=k\Delta t$, $k\in\mathbb{N}$.
 
\begin{thm}[A-posteriori error estimate]\label{thm:aposteriori}
\two{Let $H:\mathbb{R}^{2d}\to\mathbb{R}$ be a twice-continuously differentiable function,} $\Delta t>0$, $\Omega\subset\mathbb{R}^{2d}$ compact, and $\bar{\psi}:[0,\Delta t]\times \Omega\to\Omega$ be a \texttt{SympFlow}. \one{Suppose that $\Omega$ is forward invariant for both $\phi_{H,t}$ and $\bar{\psi}_t$, i.e., for every $x_0\in\Omega$ and $t\in [0,\Delta t]$ one has $\phi_{H,t}(x_0),\bar{\psi}_t(x_0)\in\Omega$.} Assume that for every $x\in\Omega$ and $t\in [0,\Delta t]$ 
\begin{equation}\label{eq:hamCondition}
\left|\mathcal{H}(\bar{\psi})(t,x)-H(x)\right|\leq \varepsilon_1,
\end{equation}
and also
\begin{equation}\label{eq:residualCondition}
\left\|\frac{\mathrm{d}}{\mathrm{d} t}\bar{\psi}_t(x)-\mathbf{J}\nabla H(\bar{\psi}_t(x))\right\|_2\leq \varepsilon_2
\end{equation}
for a pair of values $\varepsilon_1,\varepsilon_2>0$. Then there exist $c_1,c_2>0$ such that for every $t\geq 0$ and $x\in\Omega$
\begin{align}
\left|\mathcal{H}({\psi})(0,x)-\mathcal{H}(\psi)(t,\psi_t(x))\right |&\leq \two{c_1(\varepsilon_1+\varepsilon_2t)},\label{eq:boundModifiedHam}\\
\left |H(x)-H(\psi_t(x))\right |&\leq \two{c_2\varepsilon_2t},\label{eq:boundTrueHam}
\end{align}
where $\psi$ is defined as in \Cref{eq:longTimeExtension}, and $\mathcal{H}(\psi)$ as in \Cref{eq:hamExtension}.
\end{thm}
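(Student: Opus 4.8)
The plan is to derive both estimates from two observations: first, that $\psi$ is an exact (piecewise) solution of the non-autonomous Hamiltonian system with Hamiltonian $\mathcal{H}(\psi)$ as recorded in \Cref{eq:longTimeHamiltonian}–\Cref{eq:hamExtension}; and second, that the hypotheses \Cref{eq:hamCondition} and \Cref{eq:residualCondition} control how far this modified Hamiltonian and its flow are from the target $H$ and $\phi_{H,t}$. Throughout I will freely use that $\Omega$ is compact and forward invariant (\Cref{remark:invariance}), so that $H$, $\nabla H$, $\mathcal{H}(\bar\psi)$ and its gradient are all bounded on the relevant sets, which is where the constants $c_1,c_2$ come from.

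For the bound \Cref{eq:boundModifiedHam} on the modified Hamiltonian, I would compute the time derivative of $t\mapsto \mathcal{H}(\psi)(t,\psi_t(x))$ along the flow. On each open interval $(k\Delta t,(k+1)\Delta t)$ the chain rule gives
\[
\frac{\mathrm{d}}{\mathrm{d} t}\mathcal{H}(\psi)(t,\psi_t(x)) = \partial_t \mathcal{H}(\psi)(t,\psi_t(x)) + \nabla \mathcal{H}(\psi)\cdot \mathbf{J}\nabla \mathcal{H}(\psi),
\]
and the second term vanishes by antisymmetry of $\mathbf J$, so only the explicit time-derivative survives. Here is the key point: $\mathcal{H}(\bar\psi)$ is only defined up to an additive function of $t$, and we may choose the representative so that $\partial_t\mathcal{H}(\bar\psi)$ is directly comparable to $0$; more robustly, one writes $\partial_t\mathcal{H}(\bar\psi)(s,\cdot) = \partial_t\big(\mathcal{H}(\bar\psi)(s,\cdot)-H(\cdot)\big)$ since $H$ is $t$-independent, and \Cref{eq:hamCondition} says the bracketed quantity is uniformly bounded by $\varepsilon_1$ in sup-norm over $[0,\Delta t]\times\Omega$. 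To turn a sup-norm bound on a function into a bound on its time derivative I would argue on the compact set and use that $\mathcal{H}(\bar\psi)-H$ is $C^1$: either invoke a Landau–Kolmogorov-type inequality, or — cleaner for this architecture — note that $\mathcal{H}(\bar\psi)$ is built from the MLPs $V_{\pos}^i,V_{\mom}^i$ and their $t$-derivatives explicitly via \Cref{eq:iterativeHam}, so $\partial_t\mathcal{H}(\bar\psi)$ is itself a fixed continuous function on the compact set and hence bounded by some $c_1\varepsilon_1$ (absorbing architecture-dependent constants; in fact the natural statement is that $|\partial_t\mathcal H(\bar\psi)|\le c_1\varepsilon_1$ should be taken as the operative hypothesis, equivalent to \Cref{eq:hamCondition} up to constants once smoothness on $\Omega$ is fixed). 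Then, since $\psi_0(x)=x$, integrating from $0$ to $t$ and summing the (telescoping, because $\mathcal H(\psi)$ is built to be continuous across $t_k=k\Delta t$ — $\psi$ jumps only in differentiability, not value) contributions over the at most $\lceil t/\Delta t\rceil$ subintervals yields $|\mathcal{H}(\psi)(0,x)-\mathcal{H}(\psi)(t,\psi_t(x))|\le c_1\varepsilon_1 t$.

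For \Cref{eq:boundTrueHam} I would split the difference through the modified Hamiltonian:
\[
|H(x)-H(\psi_t(x))| \le |H(x)-\mathcal H(\psi)(0,x)| + |\mathcal H(\psi)(0,x)-\mathcal H(\psi)(t,\psi_t(x))| + |\mathcal H(\psi)(t,\psi_t(x))-H(\psi_t(x))|.
\]
The first and third terms are each $\le\varepsilon_1$ by \Cref{eq:hamCondition} (using $\psi_t(x)\in\Omega$ by forward invariance, and reducing the $t$-argument mod $\Delta t$ via \Cref{eq:hamExtension}), and the middle term is $\le c_1\varepsilon_1 t$ by the estimate just proved. This already gives a bound of the form $c(\varepsilon_1 + \varepsilon_1 t)$, which is actually \emph{stronger} in $\varepsilon_2$ than claimed; to get the stated $c_2(\varepsilon_1+\varepsilon_2 t)$ one presumably wants the sharper route that uses the residual condition \Cref{eq:residualCondition} directly: estimate $\frac{\mathrm d}{\mathrm dt}H(\psi_t(x)) = \nabla H(\psi_t(x))\cdot\frac{\mathrm d}{\mathrm dt}\psi_t(x)$, and compare $\frac{\mathrm d}{\mathrm dt}\psi_t(x)$ with $\mathbf J\nabla H(\psi_t(x))$ using \Cref{eq:residualCondition} — the leading-order Hamiltonian part contributes $\nabla H\cdot\mathbf J\nabla H=0$, leaving a term bounded by $\|\nabla H\|_\infty\,\varepsilon_2$ on each subinterval; integrating gives $c\varepsilon_2 t$ plus boundary corrections across the $t_k$ (where $\psi$ is continuous but not differentiable) which is where the extra $c_2\varepsilon_1$ enters. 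I would combine whichever route the authors intend, noting the first is simpler and the second uses \Cref{eq:residualCondition} as advertised.

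The main obstacle is bookkeeping at the breakpoints $t_k=k\Delta t$: $\psi$ is only piecewise-$C^1$, so every integration-by-parts or fundamental-theorem-of-calculus step must be done subinterval by subinterval and the pieces reassembled, checking that $\mathcal H(\psi)(\cdot,\psi_\cdot(x))$ is continuous across breakpoints (it is, because $\bar\psi_0=\mathrm{id}$ and $\bar\psi_{\Delta t}$ glues the pieces) so the telescoping works and no jump terms accumulate. The secondary subtlety is the "function-of-$t$ ambiguity" in $\mathcal H(\bar\psi)$: one must be careful that \Cref{eq:hamCondition}, \Cref{eq:boundModifiedHam} and \Cref{eq:hamExtension} all refer to the \emph{same} chosen representative, and that the hypothesis \Cref{eq:hamCondition} is strong enough to control $\partial_t\mathcal H(\bar\psi)$ — which it is, once we commit to $C^1$-regularity on the fixed compact set $\Omega$ and absorb the resulting constant into $c_1$.
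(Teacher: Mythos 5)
Your argument for \Cref{eq:boundModifiedHam} has a genuine gap: you reduce it to a bound $|\partial_t\mathcal{H}(\bar{\psi})|\leq c_1\varepsilon_1$, but the hypothesis \Cref{eq:hamCondition} is only a sup-norm bound on $\mathcal{H}(\bar{\psi})-H$, and a function that is uniformly $\varepsilon_1$-close to a time-independent one can still have an arbitrarily large time derivative (think $\varepsilon_1\sin(t/\varepsilon_1^2)$). Neither of your proposed fixes closes this: a Landau--Kolmogorov inequality would need control of higher derivatives, and declaring $|\partial_t\mathcal{H}(\bar{\psi})|\leq c_1\varepsilon_1$ to be ``the operative hypothesis'' changes the assumptions of the theorem rather than proving it. The paper never differentiates $\mathcal{H}(\bar{\psi})$ in time. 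It runs the two estimates in the opposite order: it first writes $\mathcal{H}(\psi)(t,\cdot)=H+\delta$ and observes that, since $\frac{\dd}{\dd t}\bar{\psi}_t(x)=\mathbf{J}\nabla\mathcal{H}(\bar{\psi})(t,\bar{\psi}_t(x))$, the residual condition \Cref{eq:residualCondition} is exactly a bound $\|\nabla\delta\|_2\leq\varepsilon_2$ along trajectories; then $\frac{\dd}{\dd t}H(\psi_t(x))=\nabla H\cdot\mathbf{J}\nabla\delta\leq C\varepsilon_2$ after the cancellation $\nabla H\cdot\mathbf{J}\nabla H=0$, which integrates to \Cref{eq:boundTrueHam} (in fact to the sharper $c_2\varepsilon_2 t$). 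Only afterwards does it obtain \Cref{eq:boundModifiedHam}, via precisely the three-term telescoping identity you wrote down\textemdash but used in the reverse direction, deriving the $\mathcal{H}(\psi)$ bound from the already-established $H$ bound plus two applications of \Cref{eq:hamCondition}. Your direction of the telescoping needs the $\mathcal{H}(\psi)$ bound as input, which is exactly the part you cannot prove.

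Two further remarks. First, your instinct that the telescoping route is ``simpler and stronger in $\varepsilon_2$'' is therefore backwards: the residual-based route is not an optional refinement but the only one available from the stated hypotheses, and \Cref{eq:residualCondition} is used essentially (there are no ``boundary corrections'' at the breakpoints $t_k=k\Delta t$, since $\psi$ is continuous there and the integration is done subinterval by subinterval with $(\bar{\psi}_{\Delta t})^k(x)\in\Omega$ by forward invariance). Second, you are right to be suspicious of the literal form of \Cref{eq:boundModifiedHam}: the paper's own proof yields $|\mathcal{H}(\psi)(t,\psi_t(x))-\mathcal{H}(\psi)(0,x)|\leq 2\varepsilon_1+c_2\varepsilon_2 t\leq c_1(\varepsilon_1+\varepsilon_2 t)$, not $c_1\varepsilon_1 t$ as displayed in the theorem, so chasing the stated $c_1\varepsilon_1 t$ is chasing a bound the paper does not actually establish.
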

\begin{remark}\label{remark:invariance}
\one{Forcing $\Omega$ to be forward invariant for $\bar{\psi}_t$ before training is extremely challenging. Still, since this is an a-posteriori analysis, it assumes that the network has already been trained to match the exact flow, and almost-forward invariance is what we observe experimentally. Such a property is a consequence of a successful training procedure coupled with the symplectic nature of \texttt{SympFlow}. Indeed, its symplecticity experimentally leads to almost energy conservation and, consequently, to almost forward invariance of $\Omega$.}
\end{remark}
We prove \Cref{thm:aposteriori} in \Cref{app:aposteriori}. We remark that \Cref{eq:boundTrueHam} can be obtained with similar techniques for any neural network satisfying \Cref{eq:residualCondition}, while \Cref{eq:boundModifiedHam} only in the case one has a \texttt{SympFlow} and some a-posteriori bound as in \Cref{eq:hamCondition}. Furthermore, we also point out that the assumptions in \Cref{eq:hamCondition} and \Cref{eq:residualCondition} correspond to saying that the loss function in \Cref{eq:fullLoss} is smaller than a specific constant, hence why this is an a-posteriori error estimate.

\section{Numerical Experiments}\label{sec:experiments}
In this section, we demonstrate the effectiveness of our proposed architecture, \texttt{SympFlow}, in two tasks: (i) solving the equations of motion of a given time-independent Hamiltonian system, and (ii) approximating the solution map of an unknown Hamiltonian system based on data. 

The proposed experimental analysis compares the results obtained with \texttt{SympFlow} and an unconstrained neural network, which we will refer to as MLP. We provide some details on the network architectures in \Cref{app:architectures}.

We consider three test problems: the simple harmonic oscillator, the damped harmonic oscillator, and the H\'enon--Heiles system. Details on these Hamiltonian systems are provided in \Cref{sec:SHO}, \Cref{sec:DHO}, and \Cref{sec:HH}. For each test problem, we first consider the unsupervised task of solving the equations of motion. We then move on to consider the supervised task of estimating the flow based on irregular trajectory samples. The training process for these experiments follows the steps presented in \Cref{sec:training}. A PyTorch implementation of our architecture and these experiments can be found at the repository \href{https://github.com/davidemurari/sympflow}{https://github.com/davidemurari/sympflow}.

For the three systems, we show the effectiveness of the proposed methodology by presenting quantitative and qualitative comparisons between the various models. For the simple harmonic oscillator, we also analyze the impact of noise, the value of $N$, and the value of $M$ on the approximation accuracy. In this case, we will model the noise affecting trajectories with random normal variables of zero mean and standard deviation $\varepsilon$, where $\varepsilon$ is used to quantify the noise intensity. Our trajectory data is synthetically generated with a Runge--Kutta $(5,4)$ integrator with tight tolerance. Consequently, the trajectories are also affected by discretization error.

As we will see across all the simulations, \texttt{SympFlow} leads to considerably improved long-time energy behavior compared to MLP. \one{Furthermore, the Hamiltonian regularization considerably benefits the MLP for the more challenging problems such as Hénon-Heiles, but does not improve the already good performance of the unregularized \texttt{SympFlow}.}

\two{We provide a thorough evaluation of \texttt{SympFlow} with additional experiments in the appendix. In \Cref{app:timing}, we compare the computational time of the training and inference phases for \texttt{SympFlow} with that of the MLP. We test networks with a varying number of layers $L\in \{4,8,16\}$, report the loss curves, and also compare the quality of the obtained approximation. Our experiments in this section of the main body of the paper focus on a comparison of \texttt{SympFlow} with an MLP, which is the strategy proposed in \cite{mattheakis2022hamiltonian} to approximate $\phi_{H,t}$. We also compare our \texttt{SympFlow} with the Symplectic Recurrent Neural Network procedure presented in \cite{Chen2020Symplectic}. These experiments are collected in \Cref{app:srnn} and showcase \texttt{SympFlow}'s robustness to noise.}

\subsection{One dimensional Simple Harmonic Oscillator}\label{sec:SHO}

The one-dimensional simple harmonic oscillator is a foundational problem in both physics and engineering. Studying its dynamics offers valuable insights into the behavior of classical mechanical systems as well as quantum systems. The simple harmonic oscillator equations of motion are analytically solvable, making it an ideal benchmark for evaluating the accuracy and performance of both classical and neural network-based solvers.

Without loss of generality, we consider a spring with a spring constant $k$, where one end is attached to a point mass $m$ and the other end is fixed in place. The Hamiltonian of this system is given by
\begin{align}\label{eq:simpleHOHam}
H(q,p) = \frac{1}{2m} p^2 + \frac{k}{2}q^2,
\end{align}
where $q,p$ are the position and momentum of the point mass. In our experiments, we fix the recovery constant to $k=1$ and the mass to $m=1$. The equations of motion of this system write
\begin{equation}\label{eq:simpleHOEq}
\dot{q} = \frac{\partial H}{\partial p } = \frac{p}{m}\;, \ \ \ 
\dot{p} = -\frac{\partial H}{\partial q } = -kq\;.
\end{equation}
We now move to the two experimental settings we consider for this system. All the experiments fix $\Delta t=1$ and $\Omega = [-1.2,1.2]^2\subset\R^2$.

\subsubsection{Unsupervised experiments}
In this section, we compare the \texttt{SympFlow} architecture with an unconstrained MLP on solving the equations of motion in \Cref{eq:simpleHOEq}. We train the \texttt{SympFlow} in three different ways, aiming to identify the impact of the two terms in the loss function, see \Cref{eq:fullLoss}. The MLP is instead either trained with or without the regularization term in \Cref{eq:mlpReg}\one{, i.e., with a residual term alone or coupled with Hamiltonian regularization}. When a network is trained with Hamiltonian regularization, it means it is trained to minimize the loss in \Cref{eq:fullLoss} \one{with $\gamma=1$}. If there is no such regularization, it means that we only train with the residual-based loss function in \Cref{eq:L1}, \one{i.e., $\gamma=0$ in \Cref{eq:fullLoss}}. \one{In \Cref{app:mixed}, } we also consider a mixed training procedure consisting of a first training phase with Hamiltonian regularization, and a later fine-tuning of the weights by training the model for a few more epochs without Hamiltonian matching. \Cref{fig:simpleHO} reports the results obtained with these four training configurations. We see that the \texttt{SympFlow} architecture consistently outperforms the unconstrained MLP in predicting the correct qualitative behavior of the solution. This is evident both looking at the reproduced orbit, which are all associated to the initial condition $(q_0,p_0)=(1,0)$, but also in the long-time energy behavior. We remark that the orbit of $(1,0)$ is wholly contained in $\Omega$, so the networks are expected to approximate it reliably. The second column in \Cref{fig:simpleHO} shows the variation of the true Hamiltonian energy in \Cref{eq:simpleHOHam} along the network-produced solutions. Even though we extrapolate over the long integration time $[0,1000]$, the \texttt{SympFlow} outperforms the MLP regardless of how it has been trained. This is a consequence of the symplectic nature of the network. In this experiment, it is unclear whether the different training procedures considered have a significant impact. We explore this further in the other two test problems. 
\begin{figure}[ht!]
    \centering
    \includegraphics[scale=0.5]{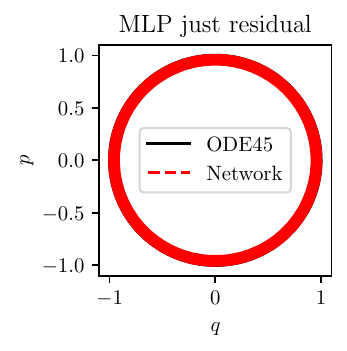}
    \includegraphics[scale=0.5]{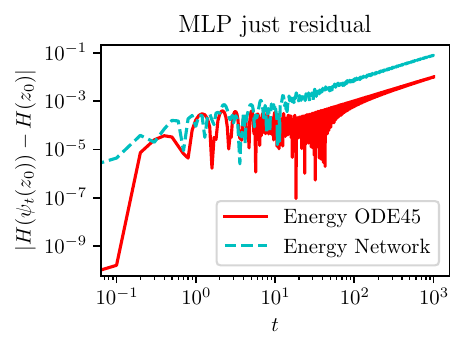}
    \includegraphics[scale=0.5]{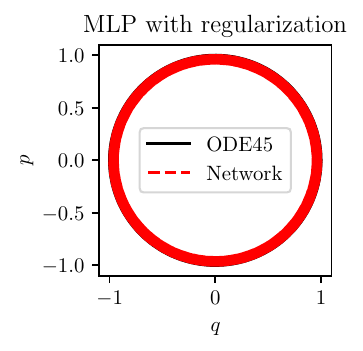}
    \includegraphics[scale=0.5]{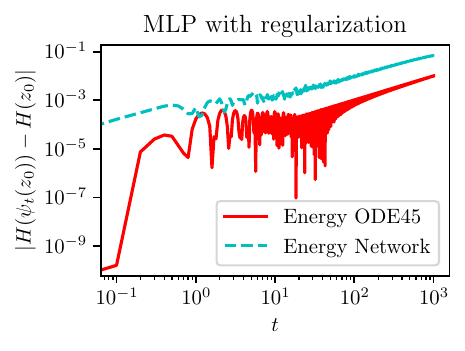}
    
    \includegraphics[scale=0.5]{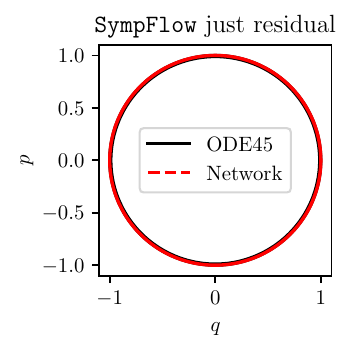}
    \includegraphics[scale=0.5]{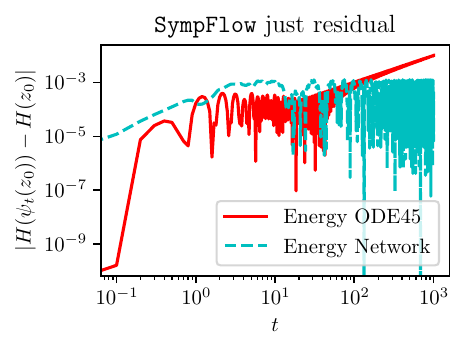}
    \includegraphics[scale=0.5]{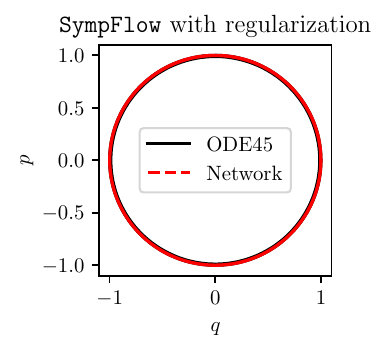}
    \includegraphics[scale=0.5]{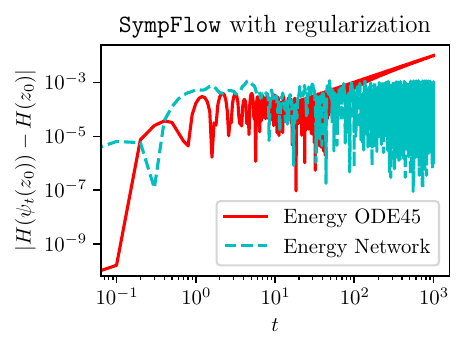}

    \caption{\textbf{Unsupervised experiment --- Simple Harmonic Oscillator: }Comparison of the results obtained with predictions up to time $T=1000$ and $\Delta t=1$.}
    \label{fig:simpleHO}
\end{figure}

\subsubsection{Supervised experiments}

We now move to the supervised experiments, where we recall that the models are trained based on the mean squared error loss function in \Cref{eq:supervisedLoss}. We first show in \Cref{fig:simpleHO-supervised} the obtained results with $N=100$ initial conditions, each sampled at possibly different $M=50$ time instants. In this case, we assume there is no noise, i.e., $\varepsilon=0$. The results in \Cref{fig:simpleHO-supervised} show a similar pattern to the unsupervised experiments, where the \texttt{SympFlow} outperforms the MLP over long time simulations. Comparing this figure with \Cref{fig:simpleHO}, we notice that the unsupervised case leads to a smaller energy variation than the supervised one for \texttt{SympFlow} predictions. This difference is expected since the unsupervised experiment, even though it does not rely on trajectory data, relies on the knowledge of the Hamiltonian system one is trying to solve, hence having access to the true Hamiltonian function while training the network.
\begin{figure}[ht!]
    \centering
    \includegraphics[scale=.5]{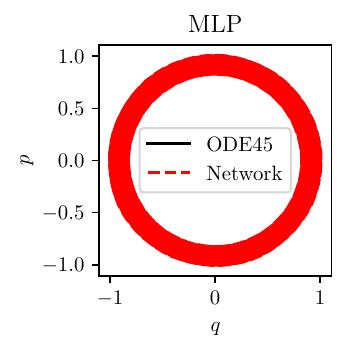}
    \includegraphics[scale=.5]{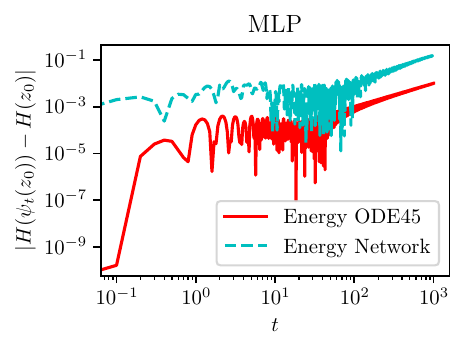}
    \includegraphics[scale=.5]{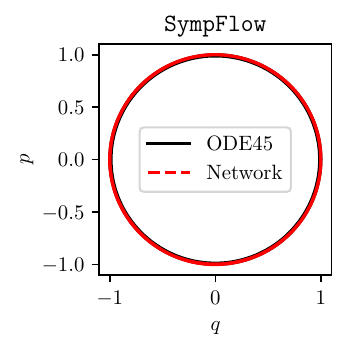}
    \includegraphics[scale=.5]{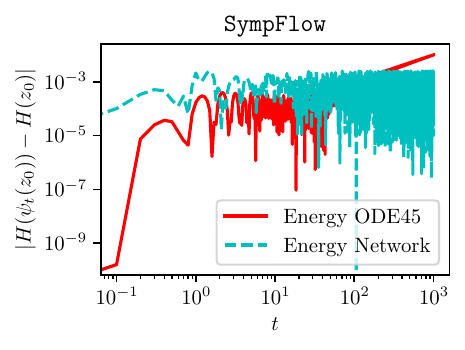}
    \caption{\textbf{Supervised experiment --- Simple Harmonic Oscillator: }Comparison of the MLP and the SympFlow trained with a dataset of parameters $N=100$, $M=50$, and $\varepsilon=0$.}
    \label{fig:simpleHO-supervised}
\end{figure}

We now evaluate the impact of the three parameters $N$, $M$, and $\varepsilon$ on the performance of \texttt{SympFlow} and MLP. We denote with $\psi:\R\times \R^{2d}\to\R^{2d}$ the extension of the network $\bar{\psi}$ over the real line, defined as in \Cref{eq:longTimeExtension}. To quantitatively evaluate the results and compare the models, we consider the average relative error
\begin{equation}\label{eq:averageSolError}
\frac{1}{I}\sum_{i=1}^{I}\frac{\left\|\psi(k\Delta t,x_i^0)-\phi_{H,k\Delta t}(x_i^0)\right\|_2}{\left\|\phi_{H,k\Delta t}(x_i^0)\right\|_2},\,\,k=1,10,100
\end{equation}
over $I=100$ randomly sampled initial conditions in $\Omega$. We remark that even though the quantity above is expressed using the exact flow map $\phi_{H,t}$, in practice, we replace it with a reference numerical solution obtained using a Runge--Kutta $(5,4)$ integrator. We also test the average relative Hamiltonian energy variation
\begin{equation}\label{eq:averageHamError}
\frac{1}{I}\sum_{i=1}^{I}\frac{\left|H(\psi(k\Delta t,x_i^0))-H(x_i^0)\right |}{\left|H(x_i^0)\right|},\,\,k=1,10,100.
\end{equation}
We collect the values of these two metrics in \Cref{fig:relative_results_combined}. Each of the three subfigures lets one of the three parameters vary and fixes the other two. We now discuss some outcomes from this experimental analysis:
\begin{itemize}
\item The energy variation for \texttt{SympFlow} is always smaller than for MLP. Furthermore, this gap widens as time progresses.
\item The relative error for the two models is comparable in many configurations, with \texttt{SympFlow} generally achieving lower errors. This is expected for two reasons:
\begin{enumerate}
    \item The simple harmonic oscillator does not have complicated dynamics, so we get accurate results with both models. We will later demonstrate how the more complex dynamics of the Hénon--Heiles system posed a significant challenge for the MLP, further highlighting the strengths of \texttt{SympFlow}.
    \item The fact that \texttt{SympFlow} is symplectic does not ensure improved quantitative (long-time) behavior, but qualitative. This is also why one would not expect a symplectic time integrator to be more accurate than a classical one. However, as for the energy behavior above, the qualitative properties of the produced solutions are considerably improved.
    \end{enumerate}
\item Even in the presence of noise, the \texttt{SympFlow} outperforms MLP. This can be seen by looking at the error at time $\Delta t$ for different values of $\varepsilon$ in \Cref{fig:relative_varying_epsilon}. The symplectic constraint on \texttt{SympFlow} prevents it from overfitting the data and learning the noise.
\item Both models benefit from additional data as $N$ and $M$ increase, with the MLP showing a stronger dependence on data in several settings. This can be interpreted as a consequence of the fact that the MLP has no encoded structure, and to infer the correct behavior, it needs a considerable amount of data.
\end{itemize}
\begin{figure}[ht!]
    \centering
    \begin{subfigure}{\textwidth}
        \centering
        \includegraphics[width=\textwidth]{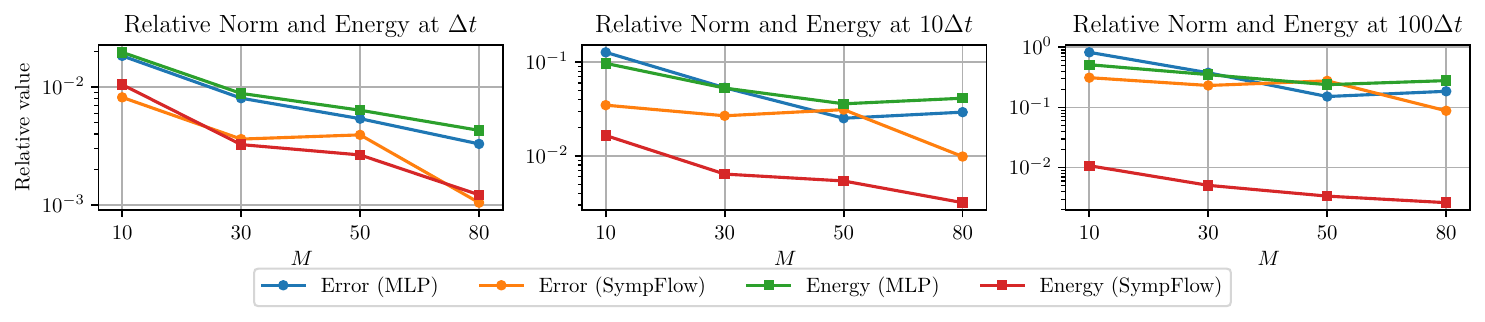}
        \caption{Fixed $N=100$ and $\varepsilon=0$.}
        \label{fig:relative_varying_m}
    \end{subfigure}

    \begin{subfigure}{\textwidth}
        \centering
        \includegraphics[width=\textwidth]{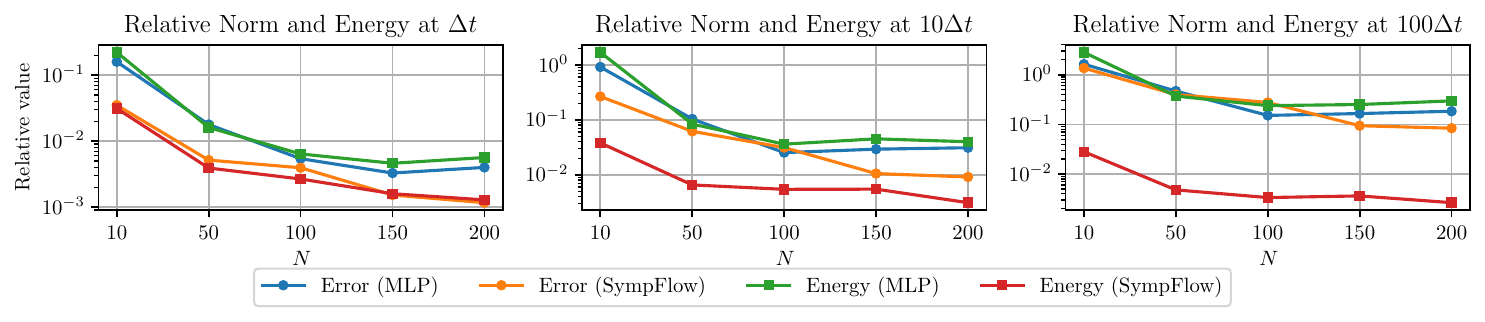}
        \caption{Fixed $M=50$ and $\varepsilon=0$.}
        \label{fig:relative_varying_n}
    \end{subfigure}

    \begin{subfigure}{\textwidth}
        \centering
        \includegraphics[width=\textwidth]{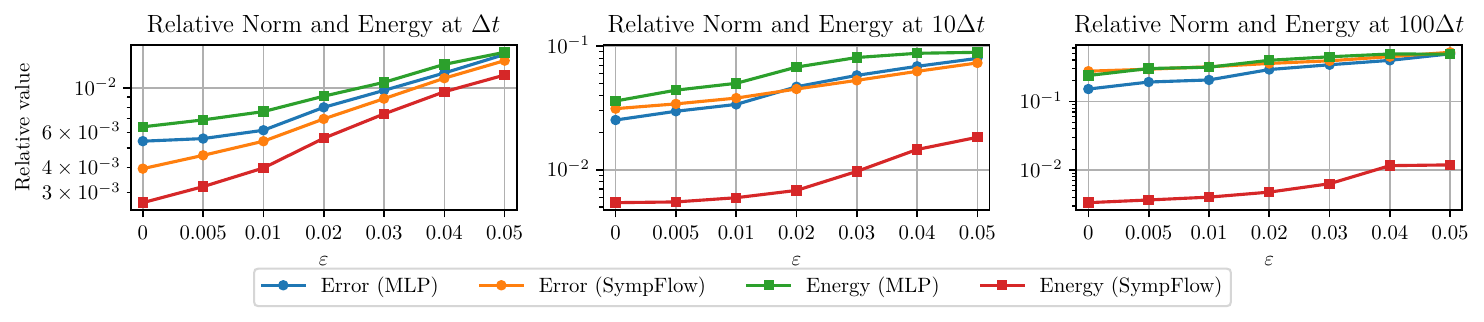}
        \caption{Fixed $N=100$ and $M=50$.}
        \label{fig:relative_varying_epsilon}
    \end{subfigure}

    \caption{\textbf{Supervised experiment --- Simple Harmonic Oscillator: }Relative $\ell^2-$norms of the error and the energy variations at times $\Delta t,10\Delta t,100\Delta t$.}
    \label{fig:relative_results_combined}
\end{figure}

\subsection{Modelling Dissipation: Damped Harmonic Oscillator}\label{sec:DHO}
Real-world dynamical systems often dissipate energy to their surroundings through mechanisms such as heat, friction, or radiative losses. These processes reduce the energy of the system until it reaches equilibrium. To accurately capture the dynamics of dissipative physical systems, it is critical to account for those effects within their equations of motion. To apply our approach to non-conservative systems, we adopt the formulation introduced in \cite{Galley_2013}, which allows to express their dynamics as conservative systems in a phase space having doubled dimension compared to the physical space. This formalism enables to accommodate dissipation while retaining key features of the Hamiltonian dynamics, such as symplecticity, and hence apply \texttt{SympFlow} in a meaningful way. For more details on this formalism, see \Cref{app:NCA}. In this formulation, the degrees of freedom are doubled, and a point in this augmented space is given by $z_{a,b}:=(q_a,q_b,\pi_a,\pi_b)\in\mathbb{R}^{4d}$, where $\pi_a$ and $\pi_b$ are the so-called non-conservative momenta. The Hamiltonian of the augmented system is given by the function
\be
A(q_a,q_b,\pi_a,\pi_b)= H(q_a,\pi_a) - H(q_b,\pi_b)- K(q_a,q_b,\pi_a,\pi_b)\;,
\ee
where $H:\R^{2d}\to\R$ is the Hamiltonian of the system, and $K:\R^{4d}\to\R$ is an interaction term modeling the action of the non-conservative forces. In our experiments, the function $K$ takes the form \cite{Tsang_2015, Galley_2014} $K(q_a,q_b,\pi_a,\pi_b) = -\lambda/2(\dot{q}_{a}+\dot{q}_{b})(q_{a}-q_{b})$, 
giving rise to a damped harmonic oscillator. Although a trajectory described by the original phase-space variables, $(q(t),p(t))$, does not follow a conservative dynamics, the trajectory described in the augmented space, is indeed conservative. Hence, in the augmented space, the map $(t,z_{a,b}(0))\mapsto z_{a,b}(t)$ is symplectic and the equations of motion can be recast in a symplectic form:  
\begin{equation}
\dot{z}_{a,b}(t) = {\bf J}\nabla A(z_{a,b}(t))\,, 
\end{equation}
with the corresponding equations of motion given by:
\begin{align}\label{heq}
	 & \dot{q} =   \frac{\partial H}{\partial p}  - \bigg[ \frac{\partial K}{\partial \dot{q}_{I}}   \bigg]_{\text{PL}}           \\
	 & \dot{p} = - \frac{\partial H}{\partial q }  + \bigg[ \frac{\partial K}{\partial \dot{q}_{I}}  \bigg]_{\text{PL}}\;,  \nonumber
\end{align}
where PL stands for physical limit and corresponds to the restriction to the linear subspace
\[
\{(q_a,q_b,\pi_a,\pi_b)\in\R^{4d}:\,\,q_a=q_b,\text{ and }\pi_a=-\pi_b\},
\]
see \Cref{app:NCA}.

\subsubsection{Unsupervised experiments}
Similarly to the harmonic oscillator example in \Cref{sec:SHO}, we solve the system's equations of motion with the \texttt{SympFlow} architecture and with an unconstrained MLP. \one{The MLP is not regularized in this situation, since the physical energy is not preserved for this system.} In this section, we experiment training 
\texttt{SympFlow} with: i) Hamiltonian regularization \Cref{eq:fullLoss} with $\gamma=1$ and b) residual-based loss function \Cref{eq:L1}. We train both networks for $50,000$ epochs, setting $\Delta t=1$. \Cref{fig:unsup_solutions} shows the damped harmonic oscillator solutions for both MLP and \texttt{SympFlow}, and for increasing values of the damping constant $\lambda$. We can see that both networks are able to capture the dissipative dynamics of the system for the different values of $\lambda$. This result is further supported by the left subfigure of \Cref{fig: sympflow_pinn_comparison}, where we also notice that adding an energy regularization term does not seem to improve \texttt{SympFlow}'s overall performance.

\begin{figure}[ht!]
\centering
\begin{subfigure}{.3\textwidth}
\centering
\includegraphics[width=.9\linewidth]{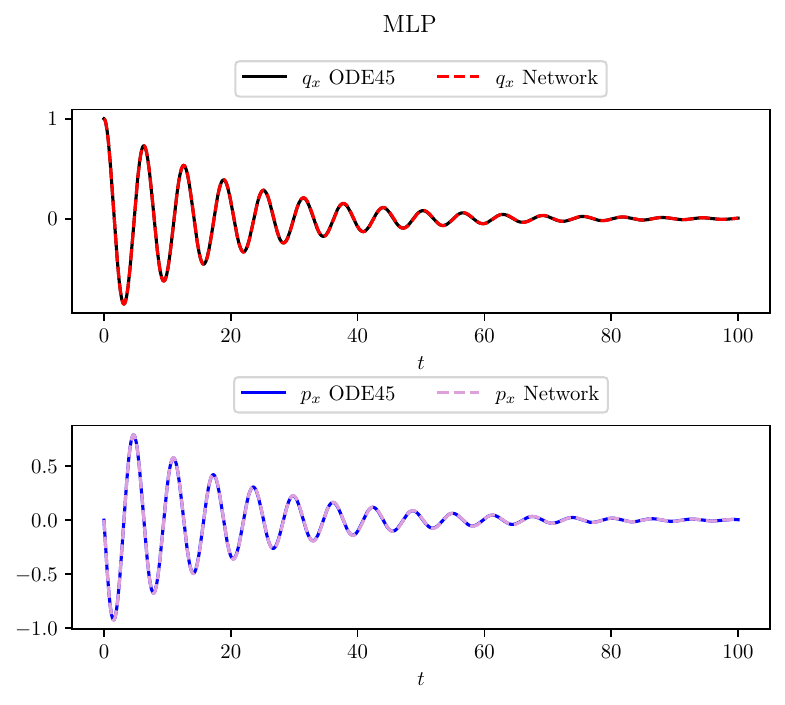} \\
\includegraphics[width=.9\linewidth]{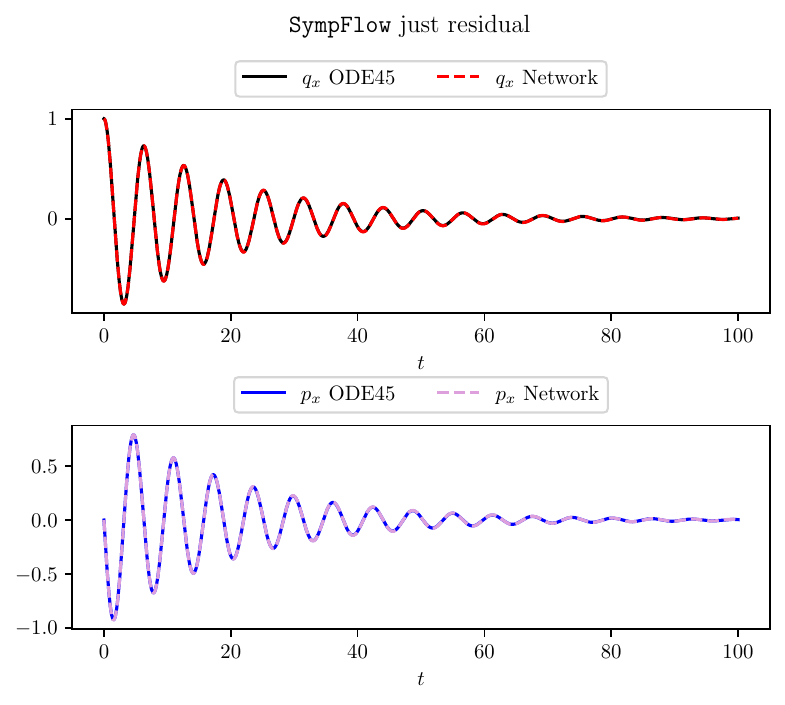}
\end{subfigure}
\begin{subfigure}{.3\textwidth}
\centering
\includegraphics[width=.9\linewidth]{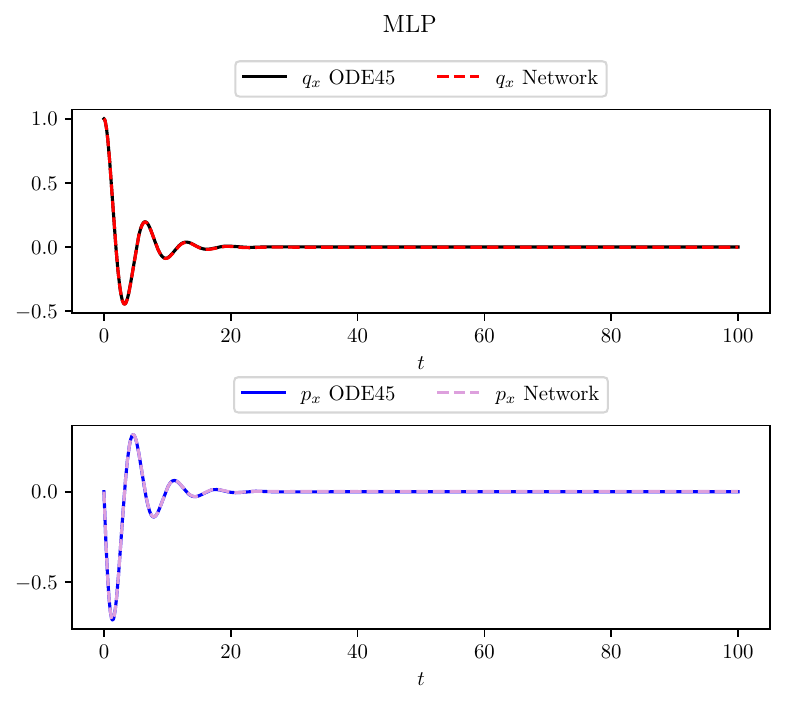}  \\
\includegraphics[width=.9\linewidth]{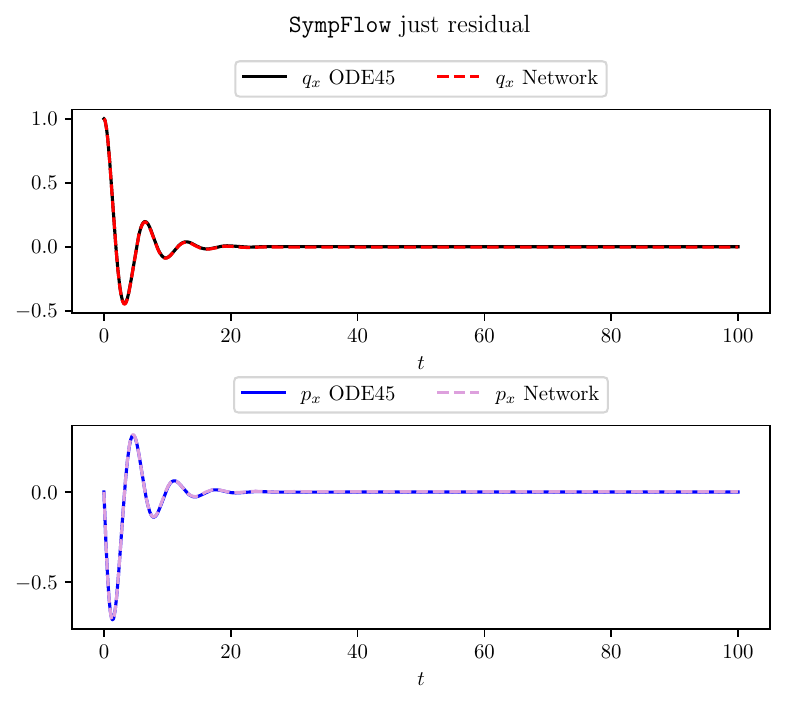}
\end{subfigure}
\begin{subfigure}{.3\textwidth}
\centering
\includegraphics[width=.9\linewidth]
{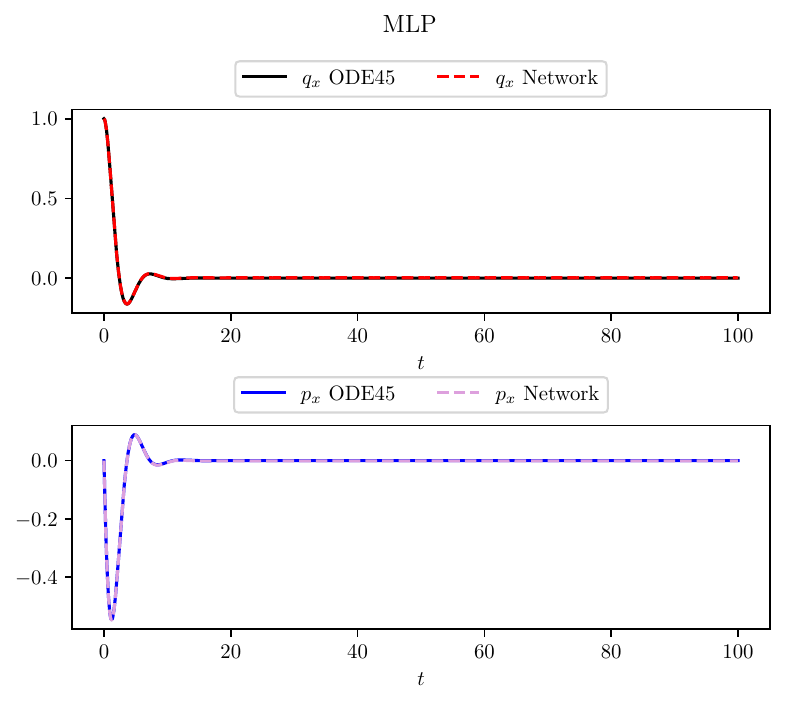}  \\
\includegraphics[width=.9\linewidth]{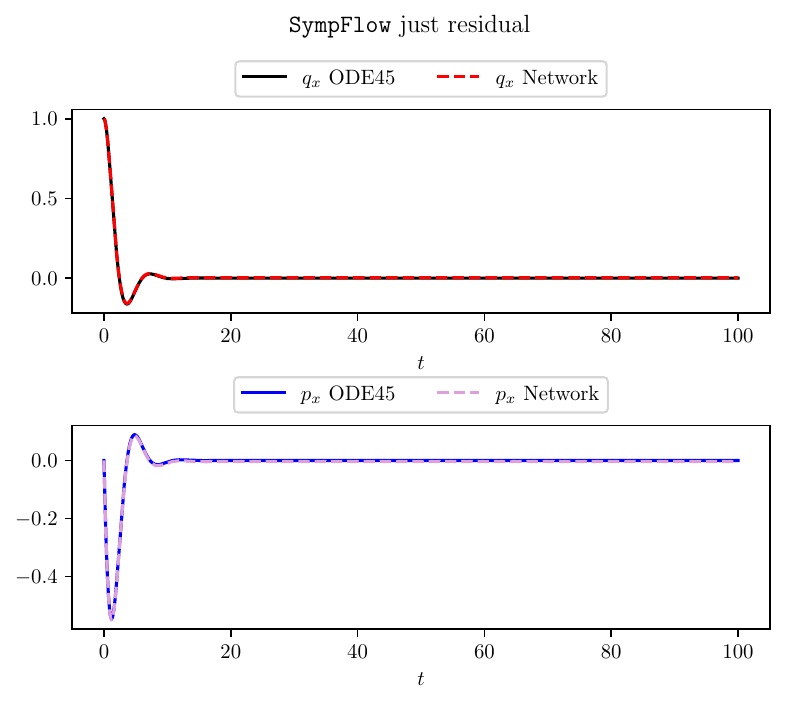}
\end{subfigure}
\caption{Damped harmonic oscillator solutions, where $q_x$ is the $1D$ position and $p_x$ is the (conservative) $1D$ momentum of the oscillator's mass for different values of the damping constant $\lambda$. {\bf Left:} $\lambda = 0.1$, {\bf center:} $\lambda = 0.5$,  and {\bf right:} $\lambda = 1$.}
\label{fig:unsup_solutions}
\end{figure}

\subsubsection{Supervised experiments}
In this section we compare the performance of the MLP network and \texttt{SympFlow}, trained with $N=50$ initial conditions and $M=10$ time samples for each of them. \Cref{fig: MLP_supervised}, \Cref{fig: sympflow_supervised}, and \Cref{fig: sympflow_pinn_comparison} show that \texttt{SympFlow} outperforms the MLP network when the dynamics of the system is more complex (case $\lambda=0.1$) and presents oscillations with decreasing amplitude, whereas for solutions with less complex behavior both methods seem to perform similarly well. 

\begin{figure}[ht!]
\centering
\begin{subfigure}{.78\textwidth}
\centering
\includegraphics[width=.45\linewidth]{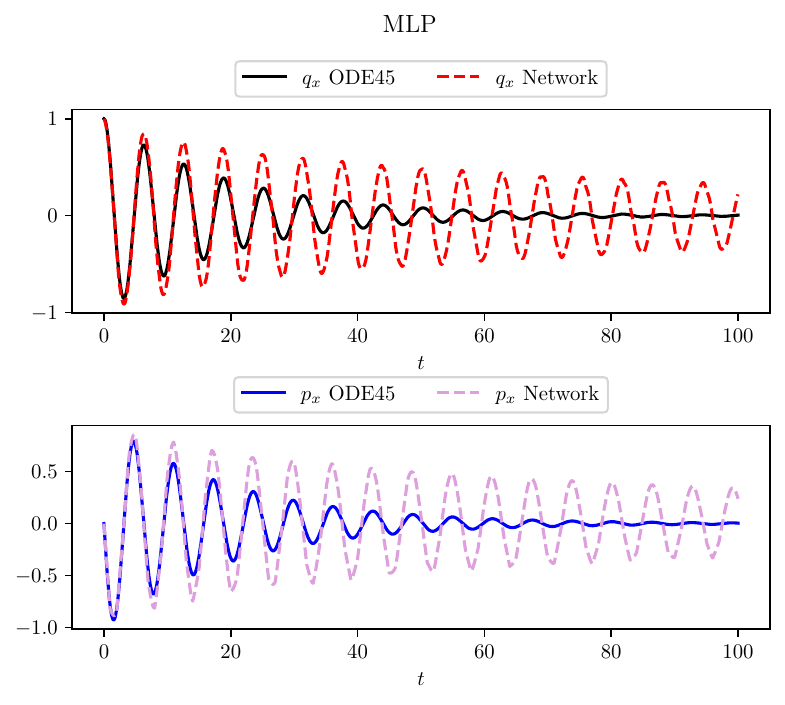}
\includegraphics[width=.45\linewidth]{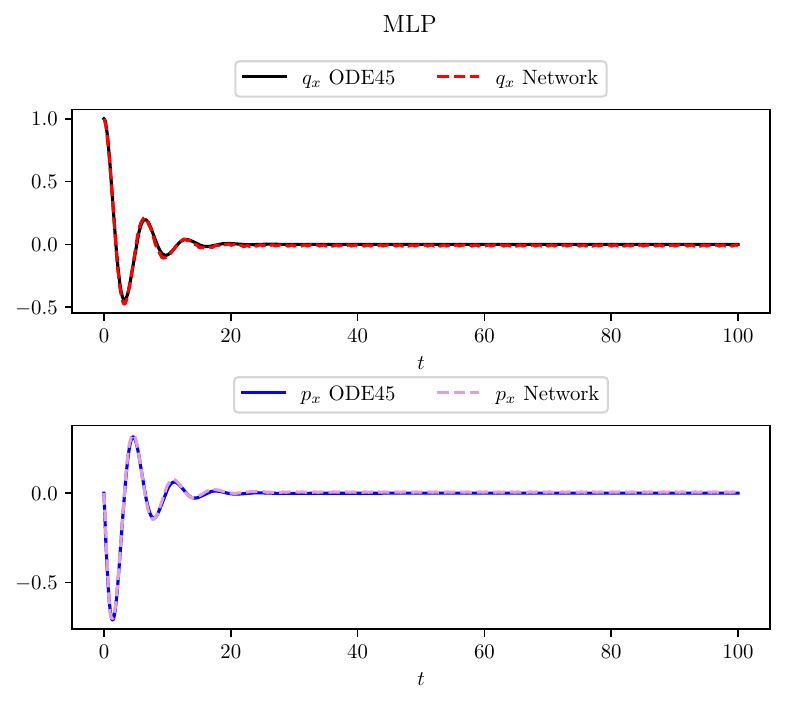}
\end{subfigure}
\begin{subfigure}{.2\linewidth}
\centering
\includegraphics[width=.7\linewidth]{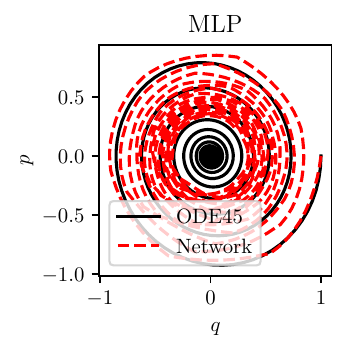}
\includegraphics[width=.7\linewidth]{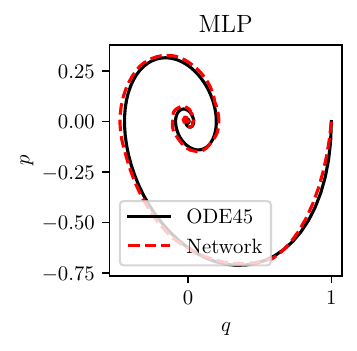}
\end{subfigure}
\caption{\textbf{MLP Supervised experiment --- Damped Harmonic Oscillator:} Trained with $N = 50$, $M=10$, and no noise, $\varepsilon=0$.}
\label{fig: MLP_supervised}
\end{figure}

\begin{figure}[ht!]
\centering
\begin{subfigure}{.78\textwidth}
\centering
\includegraphics[width=.45\linewidth]{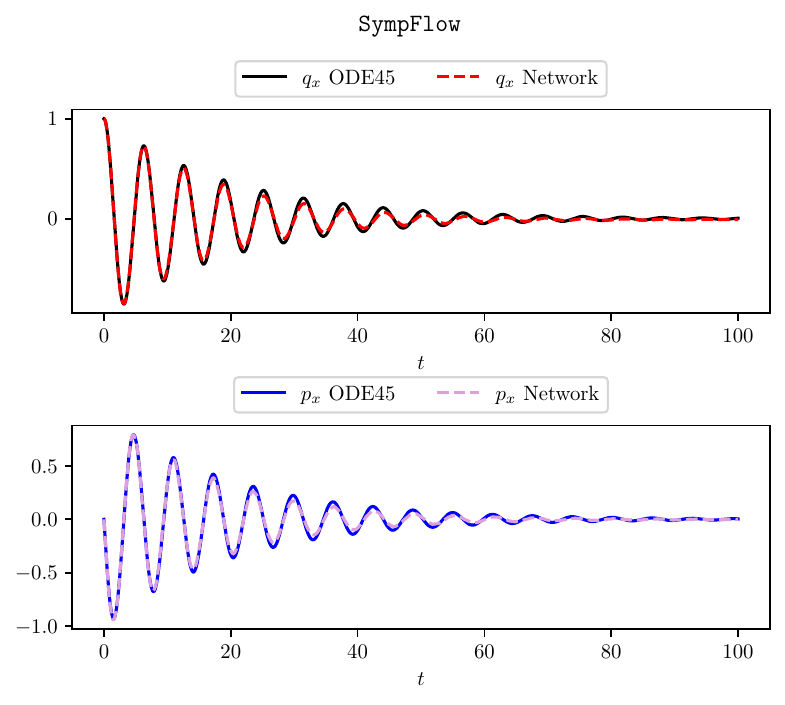}
\includegraphics[width=.45\linewidth]{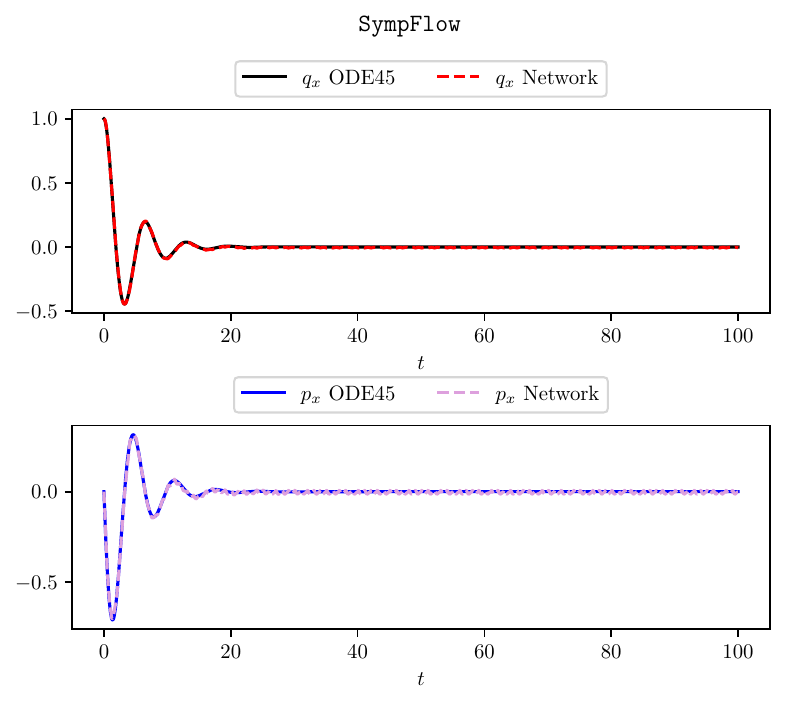}
\label{fig:tbd_2}
\end{subfigure}
\begin{subfigure}{.2\textwidth}
\centering
\includegraphics[width=.7\linewidth]{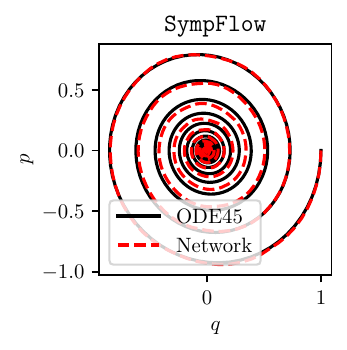}
\includegraphics[width=.7\linewidth]{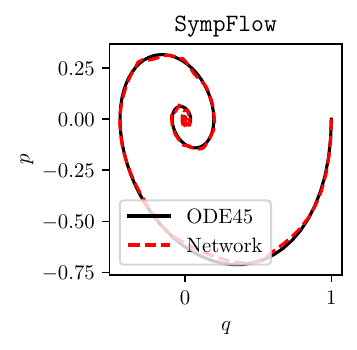}
\label{fig:tbd_1}
\end{subfigure}
\caption{\textbf{\texttt{SympFlow} supervised experiment --- Damped Harmonic Oscillator:} Trained with $N = 50$, $M=10$, and no noise, $\varepsilon=0$.}
\label{fig: sympflow_supervised}
\end{figure}

\begin{figure}[ht!]
\centering
\includegraphics[width=.7\textwidth]{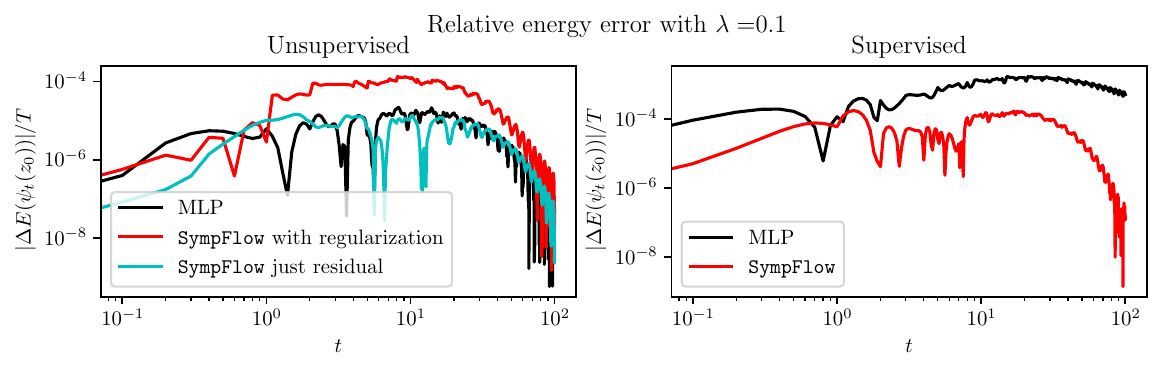}
\caption{Comparison of the damped harmonic oscillator relative energy, normalized over the integration time, for unsupervised (left) and supervised (right) experiments. In the unsupervised case, both the MLP and \texttt{SympFlow}. In the supervised case, the solutions have been computed with $N=50$ initial conditions and $M=10$ sampling points. In this setting, with fewer initial conditions, \texttt{SympFlow} outperforms MLP.}
\label{fig: sympflow_pinn_comparison}
\end{figure}

\subsection{H\'enon--Heiles system}\label{sec:HH}
The H\'enon--Heiles system is a model for studying non-linear dynamics, chaos, and the transition from regular to chaotic motion in physics. Initially developed to understand the motion of a star in a galactic potential, the system has since become a paradigmatic example in the study of chaos theory. The H\'enon--Heiles system is a Hamiltonian system based on the Hamiltonian function \cite[Section I.3]{hairerGeometricNumericalIntegration2006}
\begin{equation}\label{eq:HeilesHam}
H(q,p) = \frac{1}{2}(p_x^2+p_y^2)+\frac{1}{2}(q_x^2+q_y^2) + q_x^2q_y-\frac{q_y^3}{3},\,\,q=(q_x,q_y),p=(p_x,p_y)\in\mathbb{R}^2.
\end{equation}
The equations of motion for this system write
\begin{equation}\label{eq:HeilesEq}
\dot{q}_x = p_x,\,\,\dot{q}_y = p_y,\,\,\dot{p}_x = -q_x-2q_xq_y,\,\,\dot{p}_y = -q_y-(q_x^2-q_y^2).
\end{equation}
The equations of motion in \Cref{eq:HeilesEq} provide a considerable challenge to numerical integrators and network-based simulations since they exhibit chaotic behavior corresponding to specific energy levels. Due to the chaotic nature of the system, it is unreasonable to expect a long-term agreement of the approximation with the exact solution. However, for chaotic systems, one would like to capture the correct global behavior of the trajectories. A common strategy to test this is to consider a two-dimensional Poincar\'e section and compare the obtained results for different methods. We proceed in this way to compare \texttt{SympFlow} with an MLP. To that end, we start by considering the Poincar\'e section defined as
\begin{equation}\label{eq:poincare}
\mathcal{S}=\left\{(0,q_y,p_x,p_y):\,\,q_y,p_x,p_y\in\R\right\},
\end{equation}
and we will provide a planar representation, in the variables $(q_y,p_y)$, of the intersection of the trajectories with the section $\mathcal{S}$. For a definition of the notion of Poincar\'e sections, see \cite{strogatz2018nonlinear}.

We now move to the two experimental settings we consider for this system. All the experiments fix $\Delta t=1$ and $\Omega = [-1,1]^4\subset\R^4$. Some of these initial conditions correspond to energy levels leading to chaotic dynamics. Since the training process relies on the short integration interval $[0,1]$, this does not seem to affect the quality of the recovered models negatively.
\subsubsection{Unsupervised experiments}
As for the previous two test problems, we consider the unsupervised problem of solving the differential equations in \Cref{eq:HeilesEq} for arbitrary initial conditions in $\Omega\subset\R^4$, and for time instants in $[0,\Delta t]$. The plots consider the initial condition $(0.3, -0.3, 0.3, 0)$, leading to initial energy $H_0=0.13$, corresponding to chaotic dynamics. \one{We collect in \Cref{fig:henonHeiles} the experiments with the two training regimes for \texttt{SympFlow} and an MLP, i.e., with and without regularization by \Cref{eq:L2} and \Cref{eq:mlpReg} to promote Hamiltonian conservation}. We see that \texttt{SympFlow} leads to better long-term energy behavior, and that the mixed training regime leads to slightly improved results, \one{see \Cref{app:mixed}}. We also plot the Poincar\'e cuts associated with the section in \Cref{eq:poincare}. A reference cut obtained with Runge--Kutta $(5,4)$ can be found in \Cref{app:heiles}. The \texttt{SympFlows} lead to much more qualitatively accurate solutions, given that the obtained cuts resemble the expected behavior. \one{We also notice that the regularization term considerably improves the MLP predictions, but is not so significant for \texttt{SympFlow}, which captures the correct behavior without regularization.} {The regularized \texttt{SympFlow} performs slightly worse than the unregularized one, suggesting that it would be worth further exploring the design of a more suitable regularization term, or refining the balance between the two loss terms.} \Cref{app:heiles} also collects the plots of the solution curves obtained with this initial condition for the four trained models.
\begin{figure}[ht!]
    \centering
    \includegraphics[scale=0.5]{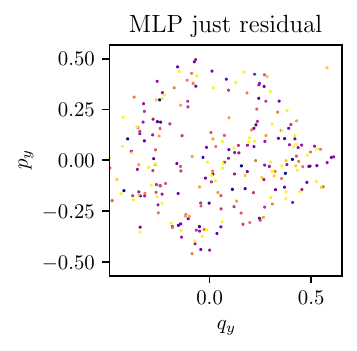}
    \includegraphics[scale=0.5]{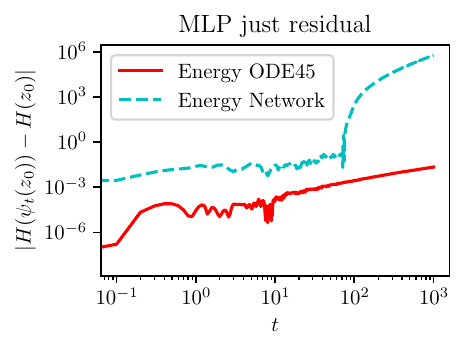}
    \includegraphics[scale=0.5]{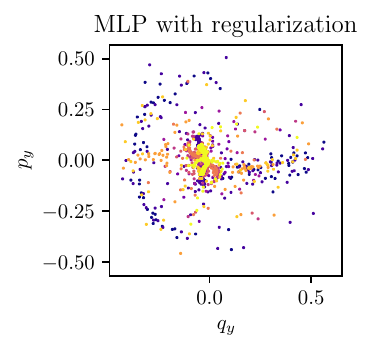}
    \includegraphics[scale=0.5]{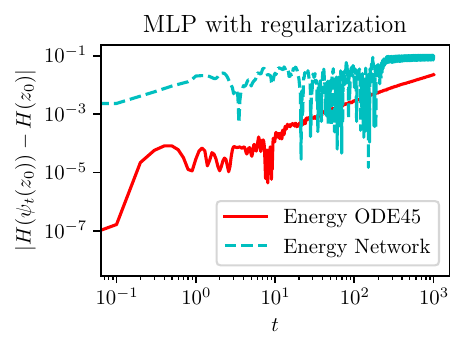}

    \includegraphics[scale=0.5]{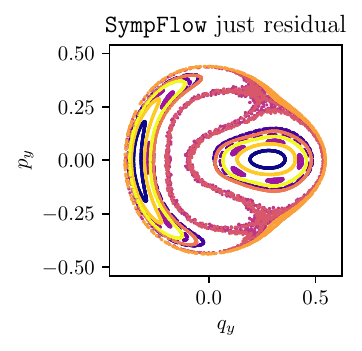}
    \includegraphics[scale=0.5]{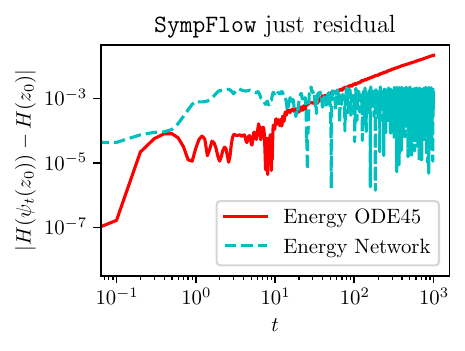}
    \includegraphics[scale=0.5]{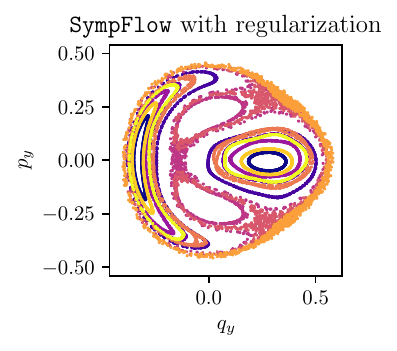}
    \includegraphics[scale=0.5]{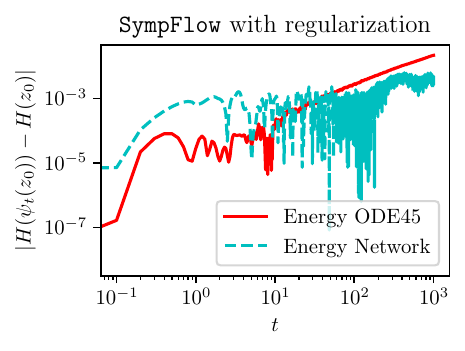}

    \caption{\textbf{Unsupervised experiment --- H\'enon--Heiles: }Comparison of the Poincar\'e sections and the energy behavior up to time $T=1000$.}
    \label{fig:henonHeiles}
\end{figure}

\subsubsection{Supervised experiments}
For the supervised experiment, we notice a pretty similar situation as in the unsupervised one, as can be seen in \Cref{fig:henonHeiles-supervised}. In this setup, it is also very hard to train the MLP model, as can be seen from the produced solution curves in \Cref{fig:heilesSupervisedSolutions} of \Cref{app:heiles}. Once more, \texttt{SympFlow} demonstrates the ability to capture the correct global qualitative behavior of the solutions as we can see in both the energy plot and the Poincar\'e cuts in \Cref{fig:henonHeiles-supervised}.
\begin{figure}[ht!]
    \centering
    \includegraphics[scale=0.5]{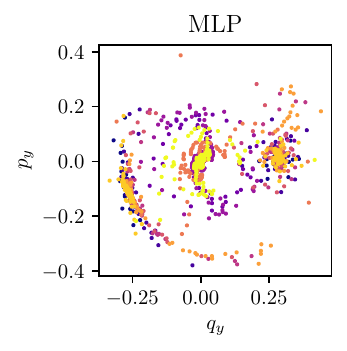}
    \includegraphics[scale=0.5]{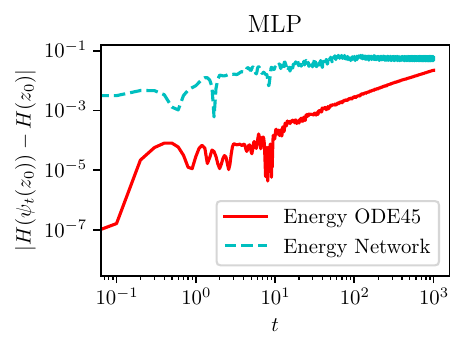}
    \includegraphics[scale=0.5]{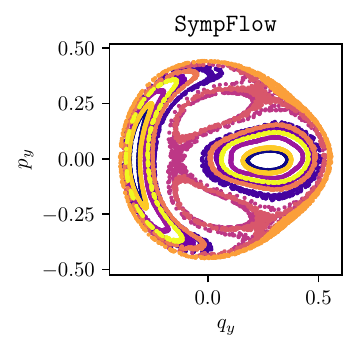}
    \includegraphics[scale=0.5]{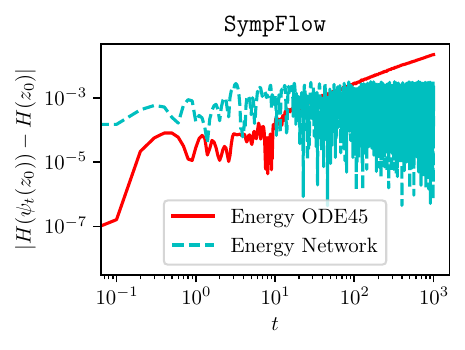}
\caption{\textbf{Supervised experiment --- H\'enon--Heiles: }Comparison of the MLP and the \texttt{SympFlow} trained with a dataset of parameters $N=100$, $M=50$, and $\varepsilon=0$. We show the energy plots for the time interval $[0,1000]$, and also the Poincar\'e sections.}
    \label{fig:henonHeiles-supervised}
\end{figure}

\section{Conclusions and further work}\label{sec:conclusions}

In this work we presented \texttt{SympFlow}, a symplectic neural flow able to provide accurate and reliable long-term solutions for generic Hamiltonian systems. We have shown that \texttt{SympFlow} is a universal approximator and it can be applied to (i) approximately solve the equations of motion of Hamiltonian systems and (ii) approximate the flow map of unknown Hamiltonian system based on trajectory data. Because of its structure, \texttt{SympFlow} admits an underlying Hamiltonian which could also be used to study an approximated physical model, a task which has previously been studied in the scientific machine learning literature \cite{bertalanLearningHamiltonianSystems2019, greydanusHamiltonian2019}. Potentially, both \texttt{SympFlow} functionalities, (i)-(ii), can be combined to model complex physical systems. We have have also demonstrated the advantages of \texttt{SympFlow} over an unconstrained MLP in several experimental tasks, including a chaotic system and a non-conservative system. The numerical experiments, show that the performance of the \texttt{SympFlow} is equal or superior to an MLP network for unsupervised tasks. For supervised tasks, however, \texttt{SympFlow} outperforms MLP, particularly when the number of initial conditions and time samples are significantly reduced, highlighting the data efficiency of \texttt{SympFlow}. 

There are several natural ways to extend this research. From the theoretical point of view, it would be interesting to further explore the energy behavior of the \texttt{SympFlow} since we experimentally observe a better error growth than the one in \Cref{thm:aposteriori}. We will also expand the applicability of \texttt{SympFlow} to higher-dimensional dynamical systems, such as spatially semi-discretized PDEs. To do so, the main effort will be in improving the computational efficiency of the \texttt{SympFlow} layers, which currently rely on automatic differentiation which can be costly for higher dimensional problems. {Additionally, since the Hamiltonian matching regularization term does not appear to benefit \texttt{SympFlow}, we will investigate how to better balance the two loss terms, as well as explore improved regularization strategies for training these constrained models.}

\section*{Acknowledgements}
The authors would like to thank Georg Maierhofer for useful discussions, Lidia Gomes Da Silva for suggesting revisions to some of the plots, and the reviewers for their thoughtful comments on the universality proof, which made it much more rigorous. DM acknowledges support from a grant from the Simons Foundation, the EPSRC programme grant in ‘The Mathematics of Deep Learning’, under the project EP/V026259/1, and the Norwegian University of Science and Technology. CBS acknowledges support from the Philip Leverhulme Prize, UK, the Royal Society Wolfson Fellowship, UK, the EPSRC advanced career fellowship, UK EP/V029428/1, EPSRC grants EP/S026045/1 and EP/T003553/1, EP/N014588/1, EP/T017961/1, the Wellcome Innovator Awards, UK 215733/Z/19/Z and 221633/Z/20/Z, the European Union Horizon 2020 research and innovation programme under the Marie Skłodowska-Curie grant agreement No.777826 NoMADS, the Cantab Capital Institute for the Mathematics of Information, UK and the Alan Turing Institute, UK. FS acknowledges support from the EPSRC advanced career fellowship EP/V029428/1. PC thanks the Alan Turing Institute for sponsoring her Daphne Jackson Fellowship. ZS acknowledges support from the Cantab Capital Institute for the Mathematics of Information and the Trinity Henry Barlow Scholarship scheme.

\appendix

\section{Derivation of the long-time Hamiltonian system}\label{app:longTimeHam}
Let us consider
\[
\psi(t,x_0):=\bar{\psi}_{t-\Delta t\lfloor t/\Delta t\rfloor}\circ \left(\bar{\psi}_{\Delta t}\right)^{\lfloor t/\Delta t\rfloor}(x_0),
\]
and evaluate its derivative in time. We evaluate the following limit
\[
\dot{\psi}(t,x_0)=\lim_{h\to 0} \frac{\psi(t+h,x_0)-\psi(t,x_0)}{h}
\]
where we notice that if $t\neq k\Delta t$, $k\in\mathbb{N}$, one has that $\lfloor (t+h)/\Delta t \rfloor = \lfloor t/\Delta t\rfloor$ as long as $h$ is small enough. Thus, setting $c=\lfloor t/\Delta t\rfloor$, we get
\begin{align*}
\dot{\psi}(t,x_0)&=\lim_{h\to 0}\frac{\bar{\psi}_{t+h-c\Delta t}((\bar{\psi}_{\Delta t})^{c}(x_0))-\bar{\psi}_{t-c\Delta t}((\bar{\psi}_{\Delta t})^{c}(x_0))}{h}\\
&=\mathbf{J}\nabla\mathcal{H}(\bar{\psi})(t-c\Delta t,\psi(t,x_0)).
\end{align*}
as written in \Cref{eq:longTimeHamiltonian}.

\section{Proof of the a-posteriori result}\label{app:aposteriori}
\begin{proof}[Proof of \Cref{thm:aposteriori}]
To start, we notice that
\begin{align*}
\mathcal{H}(\psi)(t,x)&=\mathcal{H}(\bar{\psi})(t-\Delta t\lfloor t/\Delta t\rfloor,x)=H(x) + \delta(t-\Delta t\lfloor t/\Delta t\rfloor,x)
\end{align*}
for some function $\delta:[0,\Delta t]\times\R^{2d}\to\R$ with $|\delta(t,x)|\leq \varepsilon_1$ for every $t\in [0,\Delta t]$, $x\in\Omega$. In this proof, when using the symbol $\nabla$ we always refer to the gradient with respect to the second input. By the second assumption, on the smallness of the residual, we know that
\[
\left\|\mathbf{J}\nabla\mathcal{H}(\psi)(t,\psi_t(x))-\mathbf{J}\nabla H(\psi_t(x))\right\|_2\leq \varepsilon_2,
\]
and since $\mathbf{J}$ preserves the $\ell^2$ norm, we get that $\|\nabla \delta (t,x)\|_2\leq \varepsilon_2$ too. This implies that
\[
\frac{\dd}{\dd t}H(\psi_t(x))=\nabla H(\psi_t(x))\cdot\mathbf{J}\nabla \delta(t-\Delta t\lfloor t/\Delta t\rfloor,\psi_t(x))\leq C\varepsilon_2,
\]
where $C$ upper bounds the Lipschitz constant of $H$ over $\Omega$. Thus, $\left | H(\psi_t(x))-H(x)\right |\leq c_2 \varepsilon_2 t$. Moving to the variation of $\mathcal{H}(\psi)$, we notice that
\begin{align*}
\mathcal{H}(\psi)(t,\psi_t(x))-\mathcal{H}(\psi)(0,x) &= \mathcal{H}(\psi)(t,\psi_t(x))-H(\psi_t(x))+H(\psi_t(x))-H(x)\\
&+H(x)-\mathcal{H}(\psi)(0,x),
\end{align*}
which allows us to conclude
\[
\left | \mathcal{H}(\psi)(t,\psi_t(x))-\mathcal{H}(\psi)(0,x)\right|\leq 2\varepsilon_1 + c_2\varepsilon_2t\leq \max\{2,c_2\}(\varepsilon_1 + \varepsilon_2 t)=:c_1(\varepsilon_1+\varepsilon_2 t).
\]
\end{proof}

\section{Description of the network architectures}\label{app:architectures}
In what follows, we provide details on the network architectures considered in \Cref{sec:experiments}. All the networks we compare have the same number of layers, where each layer is defined as a transformation acting on both parts, \( q \) and \( p \), which partition the phase-space variable \( x \in \mathbb{R}^{2d} \). The number of layers chosen in the numerical experiments in \Cref{sec:experiments} is based on \texttt{SympFlow}'s performance. Thus, in the unsupervised experiments, the number of layers is set to three, whereas in the supervised experiments is set to five. Additionally, in all experiments the activation function is $\sigma=\tanh$. 

Although we have set the MLP to have the same numer of layers than \texttt{SympFlow}, the highly constrained architecture of \texttt{SympFlow} results in a greater number of network parameters compared to the MLP. Nevertheless, we consider this comparison fair because, in all numerical experiments, the MLP accurately approximates the target map just as well as \texttt{SympFlow} over \( \Omega \times [0, \Delta t] \). This demonstrates that the poor long-term behavior of the MLP is not due to a lack of expressive power but rather to its lack of a symplectic structure.

\subsection{\texttt{SympFlow}}
Given a \texttt{SympFlow}, the $i-$th layer of $\bar{\psi}_t$ is given by the composition map $\phi_{\mom}^i\circ\phi_{\pos}^i$. Each of these two composed maps depends on network weights only via a potential function, which is denoted as $V_{\pos}^i$ and $V_{\mom}^i$. We model both potential functions with a feedforward neural network. We explicit it for $V_{\pos}^i$, and the same applies to $V_{\mom}^i$, but with different weights:
\[
V_{\pos}(t,q) = \ell_3\circ \sigma \circ \ell_2 \circ \sigma \circ \ell_1 \left(\begin{bmatrix} q \\ t \end{bmatrix}\right),
\]
where $\ell_1:\R^{d+1}\to\R^h$, $\ell_2:\R^h\to\R^h$, $\ell_3:\R^h\to\R$, $h=10$, are three parametrized affine maps of the form $\ell_j(x)=A_jx+b_j$ for suitably shaped weight $A_j$ and bias $b_j$, $j=1,2,3$. For the case of the simple harmonic oscillator, where $d=1$, we thus have $30$ parameters modeling $\ell_1$, $110$ modeling $\ell_2$, and $11$ modeling $\ell_3$. A \texttt{SympFlow} of $5$ layers will hence have $5\cdot 2\cdot (30+110+11) = 1510$ parameters.

\subsection{MLP}
An MLP with $L\in\mathbb{N}$ layers is based on a parametric maps of the form
\begin{equation}\label{eq:mlpLayer}
\varphi_i : \R^{c_i}\to\R^{c_{i+1}},\,\,\varphi_i(z) = \sigma \circ \ell_i(z),\,\,i=1,...,L,
\end{equation}
for an affine map parametrized as $\ell_i(z)=A_iz+b_i$, $A_i\in\R^{c_{i+1}\times c_i}$ and $b_i\in\R^{c_{i+1}}$.  For the first layer we have $c_i=2d+1$, since $z = [x,t]$, whereas for the last layer $c_{L+1}=2d$. All the intermediate dimensions are fixed to $c_2=...=c_L=10$. For the simple harmonic oscillator, considering an MLP of $5$ layers, we thus have that the first layer is parametrized by $40$ parameters, the intermediate three by $110$ each, and the last by $22$. This leads to a network with $40+330+22=392$ parameters.

Since composing maps as in \Cref{eq:mlpLayer} does not allow to enforce the initial condition, we modify the composition and define the MLP network as
\[
\bar{\psi}(t,x) = x + \tanh{(t)} \cdot \varphi_L \circ ... \circ \varphi_1\left(\begin{bmatrix} x \\ t \end{bmatrix}\right),
\]
so that $\bar{\psi}(0,x)=x$ for every $x\in\R^{2d}$ since $\tanh{(0)}=0$.

\two{\section{Background on symplectic maps and Hamiltonian dynamics}\label{app:background}
This section provides some background material on symplectic maps, Hamiltonian systems, and symplectic numerical methods.}

\two{
\begin{definition}
A differentiable map $F:\Omega\to\R^{2d}$, $\Omega\subseteq\R^{2d}$ open, is symplectic if, for every $x\in\Omega$, $(F'(x))^\top \mathbf{J}F'(x)=\mathbf{J}$.
\end{definition}
\begin{lemma}
Let $F,G:\R^{2d}\to\R^{2d}$ be two differentiable symplectic maps. Then, the maps $F\circ G,G\circ F:\R^{2d}\to\R^{2d}$ are symplectic as well. 
\end{lemma}
\begin{proof}
We prove the result for $F\circ G$, and the argument repeats analogously for $G\circ F$. By the chain rule, it holds that
\[
(F\circ G)'(x) = F'(G(x))G'(x).
\]
Thus,
\[
((F\circ G)'(x))^\top \mathbf{J} (F\circ G)'(x) = (G'(x))^\top (F'(G(x)))^\top \mathbf{J} F'(G(x))G'(x) = (G'(x))^\top\mathbf{J}G'(x) = \mathbf{J}
\]
as desired.
\end{proof}
}

\two{We then prove that the time-$t$ flow of a time-dependent Hamiltonian system is a symplectic diffeomorphism if the Hamiltonian function is twice continuously differentiable.}

\two{We recall that a time-dependent Hamiltonian system is described by the system of differential equations
\[
\dot{x}(t) = \mathbf{J}\nabla_x H(t,x(t)),
\]
for a time-dependent Hamiltonian energy function $H:\R\times\R^{2d}\to\R$. Such a definition could also be restricted to an open subset $\Omega\subseteq\R^{2d}$. We remark that the notation $\nabla_x$ stands for the gradient with respect to the second entry.}

\two{\begin{proposition}
Let $H:\R\times\Omega \to\R$ be a twice continuously differentiable function on $\Omega\subseteq\R^{2d}$ open. Then, for each fixed $t\in\R$, the time$-t$ flow map $\phi_{H,t}:\Omega\to\R^{2d}$ is symplectic whenever defined.
\end{proposition}
\begin{proof}
We recall that the flow map $\phi_{H,t}:\Omega\to\R^{2d}$ satisfies
\[
\frac{\dd}{\dd t}\phi_{H,t}(x_0) = \mathbf{J}\nabla_x H\left(t,\phi_{H,t}(x_0)\right)
\]
for every $t\geq 0$ and $x_0\in\Omega$. Differentiating both sides with respect to $x_0$, we get
\[
\frac{\partial}{\partial x_0}\frac{\dd}{\dd t}\phi_{H,t}(x_0) = \mathbf{J}\nabla^2_x H(t,\phi_{H,t}(x_0))\,\frac{\partial}{\partial {x_0}}\phi_{H,t}(x_0),
\]
where $\nabla^2_x H$ is the Hessian matrix of $H$ with respect to the second component. Changing the differentiation order on the left, and calling $S_{x_0}(t)=\partial_{x_0}\phi_{H,t}(x_0)$, we see that
\begin{equation}
\frac{\dd}{\dd t}S_{x_0}(t) = \mathbf{J}\nabla^2_x H(t,\phi_{H,t}(x_0))S_{x_0}(t).
\end{equation}
We can then compute
\begin{align*}
&\frac{\dd}{\dd t}\left(S_{x_0}(t)^{\top}\mathbf{J}S_{x_0}(t)\right) = \left(\frac{\dd}{\dd t}S_{x_0}(t)\right)^{\top}\mathbf{J}S_{x_0}(t)  + S_{x_0}(t)^{\top}\mathbf{J}\left(\frac{\dd}{\dd t}S_{x_0}(t)\right)\\
&=\left(S_{x_0}(t)^{\top}\nabla^2_x H(t,\phi_{H,t}(x_0))\mathbf{J}^{\top}\right)\mathbf{J}S_{x_0}(t) + S_{x_0}(t)^{\top}\mathbf{J}\left(\mathbf{J}\nabla^2_x H(t,\phi_{H,t}(x_0))S_{x_0}(t)\right).
\end{align*}
Since $\mathbf{J}^{\top}\mathbf{J}=I_{2d}=-\mathbf{J}^2$ we conclude that the quantity above vanishes and hence 
\[
S_{x_0}(t)^{\top}\mathbf{J}S_{x_0}(t)=S_{x_0}(0)^{\top}\mathbf{J}S_{x_0}(0).
\]
At time $t=0$, we recall that
\[
S_{x_0}(0)=\partial_{x_0}\phi_{H,0}(x_0) = \partial_{x_0}x_0=I_{2d}
\]
which allows us to conclude $S_{x_0}(t)^{\top}\mathbf{J}S_{x_0}(t)=\mathbf{J}$ for every $t\geq 0$ as desired.
\end{proof}
To numerically approximate the solutions of a dynamical system, one needs to use numerical methods. When the target system possesses specific qualitative properties, such as energy preservation, symplecticity, or volume preservation, it is desirable to rely on numerical methods that reproduce these properties. Designing methods compatible with the qualitative properties of dynamical systems is the primary goal of an area of numerical analysis known as \emph{geometric numerical integration}, as described in \cite{hairerGeometricNumericalIntegration2006}. Symplectic one-step numerical methods are defined by maps $\varphi^{t_0,\Delta t}:\mathbb{R}^{2d}\to\mathbb{R}^{2d}$, $\varphi^{t_0,\Delta t}(x(t_0))\approx x(t_0+\Delta t)$, which, when applied to a Hamiltonian system, are symplectic maps. The Lie-Trotter splitting method is an example of a symplectic method, also often referred to as the symplectic Euler method in the context of symplectic integration. We utilise this scheme in our proof of the universal approximation theorem, so we briefly introduce it here. Consider a time-dependent separable Hamiltonian function, i.e., a function $H:\mathbb{R}\times\mathbb{R}^{2d}\to\mathbb{R}$ of the form {$H(t,(q,p))=K(t,p)+V(t,q)$} where $K,V:\mathbb{R}\times\mathbb{R}^d\to\mathbb{R}$ could be thought as the kinetic and potential energies of the system. Here, we assume $x=(q,p)\in\mathbb{R}^{2d}$ is split into two components of the same dimension $q,p\in\mathbb{R}^d$. We can then write the Hamiltonian vector field of $H$ as the sum of two simpler Hamiltonian systems:
\[
\dot{x}(t) = \mathbf{J}\nabla_x H(t,x(t)) = \begin{bmatrix} \nabla_p K(t,p(t)) \\ 0_d \end{bmatrix} + \begin{bmatrix} 0_d \\ -\nabla_q V(t,q(t)) \end{bmatrix}.
\]
The two components on the right-hand side are Hamiltonian systems that can be exactly solved up to being able to integrate the gradients of the two components $K$ and $V$:
\begin{align*}
\phi_{K,t_0,\Delta t}(q_0,p_0) &= \left(q_0 + \int_{t_0}^{t_0+\Delta t} \nabla_p K(t,p_0)dt,\;p_0\right),\\
\phi_{V,t_0,\Delta t}(q_0,p_0) &= \left(q_0,\;p_0- \int_{t_0}^{t_0+\Delta t}\nabla_q V(t,q_0)dt\right).
\end{align*}
The notation $\phi_{K,t_0,\Delta t}(q_0,p_0)$ and $\phi_{V,t_0,\Delta t}(q_0,p_0)$ is used to express the solution at time $t_0+\Delta t$ of the two Hamiltonian initial value problems with Hamiltonians $K$ and $V$, respectively, starting from time $t_0$. The two maps in the previous equation are exactly the layers of our \texttt{SympFlow}. The symplectic Euler method, applied to approximate $\phi_{H,t_0,\Delta t}$ is thus defined as
\[
\varphi^{t_0,\Delta t} = \phi_{K,t_0,\Delta t} \circ \phi_{V,t_0,\Delta t},
\]
and it could also be written by composing the flows in the opposite order, despite generating a different map, which still goes under the same name. We note that if $H$ is time-independent, then the integrals are easily solvable, and the flows we are composing reduce to linear shifts in one of the two components.
}

\section{The classical mechanics of non-conservative systems: Damped Harmonic Oscillator}
\label{app:NCA}
In this section, we summarize the non-conservative Hamiltonian formulation used to extend \texttt{SympFlow} to dissipative systems \cite{Galley_2013, Galley_2014, Tsang_2015}. This approach enables the modeling of dissipative dynamics as a canonical Hamiltonian system but does not inherently preserve symplecticity. We outline the key aspects of this formulation and detail the modifications required to ensure symplecticity, allowing its integration into \texttt{SympFlow}.

The starting point of the non-conservative Hamiltonian formulation \cite{Galley_2013} is to move from the system's phase space to an augmented one, which is obtained by doubling the degrees of freedom, and back to the original space. The main idea is to capture the dissipative evolution in this augmented phase-space, ensuring that the global dynamics of the system is conservative. Hence, the configuration variable $q$, and the momentum $p$ are doubled giving rise to two different curves each: $q \rightarrow (q_a, q_b)$ and $ p \rightarrow (p_a, p_b)$. In this framework, a new non-conservative Lagrangian is derived from the augmented action with doubled variables:
\begin{align}
	{\cal S} = \int \Lambda(q_{a,b}, \dot{q}_{a,b}, t) \,\dd t \label{NC_action}
\end{align}
where the (non-conservative) Lagrangian is defined by:
\begin{align}
	\Lambda(q_{a,b}, \dot{q}_{a,b}, t) \! = \! L(q_a, \dot{q}_a, t) \!-\! L(q_b, \dot{q}_b, t) \!+\! K(q_{a,b}, \dot{q}_{a,b}, t)\;, \nonumber
\end{align}
where $L$ describes the conservative contribution to the full non-conservative Lagrangian $\Lambda$, and $K$ is a function that couples the variables together and accounts for all the non-conservative contribution to the system dynamics. Notice that, in conservative Hamiltonian systems, the function $K$ would vanish. Although, $K$ also vanishes in the physical limit (PL) \cite{Galley_2013}, that is, in the linear subspace $\{(q_a,q_b,p_a,p_b)\in\R^{4d}:\,\,q_a = q_b,\text{ and }p_a =p_b\}$, its derivatives do not necessarily vanish in the physical limit. This is important, since the derivatives of $K$ model the dissipative behavior of the system, as we will see below. 

The stationarity of the action in \Cref{NC_action} under the variations $q_{I}(t,\epsilon) = q_{I}(t,0) + \epsilon \eta _{I}(t)$ , that is $[ \dd S [ q_I] / \dd\epsilon ]_{\epsilon=0}=0$ for all $\eta_I(t_i)=\eta_I(t_f)=0$, $I =\{ a, b\}$, corresponds to the conditions
\begin{align}
0 = {} & \int_{t_i}^{t_f}   \bigg\{  \bigg[ \frac{ \partial \Lambda }{ \partial q_a} \frac{ \partial q_a }{ \partial \epsilon}   +   \frac{ \partial \Lambda }{ \partial \dot{q}_a}  \frac{\partial \dot{q}_a }{ \partial \epsilon} +
\frac{ \partial \Lambda }{ \partial q_b} \frac{ \partial q_b }{ \partial \epsilon} \nonumber+ \frac{ \partial \Lambda }{ \partial \dot{q}_b}  \frac{\partial \dot{q}_b }{ \partial \epsilon} \bigg]_{\epsilon=0}  \bigg\} \, \dd t
\\ \nonumber
= {}& \int_{t_i}^{t_f}  \bigg\{ \eta_a \cdot \bigg[ \frac{ \partial \Lambda }{ \partial q_a} - \frac{ \dd \pi _a}{\dd t} \bigg]_{\epsilon=0}
+ \eta_b \cdot \bigg[ \frac{ \partial \Lambda }{ \partial q_b} + \frac{ \dd \pi_b}{\dd t} \bigg]_{\epsilon=0} \bigg\} \, \dd t \nonumber \\
+{}& \bigg[ \eta_a(t) \cdot \pi_b(t) - \eta_b(t) \cdot \pi_a(t) \bigg]_{t=t_i}^{t_f}
	\label{vari}
\end{align}
where the quantities $\pi_I$ are the (non-conservative) conjugate momenta, defined by:
\ba
& \pi_{a}(q_I, \dot{q}_I, t) = \frac{\partial \Lambda(q_I, \dot{q}_I, t)}{\partial \dot{q}_a(t)} = \frac{\partial L(q_a, \dot{q}_a, t)}{\partial \dot{q}_a} + \frac{\partial K(q_I, \dot{q}_I, t)}{\partial \dot{q}_a} \\
& \pi_{b}(q_I, \dot{q}_I, t) = \frac{\partial \Lambda(q_I, \dot{q}_I, t)}{\partial \dot{q}_b} = -\frac{\partial L(q_b, \dot{q}_b, t)}{\partial \dot{q}_b} +\frac{\partial K(q_I, \dot{q}_I, t)}{\partial \dot{q}_b}
\ea

In \cite{Galley_2013}, the authors choose a different sign convention for the momenta hence working with Hamiltonian equations associated to a non-canonical symplectic structure over $\R^{4d}$. Our choice for the sign of $p$ is due to the fact that \texttt{SympFlow} preserves the canonical symplectic form of $\R^{4d}$. With this sign convention, the physical limit expressed with respect to the momenta corresponds to the subspace given by:
\begin{equation}\label{eq:pl}
\{(q_a,q_b,\pi_a,\pi_b)\in\R^{4d}:\,\,q_a=q_b,\text{ and }\pi_a=-\pi_b\}.
\end{equation}
We remark that $\pi_a=-\pi_b$ can be inferred by noticing that $\pi_a+\pi_b= p_a - p_b - \lambda(q_a-q_b)=0$, where $p_a$ and $p_b$ are the conservative momenta derived from the conservative Lagrangian as $p_I=\partial_{\dot{q}_I}L(q_I,\dot{q}_I)$, $I\in\{a,b\}$.

In the damped harmonic oscillator example, we consider a dissipative potential of the form $K = -\frac{\lambda}{2} (\dot{q}_{a}+\dot{q}_{b})(q_{a}-q_{b})$ \cite{Tsang_2015} (see \cite{Galley_2014} for other choices). The corresponding (non-conservative) Lagrangian is, then, given by:
\ba
\Lambda(q_I,\dot{q}_I) =\left(\frac{m}{2}\dot{q}_a^2 - \frac{k}{2}q_a^2\right)-\left(\frac{m}{2}\dot{q}_b^2 - \frac{k}{2}q_b^2\right)-\frac{\lambda}{2} \left(\dot{q}_a+\dot{q}_b\right)\left(q_a-q_b\right),
\ea
where $k$ is the oscillator's recovery constant, and $\lambda$ is the dissipation factor. The corresponding (non-conservative) conjugate momenta for each double variable are then given by:
\ba
\pi_a &=& \frac{\partial \Lambda(q_I,\dot{q}_I)}{\partial \dot{q}_a}=m\dot{q}_a-\frac{\lambda}{2}(q_a-q_b)=p_a-\frac{\lambda}{2}(q_a-q_b) \\
\pi_b &=& \frac{\partial \Lambda(q_I,\dot{q}_I)}{\partial \dot{q}_b}=-m\dot{q}_b-\frac{\lambda}{2}(q_a-q_b)=-p_b-\frac{\lambda}{2}(q_a-q_b).
\ea
That is, the non-conservative momenta are given by the conservative ones $p_a,p_b$, plus an extra term due to the dissipative contribution. We also remark that, in the physical limit, $\pi_a=p_a$ and $\pi_b=-p_b$. 

The non-conservative (augmented) Hamiltonian is obtained by applying the Legendre transform to the non-conservative Lagrangian $\Lambda$:
\ba
A(t,q_a,q_b,\pi_a,\pi_b) &=&\frac{1}{2m}(\pi_a^2-\pi_b^2)+\frac{\lambda}{2m}(q_a-q_b)(\pi_a-\pi_b)\nonumber \\
&+&\frac{k}{2}(q_a-q_b)(q_a+q_b)\nonumber\;.
\ea
Hence, the final Hamiltonian equations for the damped harmonic oscillator in the augmented space are given by: 
\ba
\begin{bmatrix} \dot{q}_a(t) \\ \dot{q}_b(t) \\ \dot{p}_a(t) \\ \dot{p}_b(t) \end{bmatrix} =
\begin{bmatrix}
\frac{\pi_a(t)}{m} + \frac{\lambda}{2m}(q_a(t)-q_b(t))  \\
-\frac{\pi_b(t)}{m} - \frac{\lambda}{2m}(q_a(t)-q_b(t)) \\
-\frac{\lambda}{2m}(\pi_a(t)-\pi_b(t))-kq_a(t)          \\
\frac{\lambda}{2m}(\pi_a(t)-\pi_b(t))+kq_b(t)
\end{bmatrix}.
\ea
Notice that the physical limit in \Cref{eq:pl} is an invariant submanifold of $\R^{4d}$ with respect to the dynamics, since $\dot{\pi}_a+\dot{\pi}_b=0$ whenever $q_a=q_b$, and also $\dot{q}_a-\dot{q}_b=0$. To ensure that our neural networks preserve this subspace as well, we apply a projection step at the end of the network, which projects over the subspace in \Cref{eq:pl} and is defined as
\[
\begin{bmatrix}
    q_a & q_b & \pi_a & \pi_b 
\end{bmatrix} \mapsto \begin{bmatrix}
    \frac{q_a+q_b}{2} & \frac{q_a+q_b}{2} & \frac{\pi_a-\pi_b}{2} & -\frac{\pi_a-\pi_b}{2}
\end{bmatrix}.
\]
We remark that this projection turns the \texttt{SympFlow} into a non-symplectic map in this context, but it is not important since the actual physical system we are integrating is not a conservative Hamiltonian system. The symplectic structure encoded in the network still seems to improve the approximation accuracy, as we show in the numerical experiments \Cref{sec:experiments}.

Similarly, the equations of motion for the double variables $(q_a,q_b,p_a,p_b)$ are given by:
\ba
\dot{p}_a & = \dot{\pi}_a + \frac{\lambda}{2}(\dot{q}_a-\dot{q}_b) =-\frac{\lambda}{m}p_2-kq_a    \\
\dot{p}_b & = -\dot{\pi}_b + \frac{\lambda}{2}(\dot{q}_a-\dot{q}_b) = -\frac{\lambda}{m}p_b-kq_b. \\
\dot{q}_a & = \frac{\lambda}{2m}(q_a-q_b) + \frac{p_a}{m} - \frac{\lambda}{2m}(q_a-q_b) = \frac{p_a}{m}  \\
\dot{q}_b & =-\frac{\lambda}{2m}(q_a-q_b) + \frac{p_b}{m} + \frac{\lambda}{2m}(q_a-q_b) = \frac{p_b}{m}.
\ea
Notice that in the figures in this paper the letters $p_a$ and $p_b$ denote the momenta. We emphasize that these momenta are the non-conservative ones, since they are the result of solving the augmented system. However, because of the projection, they coincide with the conservative ones, i.e., $\pi_a=p_a$ and $-\pi_b=p_b$, hence why we use this convention as discussed above.

\section{Proof of Theorem~\ref{thm:universal}}\label{app:universal}
This appendix provides a proof of \Cref{thm:universal}, i.e., that any Hamiltonian flow can be approximated to arbitrary accuracy, uniformly on a compact time interval $[0,\Delta t]$, by a suitable \texttt{SympFlow}. Our proof builds on the derivations in \cite{supp:turaev2002polynomial}.

\subsection*{Step 1: Approximation with polynomials}
\begin{lemma}[Approximation with polynomial Hamiltonian]\label{lemma:weierstrass}
Let $H:\R\times\R^{2d}\to\R$ be a \two{twice-continuously differentiable} function. Fix $\Delta t>0$. Consider a compact set $\Omega\subset\R^{2d}$ \one{and assume $\Omega$ is forward invariant for $\phi_{H,t}$}. For any $\varepsilon>0$, there is a function $\widetilde{H}:\R\times\R^{2d}\to\R$ polynomial in the phase-space variable $x\in\R^{2d}$ and with coefficients continuously depending on time, whose exact flow $\phi_{\widetilde{H},t}$ approximates $\phi_{H,t}$ to accuracy $\varepsilon$ on $\Omega\times [0,\Delta t]$.
\end{lemma}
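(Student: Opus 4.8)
The plan is to prove \Cref{lemma:weierstrass} by a straightforward combination of the Stone--Weierstrass theorem with continuous dependence of ODE solutions on the vector field. First I would fix $\varepsilon>0$ and work on a slightly enlarged compact set: since $\Omega$ is compact and $\phi_{H,t}$ is continuous, there is a compact $\Omega'\supset\Omega$ and a time $T=\Delta t$ such that the whole trajectory $\{\phi_{H,t}(x):x\in\Omega,\,t\in[0,\Delta t]\}$ lies in the interior of $\Omega'$. On $[0,\Delta t]\times\Omega'$ the vector field $f(t,x)=\mathbf J\nabla H(x)$ — or, in the time-dependent case, $f(t,x)=\mathbf J\nabla_x H(t,x)$ — is continuous, and in particular uniformly continuous and bounded. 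Its components are continuous functions of $x$ with coefficients continuous in $t$; by a parametrized version of Stone--Weierstrass (approximate on $\Omega'$ for each fixed $t$ and then regularize/interpolate in $t$, or simply invoke joint continuity and approximate the map $(t,x)\mapsto \nabla_x H(t,x)$ by polynomials in $x$ with continuous $t$-dependence) we can find $\widetilde g(t,x)$, polynomial in $x$ with continuous coefficients in $t$, such that $\sup_{[0,\Delta t]\times\Omega'}\|\nabla_x H(t,x)-\widetilde g(t,x)\|$ is as small as we like.

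The second step is to promote this to an approximation of the Hamiltonian itself by a polynomial: since the target gradient $\widetilde g$ we construct is already a gradient (we may either approximate $H$ directly by a polynomial $\widetilde H$ in $x$ and then note $\nabla_x\widetilde H$ is close to $\nabla_x H$ by an intermediate Markov-type inequality, or more simply integrate $\widetilde g$ componentwise after first arranging that the polynomial approximation of $\nabla_x H$ is exact as a gradient — which one obtains for free by taking $\widetilde H$ to be the polynomial approximation of $H$ in $C^1(\Omega')$, available because $C^\infty$ is dense in $C^1$ on a compact set). Thus we obtain $\widetilde H:\R\times\Omega'\to\R$, polynomial in $x$ with $t$-continuous coefficients, with $\|H-\widetilde H\|_{C^1([0,\Delta t]\times\Omega')}$ small; in particular $\sup\|\mathbf J\nabla_x H-\mathbf J\nabla_x\widetilde H\|\le\eta$ for any prescribed $\eta>0$.

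The third step is the standard ODE perturbation estimate. Let $\phi_{H,t}$ and $\phi_{\widetilde H,t}$ be the two flows starting from the same $x_0\in\Omega$. Writing $e(t)=\phi_{H,t}(x_0)-\phi_{\widetilde H,t}(x_0)$, we have $\dot e(t)=\mathbf J\nabla_x H(t,\phi_{H,t}(x_0))-\mathbf J\nabla_x\widetilde H(t,\phi_{\widetilde H,t}(x_0))$, which we split into a term bounded by $\eta$ (replacing $H$ by $\widetilde H$ at the same point) plus a term bounded by $L\|e(t)\|$, where $L$ is a Lipschitz constant for $\mathbf J\nabla_x\widetilde H$ on $[0,\Delta t]\times\Omega'$ (finite because $\widetilde H$ is a polynomial on a compact set — and $L$ can be taken uniform once $\eta$ is small, since $\widetilde H$ stays $C^1$-close to the fixed $H$). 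Grönwall's inequality then gives $\|e(t)\|\le (\eta/L)(e^{Lt}-1)\le (\eta/L)(e^{L\Delta t}-1)$ for all $t\in[0,\Delta t]$, uniformly in $x_0\in\Omega$; one also needs the trajectories $\phi_{\widetilde H,t}(x_0)$ to remain in $\Omega'$, which holds for $\eta$ small enough by continuity and the fact that the $H$-trajectories sit in the interior of $\Omega'$ (a standard open-time-set/compactness argument). Choosing $\eta$ small enough that this final bound is $<\varepsilon$ completes the proof.

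I expect the only genuine subtlety to be the bookkeeping around the enlarged domain $\Omega'$ and the invariance-until-time-$\Delta t$ argument for the perturbed flow — i.e. ensuring $\phi_{\widetilde H,t}(x_0)$ does not escape $\Omega'$ before we can apply Grönwall — together with making precise the "polynomial in $x$ with continuous coefficients in $t$" version of Weierstrass approximation of a $C^1$ function; both are routine but deserve a careful sentence. Everything else (Grönwall, Lipschitz constants of polynomials on compacta) is standard.
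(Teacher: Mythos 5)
Your proposal follows essentially the same route as the paper's proof: simultaneous polynomial approximation of $H$ and $\nabla H$ on the compact set (the paper invokes a simultaneous-approximation version of the Weierstrass theorem for this), followed by the standard Gr\"onwall perturbation estimate; the paper sidesteps your enlarged domain $\Omega'$ by relying on its standing forward-invariance assumption on $\Omega$. The one small wobble is your claim that the Lipschitz constant of $\nabla_x\widetilde{H}$ can be taken uniform in the approximation because $\widetilde{H}$ stays $C^1$-close to $H$ --- $C^1$-closeness does not control second derivatives --- but this is immaterial, since one can split the integrand the other way and use the Lipschitz constant of $\nabla H$ itself, which is exactly what the paper does.
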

\begin{proof}
\two{Define
\[
\Omega_\varepsilon := \Omega + B_\infty(\varepsilon) = \left\{x\in\mathbb{R}^{2d}:\min_{y\in\Omega}\|x-y\|_\infty\leq \varepsilon\right\}.
\]
Call $\Omega_{\varepsilon,c}$ a compact and convex set containing $\Omega_\varepsilon$, and define
\[
L:=\sup_{\substack{s\in [0,\Delta t] \\ x\in \Omega_{\varepsilon,c}}}\left\|\nabla^2_x H(s,x)\right\|_{\infty},\,\text{ and }\,\widetilde{\varepsilon} := \frac{\varepsilon}{\Delta t \exp(\Delta t L)},
\]
where $\nabla_x^2 H$ is the Hessian in the second variable. $L$ is finite as $\Omega_{\varepsilon,c}$ is compact and $H$ is twice-continuously differentiable on the whole space. Consider a polynomial function $\widetilde{H}:\mathbb{R}\times\mathbb{R}^{2d}\to\mathbb{R}$ such that
\begin{align}
&\sup_{\substack{t\in [0,\Delta t] \\ x\in\Omega_\varepsilon}}\left|H(t,x)-\widetilde{H}(t,x)\right|<\widetilde{\varepsilon},\label{eq:closeFunctions} \\
&\sup_{\substack{t\in [0,\Delta t] \\ x\in\Omega_\varepsilon}}\left\|\nabla_x H(t,x) - \nabla_x \widetilde{H}(t,x)\right\|_{\infty}<\widetilde{\varepsilon},\label{eq:closeGradients}
\end{align}
whose existence is guaranteed by the Weierstrass Approximation Theorem \cite[Theorem 1.6.2]{supp:narasimhan1985analysis} since $\Omega_\varepsilon$ is compact. We remark that $\nabla_x H$ stands for the gradient of $H$ in the second variable.}

Since the approximation only holds on $\Omega_\varepsilon$, we introduce
\[
\tau(x) := \min\left\{\Delta t,\inf\{t\in [0,+\infty):\,\phi_{\widetilde{H},t}(x)\in \partial \Omega_{\varepsilon}\}\right\}.
\] 
Since $\widetilde H$ is polynomial, the associated vector field is $C^1$ (hence locally Lipschitz in the state variable).
Therefore, by the Picard--Lindel\"of Theorem, for every $x\in\Omega$ there exists a unique local solution
$t\mapsto \phi_{\widetilde H,t}(x)$, which is in particular continuous and satisfies $\phi_{\widetilde H,0}(x)=x\in\Omega\subset\Omega_\varepsilon$.
Consequently $\tau(x)>0$. Moreover, define
\[
B:=\sup_{\substack{t\in [0,\Delta t] \\ z\in \Omega_{\varepsilon,c}}}\left\|\nabla_x \widetilde{H}(t,z)\right\|_\infty <\infty,
\]
which is finite by continuity of $\nabla_x\widetilde H$ on the compact set $[0,\Delta t]\times\Omega_{\varepsilon,c}$.
Then, for every $x\in\Omega$ and $t\in[0,\tau(x)]$, we have
\[
\left\|x - \phi_{\widetilde{H},t}(x)\right\|_\infty
\leq \int_0^t \left\|\nabla_x \widetilde{H}(s,\phi_{\widetilde{H},s}(x))\right\|_\infty \,\mathrm ds
\leq Bt,
\]
since $\phi_{\widetilde H,s}(x)\in\Omega_\varepsilon\subset\Omega_{\varepsilon,c}$ for all $s\in[0,t]$.
If $\tau(x)<\min\{\Delta t,\varepsilon/B\}$, then
\[
\min_{y\in\Omega}\|y-\phi_{\widetilde{H},\tau(x)}(x)\|_\infty \le \left\|x - \phi_{\widetilde{H},\tau(x)}(x)\right\|_\infty \le B\tau(x)<\varepsilon,
\]
contradicting the definition
of $\tau(x)$. Hence $\tau(x)\geq \min\{\Delta t,\varepsilon/B\}>0$ for every $x\in\Omega$.

For any $x\in\Omega$ and $t\in [0,\tau(x)]$, we can write
\begin{align*}
&\left\|\phi_{H,t}(x) - \phi_{\widetilde{H},t}(x)\right\|_{\infty} = \left\|\mathbf{J} \int_0^t \left(\nabla_x H(s,\phi_{H,s}(x)) - \nabla_x \widetilde{H}(s,\phi_{\widetilde{H},s}(x))\right)\dd s\right\|_{\infty} = \\
&=\left\|\mathbf{J} \int_0^t \left(\nabla_x H(s,\phi_{H,s}(x)) - \nabla_x H(s,\phi_{\widetilde{H},s}(x)) + \nabla_x H(s,\phi_{\widetilde{H},s}(x)) - \nabla_x \widetilde{H}(s,\phi_{\widetilde{H},s}(x))\right)\dd s\right\|_{\infty}\\
&\leq \underbrace{L\int_0^t \left\|\phi_{H,s}(x) - \phi_{\widetilde{H},s}(x)\right\|_{\infty}\dd s}_{(a)} + \int_0^t \left\|\nabla_x H(s,\phi_{\widetilde{H},s}(x)) - \nabla_x \widetilde{H}(s,\phi_{\widetilde{H},s}(x))\right\|_{\infty} \dd s.
\end{align*}
$(a)$ follows from the Lipschitz regularity of the gradient of twice-continuously differentiable functions on compact and convex sets. 
Since $t\in [0,\tau(x)]$, then \Cref{eq:closeGradients} applies and we get
\[
\left\|\phi_{H,t}(x) - \phi_{\widetilde{H},t}(x)\right\|_{\infty} \leq L\int_0^t \left\|\phi_{H,s}(x) - \phi_{\widetilde{H},s}(x)\right\|_{\infty}\dd s + \widetilde{\varepsilon}t.
\]
Gronwall's inequality allows us to conclude that, for any $x\in \Omega$ and $t\in [0,\tau(x)]$ we can write
\begin{align*}
\left\|\phi_{H,t}(x) - \phi_{\widetilde{H},t}(x)\right\|_{\infty}\leq \widetilde{\varepsilon}t \exp\left(Lt \right)\leq \widetilde{\varepsilon}\Delta t \exp\left(L\Delta t \right) = \varepsilon.
\end{align*}
\two{We will now show that $\tau(x) = \Delta t$. Assume, by contradiction, that $\tau(x)<\Delta t$ for an $x\in\Omega$, i.e., $\phi_{\widetilde{H},\tau(x)}(x)\in\partial\Omega_\varepsilon$. Then, by the forward invariance of $\Omega$, $\phi_{{H},\tau(x)}(x)\in\Omega$ and hence
\[
\|\phi_{{H},\tau(x)}(x)-\phi_{\widetilde{H},\tau(x)}(x)\|_\infty \geq
\min_{y\in\Omega}\|y-\phi_{\widetilde{H},\tau(x)}(x)\|_\infty = \varepsilon.
\]
This is a contradiction since the previous derivation implies
\[
\|\phi_{{H},\tau(x)}(x)-\phi_{\widetilde{H},\tau(x)}(x)\|_\infty \leq \widetilde{\varepsilon}\tau(x) \exp\left(L\tau(x) \right)< \widetilde{\varepsilon}\Delta t \exp\left(L\Delta t \right) = \varepsilon.
\]
We conclude that $\tau(x)=\Delta t$ for every $x$ and the desired result follows.}
\end{proof}
\subsection*{Step 2: Splitting of the exact flow in $N$ substeps}
Thanks to \Cref{lemma:weierstrass}, we can now assume to work with $\widetilde{H}:\R\times\Omega\to\R$ which is a polynomial in the phase-space variable $x$ and with coefficients continuously depending on time. We can then split the flow $\phi_{\widetilde{H},t}$ into $N$ substeps of size $t/N$ as follows:
\begin{equation}\label{eq:split}
\phi_{\widetilde{H},t} = \widetilde{\phi}_{(N-1)t/N}\circ \cdots\circ \widetilde{\phi}_{nt/N} \circ \cdots \circ \widetilde{\phi}_{0},\,\, n=0,\ldots,N-1.
\end{equation}
In \Cref{eq:split}, we denote with $\widetilde{\phi}_{nt/N}(x_0)$ the exact solution at time $(n+1)t/N$ of the initial value problem
\[
\begin{cases}
    \frac{\dd x(s)}{\dd s} = \mathbf J \nabla_x \widetilde{H}(s,x)\\
    x(nt/N) = x_0.
\end{cases}
\]

\subsection*{Step 3: Approximation via a separable Hamiltonian system}

The goal is now getting closer to the form of differential equations defining the layers of the \texttt{SympFlow}. To do so, we work with \cite[Lemma 1]{supp:turaev2002polynomial}, which we now state adapting the notation to ours.

\begin{lemma}[Lemma 1 in \cite{supp:turaev2002polynomial}]\label{lemma:polynomialTuraev}
Let $\widetilde{H}:\R\times{\R^{2d}}\to\R$ be polynomial in the phase-space variable $x\in{\R^{2d}}$ and depending continuously on the first variable. {Fix $t\in [0,+\infty)$ and let $\Omega\subset\R^{2d}$ be an arbitrary compact set.} There exists a function $V:\R\times \R^d\to\R$ polynomial in the second variable and with coefficients depending continuously on the first variable and a set of $d$ integers $\omega_1,\ldots,\omega_d$, such that the Hamiltonian
\begin{equation}\label{eq:secondOrderHam_other}
\widehat{H}(s,x) = \pi\left(\left\|p\right\|_2^2 + q^\top \diag(\omega_1^2,\ldots,\omega_d^2)q\right)+ \frac{t}{N}V(s,q),\,\,x=(q,p)\in\R^{2d},
\end{equation}
with time-$1$ flow $\phi_{\widehat{H},1}$ which is $\mathcal{O}(t^2/N^2)$ close to $\widetilde{\phi}_{nt/N}$ {in $\Omega$, i.e., 
\[
\sup_{x\in \Omega}\left\|\phi_{\widehat{H},1}(x) - \widetilde{\phi}_{nt/N}(x)\right\|_\infty = \mathcal{O}(t^2/N^2).
\]
}
\end{lemma}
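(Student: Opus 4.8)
\textbf{Proof proposal for Lemma~\ref{lemma:polynomialTuraev}.}
The plan is to quote the construction of Turaev \cite{turaev2002polynomial} and adapt it to the time-dependent setting we need here. First I would recall that the flow $\widetilde{\phi}_{nt/N}$ is the time-$t/N$ flow of the Hamiltonian $\widetilde{H}(s,x)$, which is polynomial in $x=(q,p)$. The idea is to encode the polynomial part of $\widetilde{H}$ into a time-periodic kick: one sets up a Hamiltonian of the form \eqref{eq:secondOrderHam_other}, in which the quadratic term $\tfrac12(\|p\|_2^2 + 2\pi q^\top \diag(\omega_1^2,\ldots,\omega_d^2)q)$ generates, over a unit time interval, a linear map that is the identity on the $q$-variables (because each frequency $\omega_j$ is an integer, so the harmonic oscillators come back to their starting position after time $1$, up to $2\pi$ rescalings absorbed into the $2\pi$ factor), while the small perturbation $\tfrac{t}{N}V(s,q)$ injects the nonlinear content. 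A Magnus / variation-of-constants expansion of the time-$1$ flow of $\widehat{H}$ in powers of the small parameter $t/N$ then shows that the leading-order effect is exactly a shear in the $p$-direction by $-\tfrac{t}{N}\nabla_q(\text{average of }V)$, and by choosing $V$ appropriately this matches $\widetilde{\phi}_{nt/N}$ up to $\mathcal{O}(t^2/N^2)$.

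The key steps, in order, are: (i) write $\widetilde{\phi}_{nt/N} = \id + \tfrac{t}{N}\mathbf{J}\nabla_x\widetilde{H}(nt/N,\cdot) + \mathcal{O}(t^2/N^2)$ by Taylor-expanding the exact flow in the step size; (ii) recall from \cite{turaev2002polynomial} that any polynomial vector field of the form $\mathbf{J}\nabla_x\widetilde{H}$ can, after a suitable symplectic change of variables and the introduction of auxiliary harmonic degrees of freedom with integer frequencies, be realized as the first-order term of the time-$1$ flow of a Hamiltonian of the shape \eqref{eq:secondOrderHam_other}; (iii) verify that the quadratic (free) part integrates to the identity over unit time thanks to the integrality of the $\omega_j$, so that only the $\tfrac{t}{N}V$ term contributes at first order; (iv) identify $V$ by matching first-order terms, and bound the remainder by a Gronwall-type estimate on the difference of the two flows, exactly as in the proof of \Cref{lemma:weierstrass}. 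The dependence of the coefficients of $V$ on $s$ is inherited continuously from that of $\widetilde{H}$, since the matching is done pointwise in $s=nt/N$ and the construction in \cite{turaev2002polynomial} is algebraic in the coefficients.

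The main obstacle is step (ii): reproducing Turaev's encoding of an arbitrary polynomial Hamiltonian vector field as the leading term of the flow of a Hamiltonian that is \emph{quadratic in $p$ plus a function of $q$ only} — i.e., of mechanical/separable type with a very specific quadratic kinetic part. This requires the trick of adding extra oscillator variables to host the monomials in $p$ that cannot appear directly, and carefully tracking that the integer-frequency oscillators decouple at time $1$; I would simply cite \cite[Lemma~1]{turaev2002polynomial} for this and only indicate the modification needed to carry the explicit time-dependence $s\mapsto$ coefficients through the argument. The Gronwall closure in step (iv) is routine given \Cref{lemma:weierstrass}, and the verification in step (iii) is a direct computation with the harmonic oscillator flow, so neither of those should present difficulty.
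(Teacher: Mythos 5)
The paper does not prove \Cref{lemma:polynomialTuraev} at all: it is imported verbatim (up to notation) from \cite[Lemma~1]{turaev2002polynomial}, with only a remark that the proof there is a perturbative analysis of \Cref{eq:2ndODE} around the linear oscillator. Your plan to defer the core construction to that reference is therefore exactly the paper's treatment, and to that extent the proposal is fine; steps (i) and (iv) of your outline (Taylor expansion of the flow step, Gronwall closure as in \Cref{lemma:weierstrass}) are also consistent with how the surrounding steps of the universality proof are handled.

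However, your heuristic description of \emph{why} the construction works contains a substantive error. You claim that, since the integer-frequency oscillator flow returns to the identity at time $1$, ``the leading-order effect is exactly a shear in the $p$-direction by $-\tfrac{t}{N}\nabla_q(\text{average of }V)$.'' If that were true the lemma would be false: a $p$-shear generated by a function of $q$ alone can only reproduce flow steps of Hamiltonians depending on $q$ only, whereas $\widetilde{H}$ is an arbitrary polynomial in $(q,p)$. The actual mechanism (and the whole point of Turaev's lemma) is that, in the interaction picture, the first-order term is $\tfrac{t}{N}\,\mathbf{J}\nabla W$ with $W(x)=\int_0^1 V\bigl(s,\pi_q(\phi_{H_0,s}(x))\bigr)\,\dd s$, where $\phi_{H_0,s}$ is the oscillator flow; because this flow rotates $q$ into $p$, the averaged potential $W$ is a polynomial in \emph{both} $q$ and $p$, and the integers $\omega_1,\ldots,\omega_d$ are chosen so that the map $V\mapsto W$ reaches every polynomial Hamiltonian. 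Your ``main obstacle'' paragraph gestures at this (``monomials in $p$ that cannot appear directly''), but it contradicts the shear claim in the body. If you intend to sketch the mechanism rather than only cite it, you should replace the shear statement with the averaging argument above; as written, the sketch would mislead a reader into thinking the separable Hamiltonian \Cref{eq:secondOrderHam_other} can only generate kick maps.
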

{We remark that the big-O notation $f(t)=\mathcal{O}(g(t))$, for a pair of functions $f,g:[0,+\infty)\to\mathbb{R}$ means that there is a constant $c>0$ such that $|f(t)|\leq c|g(t)|$ for all $t\geq 0$.}
We do not specify it in the notation since it is already quite heavy, but this approximation depends on the time instant $nt/N$, so $\widehat{H}$ would be $\widehat{H}_n$. We remark that the differential equation defined by $\widehat{H}$ in \Cref{eq:secondOrderHam_other},
\begin{equation}\label{eq:1storder}
\begin{bmatrix}
\dot{q}(s) \\ \dot{p}(s)
\end{bmatrix} = \begin{bmatrix}
    2\pi p(s) \\ - 2\pi\diag(\omega_1^2,\ldots,\omega_d^2)q(s) - \frac{t}{N}\nabla_q V(s,q(s))
\end{bmatrix},
\end{equation}
can be seen as a perturbation of the linear oscillator
\begin{equation}\label{eq:2ndODE}
\frac{\dd^2}{\dd s^2}q(s) = - 4\pi^2\diag(\omega_1^2,\ldots,\omega_d^2)q(s).
\end{equation}

\begin{corollary}\label{corollary:2ndOrderMLP}
Let $\widetilde{H}:\R\times{\R^{2d}}\to\R$ be polynomial in the phase-space variable $x\in{\R^{2d}}$ and depending continuously on the first variable. {Consider an arbitrary $\varepsilon>0$, $t\in [0,+\infty)$, and a compact set $\Omega\subset\R^{2d}$}. {Let
\begin{equation}\label{eq:functionSpace}
\mathcal{F}\two{_\sigma} = \left\{\R\times\R^d\ni(s,q)\mapsto w^\top\sigma\left(Aq+bs+c\right)\in\R:\,\,A\in\R^{h\times d},b,c,w,\in\R^h,\,\,h\in\mathbb{N}\right\},
\end{equation}
with $\sigma:\R\to\R$ continuously differentiable and not a polynomial, which is applied entrywise. There exists a Hamiltonian function
\begin{equation}\label{eq:secondOrderHam}
\bar{H}(s,x) = \bar{V}_{\mom}(s,p) + \bar{V}_{\pos}(s,q),\,\,\bar{V}_{\mom},\,\bar{V}_{\pos}\in\mathcal{F}_{\sigma}, x=(q,p)\in\R^{2d},
\end{equation}
whose exact time-$1$ flow $\phi_{\bar{H},1}$ is $\mathcal{O}(t^2/N^2)$ close to $\widetilde{\phi}_{nt/N}$, i.e., 
\[
\sup_{x\in \Omega}\left\|\widetilde{\phi}_{nt/N}(x)-\phi_{\bar{H},1}(x)\right\|_\infty =  \mathcal{O}(t^2/N^2).
\]
}
\end{corollary}
\Cref{lemma:polynomialTuraev} is fundamental to prove \Cref{corollary:2ndOrderMLP} since we first need to get a separable Hamiltonian, so that we can relate these maps with a \texttt{SympFlow}. Jumping over \Cref{lemma:polynomialTuraev} would have required a significantly new analysis to directly obtain \Cref{corollary:2ndOrderMLP}.
\begin{proof}[Proof of \Cref{corollary:2ndOrderMLP}]
{This proof mimics the one of Lemma \ref{lemma:weierstrass}. Lemma \ref{lemma:polynomialTuraev} ensures that there exists $\widehat{H}$ as in \eqref{eq:secondOrderHam_other}, and a constant $c>0$, such that
\[
\sup_{x\in\Omega}\left\|\phi_{\widehat{H},1}(x) - \widetilde{\phi}_{nt/N}(x)\right\|_\infty \leq c \frac{t^2}{N^2}.
\] 
Let us define 
\[
\widehat{\Omega}:=\left\{y\in\R^{2d}:\,\,y=\phi_{\widehat{H},s}(x),\,x\in\Omega,\,s\in [0,1]\right\},
\]
which is compact. Let $\widehat{\Omega}_\varepsilon = \widehat{\Omega} + B_\infty(\varepsilon)$, and call $\widehat{\Omega}_{\varepsilon,c}\subset\R^{2d}$ a compact and convex set containing $\widehat{\Omega}_\varepsilon$. Set
\[
L:=\sup_{\substack{s\in [0,1] \\ x\in \widehat{\Omega}_{\varepsilon,c}}}\left\|\nabla_x^2\widehat{H}(s,x)\right\|_\infty,\,\,\widehat{\varepsilon}:=\frac{\varepsilon}{\exp(L)}.
\]
}By \cite[Theorem 4.1]{supp:pinkus1999approximation}, there exists a pair of functions $\bar{V}_{\pos}=\bar{V}_{\pos}(s,q)$ and $\bar{V}_{\mom}=\bar{V}_{\mom}(s,p)$ that belong to $\mathcal{F}\two{_\sigma}$ and such that
\begin{equation}\label{eq:polyNet}
\sup_{\substack{s\in [0,1] \\ x\in\widehat{\Omega}_\varepsilon}}\left| \widehat{H}(s,x) -  \bar{H}(s,x)\right|<\widehat{\varepsilon},\,\,\sup_{\substack{s\in [0,1] \\ x\in\widehat{\Omega}_\varepsilon}}\left\|\nabla \widehat{H}(s,x) - \nabla \bar{H}(s,x)\right\|_{\infty}<\widehat{\varepsilon},
\end{equation}
where $\bar{H}$ is defined as in \Cref{eq:secondOrderHam}. This is a consequence of the fact that polynomials are \two{continuously differentiable} functions, and $\mathcal{F}\two{_\sigma}$ is dense in the space of continuously differentiable scalar-valued functions with respect to the $C^1-$topology. {Let us introduce the function $\tau:\Omega\to [0,1]$ defined as
\[
\tau(x) = \min\left\{1,\inf\left\{s\in [0,+\infty)\,\mid\,\phi_{\bar{H},s}(x)\in\partial\widehat{\Omega}_\varepsilon\right\}\right\}.
\]
For an $x\in\Omega$, we can then write
\begin{align*}
\left\|\widetilde{\phi}_{t/N}(x) - \phi_{\bar{H},1}(x)\right\|_\infty &= \left\|\widetilde{\phi}_{t/N}(x) - \phi_{\widehat{H},1}(x) + \phi_{\widehat{H},1}(x) - \phi_{\bar{H},1}(x)\right\|_\infty \\
&\leq c\frac{t^2}{N^2} + \left\|\phi_{\widehat{H},1}(x) - \phi_{\bar{H},1}(x)\right\|_\infty
\end{align*}
and hence it remains to control the last term by applying the approximation bound in \eqref{eq:polyNet}. Let us fix $s\in [0,\tau(x)]$, a non-empty interval because of the same arguments as in the proof of Lemma \ref{lemma:weierstrass}, and consider
\begin{align*}
\left\|\phi_{\widehat{H},s}(x) - \phi_{\bar{H},s}(x)\right\|_\infty &= \left\|\mathbf{J}\int_0^s \left(\nabla_x \widehat{H}(u,\phi_{\widehat{H},u}(x)) - \nabla_x \bar{H}(u,\phi_{\bar{H},u}(x))\right)\dd u\right\|_\infty \\
&\leq \int_0^s \left\|\nabla_x \widehat{H}(u,\phi_{\widehat{H},u}(x)) - \nabla_x \widehat{H}(u,\phi_{\bar{H},u}(x))\right\|_\infty \dd u \\
&+ \int_0^s \left\|\nabla_x \widehat{H}(u,\phi_{\bar{H},u}(x)) - \nabla_x \bar{H}(u,\phi_{\bar{H},u}(x))\right\|_\infty \dd u \\
&\leq \underbrace{L\int_0^s\left\|\phi_{\widehat{H},u}(x) - \phi_{\bar{H},u}(x)\right\|_\infty \dd u}_{\phi_{\widehat{H},u}(x),\phi_{\bar{H},u}(x)\in\widehat{\Omega}_{\varepsilon,c}} + \underbrace{\widehat{\varepsilon}s}_{\phi_{\bar{H},u}(x)\in \widehat{\Omega}_\varepsilon}.
\end{align*}
Gronwall's inequality then allows to obtain
\[
\left\|\phi_{\widehat{H},s}(x) - \phi_{\bar{H},s}(x)\right\|_\infty \leq \widehat{\varepsilon} s \exp(Ls)\leq \widehat{\varepsilon}\exp(L)=\varepsilon,\,\,s\in [0,\tau(x)].
\]
We can then show that $\tau(x)=1$ for every $x\in \Omega$. By contradiction, let us assume $0< \tau(x)<1$ for an $x\in\Omega$, i.e., $\phi_{\bar{H},\tau(x)}(x)\in \partial \widehat{\Omega}_\varepsilon$. By definition of $\widehat{\Omega}$, we have that $\phi_{\widehat{H},\tau(x)}(x)\in \widehat{\Omega}$, and hence
\[
\left\| \phi_{\widehat{H},\tau(x)}(x) - \phi_{\bar{H},\tau(x)}(x) \right\|_\infty \geq \min_{y\in \widehat{\Omega}}\left\|y - \phi_{\bar{H},\tau(x)}(x)\right\|_\infty = \varepsilon.
\]
This is a contradiction since we have just derived that
\[
\left\| \phi_{\widehat{H},\tau(x)}(x) - \phi_{\bar{H},\tau(x)}(x) \right\|_\infty \leq \widehat{\varepsilon}\tau(x)\exp(L\tau(x)) < \widehat{\varepsilon}\exp(L)=\varepsilon.
\]
We conclude that $\tau(x)=1$ for every $x\in\Omega$, and hence
\[
\left\|\widetilde{\phi}_{t/N}(x) - \phi_{\bar{H},1}(x)\right\|_\infty \leq c\frac{t^2}{N^2} + \varepsilon.
\]
By the arbitrarity of $\varepsilon$ in the proof, we conclude that taking $\varepsilon=t^2/N^2$, it follows that there exists $\bar{H}$ realizing
\[
\sup_{x\in\Omega}\left\|\widetilde{\phi}_{t/N}(x) - \phi_{\bar{H},1}(x)\right\|_\infty = \mathcal{O}(t^2/N^2).
\]
}
\end{proof}

\subsection*{Step 4: Approximation of $\phi_{\bar{H},1}$ with a \texttt{SympFlow}}

To conclude, we need to show that the flow map $\phi_{\bar{H},1}$ with $\bar{H}$ defined as in \Cref{eq:secondOrderHam} can be approximated by a suitable \texttt{SympFlow}. To do so, we apply the decomposition in \Cref{eq:split} to $\phi_{\bar{H},1}$. We split the time interval $[0,1]$ into $K$ substeps, leading to substeps that we denote as $\bar{\phi}_{k/K}$ associated to the integration interval $[k/K,(k+1)/K]$, with $k=0,\ldots,K-1$:
\two{
\[
\phi_{\bar{H},1} = \bar{\phi}_{(K-1)/K}\,\circ\,\cdots\circ\bar{\phi}_{2/K}\,\circ\,\bar{\phi}_{0/K},\,\,\bar{\phi}_{k/K}(x) := \phi_{\bar{H},k/K,1/K}(x).
\]
}
We can then approximate $\bar{\phi}_{k/K}$ up to $\mathcal{O}(1/K^2)$ with a $1-$layer \texttt{SympFlow} obtained by applying a Lie-Trotter splitting strategy to \Cref{eq:1storder}. More explicitly, \two{as expanded on in \Cref{app:background}}, we can write
\two{
\begin{align}
\bar{\phi}_{k/K} &= \bar{\phi}_{\bar{H}_1,k/K}\circ \bar{\phi}_{\bar{H}_2,k/K} + \mathcal{O}\left(\frac{1}{K^2}\right),\label{eq:lietrotter}\\
\bar{\phi}_{\bar{H}_1,k/K}(x) &= \begin{bmatrix} q + \int_{k/K}^{(k+1)/K}\nabla_p \bar{V}_\mom(s,p)\dd s \\ p \end{bmatrix},\,\bar{\phi}_{\bar{H}_2,k/K}(x) = \begin{bmatrix} q \\ p - \int_{k/K}^{(k+1)/K}\nabla_q \bar{V}_\pos(s,q)\dd s \end{bmatrix},\label{eq:H1H2}
\end{align}
}
with $x=(q,p)$, $\bar{H}_1(s,x)=\bar{V}_{\mom}(s,p)$ and $\bar{H}_2(s,x)=\bar{V}_{\pos}(s,q)$. \two{The map $\bar{\phi}_{\bar{H}_1,k/K}\circ \bar{\phi}_{\bar{H}_2,k/K}$ is a \texttt{SympFlow} layer and, not to interrupt the flow of the proof, we show this in \Cref{lemma:itIsSympFlow}.}
Having $K$ of these sub-steps, we can thus conclude that there exists a \texttt{SympFlow} of at most $K$ layers able to approximate within $\mathcal{O}(1/K)$ the time-$1$ flow $\phi_{\bar{H},1}$, { and, on compact sets, such an approximation is uniform in the phase space variable $x$}. This error accumulation is due to the fact that we compose $K$ maps $\mathcal{O}(1/K^2)$ close to the reference one, {an estimate following from the quadratic local error of Lie-Trotter, see \cite{blanes2024splitting}}.  Thus, we have a global error growing like $1/K$. The $\mathcal{O}$ notation hides the multiplicative constants, which depend on the Lipschitz regularity of the flow maps.

\subsection*{Step 5: Combining all the approximations to conclude the proof}

We can now conclude the proof by combining the various approximations done in the previous steps and using the fact that the composition of \texttt{SympFlows} is again a \texttt{SympFlow}. We first recall the decomposition
\[
\phi_{\widetilde{H},t} = \widetilde{\phi}_{(N-1)t/N}\circ \cdots\circ \widetilde{\phi}_{nt/N} \circ \cdots \circ \widetilde{\phi}_{0},\,\, n=0,\ldots,N-1.
\]
In steps 3 and 4 we showed that for each $n=0,\ldots,N-1$ there is a \texttt{SympFlow} of at most $K$ layers approximating within $\mathcal{O}(1/K)$ the exact flow $\phi_{\widehat{H},1}$, which is known to be $\mathcal{O}(\Delta t^2/N^2)$ close to $\widetilde{\phi}_{nt/N}$, see \Cref{corollary:2ndOrderMLP}. {We recall that all these approximations can be made uniform in the phase space variable $x$ when working on compact sets.} \two{More explicitly, we have shown that
\begin{align}
\phi_{\widetilde{H},t} &= \widetilde{\phi}_{(N-1)t/N} \circ \cdots \circ \widetilde{\phi}_{0}  \nonumber \\
&=\left(\phi_{\bar{H}_{N-1},1} + \mathcal{O}\left(\frac{\Delta t^2}{N^2}\right)\right) \circ \cdots \circ \left(\phi_{\bar{H}_0,1} + \mathcal{O}\left(\frac{\Delta t^2}{N^2}\right)\right) \nonumber \\
&= \phi_{\bar{H}_{N-1},1}\circ \cdots \circ \phi_{\bar{H}_0,1} + \mathcal{O}\left(\frac{\Delta t^2}{N}\right)\text{ (Step 3)}, \nonumber \\
\phi_{\bar{H}_n,1} &= \texttt{SympFlow}_{K,n} + \mathcal{O}\left(\frac{1}{K}\right),\,n=0,...,N-1\text{ (Step 4)}, \label{eq:Klayers}\\
\phi_{\widetilde{H},t} &= \texttt{SympFlow} + \mathcal{O}\left(\frac{N}{K}\right) + \mathcal{O}\left(\frac{\Delta t^2}{N}\right),\nonumber\\
\texttt{SympFlow} &= \texttt{SympFlow}_{K,N-1}\circ \cdots \circ \texttt{SympFlow}_{K,0}.\nonumber
\end{align}
We remark that the notation in \Cref{eq:Klayers} stands for a \texttt{SympFlow} depending on $n$, and with at most $K$ layers.} 
If we set $K=N^2$, we can thus conclude that the map $\phi_{\widetilde{H},t}$, and hence also $\phi_{H,t}$, can be approximated within $\mathcal{O}(\Delta t^2/N)$ by a \texttt{SympFlow} of at most $N^3$ layers. $N$ can then be selected based on the desired accuracy $\varepsilon$. This reasoning works assuming that $\Delta t$ is of moderate size, such as $\Delta t=1$, which we use in our experiments. Otherwise, one would have to slightly modify the reasoning.
$\blacksquare$

\two{
\begin{lemma}\label{lemma:itIsSympFlow}
Let $\sigma:\mathbb{R}\to\mathbb{R}$ be a continuously differentiable function and consider
\[
\bar{H}_1(s,x)=\bar{V}_\mom(s,p)=w_1^\top \sigma(A_1 p + b_1 s + c_1),\,\bar{H}_2(s,x)=\bar{V}_\pos(s,q)=w_2^\top \sigma(A_2 q + b_2 s + c_2),
\]
$\bar{H}_1,\bar{H}_2\in\mathcal{F}_\sigma$. Assume that all the entries of $b_1$ and $b_2$ are not zero. The map $\bar{\phi}_{\bar{H}_1,k/K}\circ \bar{\phi}_{\bar{H}_2,k/K}:\mathbb{R}^{2d}\to\mathbb{R}^{2d}$, defined as in \Cref{eq:H1H2}, is a layer of a \texttt{SympFlow}.
\end{lemma}
We note that the assumption that the entries of $b_1$ and $b_2$ never vanish is not restrictive in our case. If this were not satisfied, we could slightly perturb the entries of $b_1$ and $b_2$ and still achieve the same approximation rates. Working with zero entries would also be possible, but would result in a non-conventional \texttt{SympFlow} with some neurons having different activation functions from others.
\begin{proof}
Let us focus on
\[
\bar{\phi}_{\bar{H}_1,k/K}(x) = \begin{bmatrix} q + \int_{k/K}^{(k+1)/K}\nabla_p \bar{V}_\mom(s,p)\dd s \\ p \end{bmatrix},\,\,x=(q,p)\in\mathbb{R}^{2d}.
\]
We see that
\[
\int_{k/K}^{(k+1)/K}\nabla_p \bar{V}_\mom(s,p)\dd s = \int_0^{1/K} \nabla_p \bar{V}_\mom\left(\frac{k}{K}+s,p\right)\dd s,
\]
where
\[
\bar{V}_\mom\left(\frac{k}{K}+s,p\right) = w^\top \sigma(A p + b s + \bar{c}),\,\,\bar{c} = c + \frac{k}{K}b,
\]
suppressing the weight subscripts to lighten the notation. Assume $A\in\mathbb{R}^{h\times d},\,\bar{c},b\in\mathbb{R}^{h}$. Calling $\gamma:\mathbb{R}\to\mathbb{R}$ a primitive of $\sigma$, $\gamma'=\sigma$, {we} can also write
\[
\bar{V}_\mom\left(\frac{k}{K}+s,p\right) = \sum_{i=1}^h\partial_s\left(\frac{w_i}{b_i}\gamma((Ap)_i + b_i s + \bar{c}_i)\right) =: \partial_s\left(\bar{w}^\top \gamma(Ap+bs+\bar{c})\right),
\]
where $\bar{w}_i = w_i / b_i$, $i=1,...,h$. We can then conclude that
\begin{align*}
\int_{k/K}^{(k+1)/K}\nabla_p \bar{V}_\mom(s,p)\dd s &= \nabla_p \left(\bar{w}^\top \gamma\left(Ap+\frac{b}{K}+\bar{c}\right) - \bar{w}^\top \gamma\left(Ap+\bar{c}\right) \right)\\
&=\left(A^\top \diag(\bar{w}) \sigma\left(Ap+\frac{b}{K}+\bar{c}\right) - A^\top \diag(\bar{w})\sigma\left(Ap+\bar{c}\right) \right).
\end{align*}
In summary, we see that
\begin{align*}
\bar{\phi}_{\bar{H}_1,k/K}(x) &= \begin{bmatrix} q + \int_{k/K}^{(k+1)/K}\nabla_p \bar{V}_\mom(s,p)\dd s \\ p \end{bmatrix} \\
&= \begin{bmatrix}
    q + \left(A^\top \diag(\bar{w}) \sigma\left(Ap+\frac{b}{K}+\bar{c}\right) - A^\top \diag(\bar{w})\sigma\left(Ap+\bar{c}\right) \right) \\
     p
\end{bmatrix}\\
&=\begin{bmatrix} q + \nabla_p \tilde{V}_\mom(1/K,p) - \nabla_p \tilde{V}_\mom(0,p) \\ p  \end{bmatrix}
\end{align*}
where
\[
\tilde{V}_\mom(s,p) = \bar{w}^\top\gamma(Ap+bs+\bar{c})\in\mathcal{F}_\gamma.
\]
This allows us to conclude since $\tilde{V}_\mom$ is a single-layer neural network, i.e., a particular case of an MLP.
\end{proof}
\begin{remark}
\Cref{lemma:itIsSympFlow} also implies that \texttt{SympFlows} with layers based on Hamiltonians that are single hidden layer networks are universal, and one would not need to rely on more complicated MLPs from a theoretical standpoint. Practically, however, it may still be beneficial.
\end{remark}
}

\section{Additional experiments for the Simple Harmonic Oscillator}\label{app:mixed}
\one{We now include the results obtained from training \texttt{SympFlow} for the unsupervised experiments of the Simple Harmonic Oscillator by using the mixed training procedure. The results are in \Cref{fig:mixedSimpleHO}.}
\begin{figure}[ht!]
    \centering
    \includegraphics[scale=0.5]{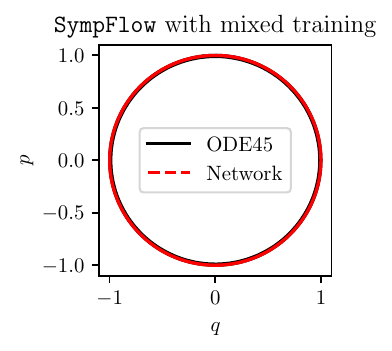}
    \includegraphics[scale=0.5]{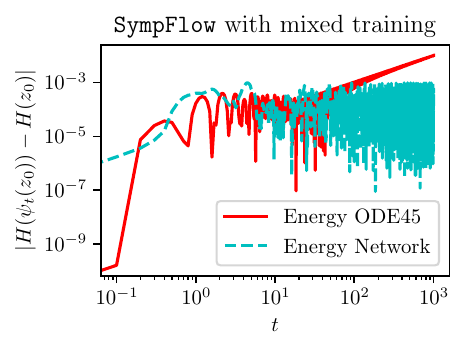}
    \caption{Mixed training procedure for the simple harmonic oscillator}
    \label{fig:mixedSimpleHO}
\end{figure}

\section{Additional experiments for H\'enon--Heiles}\label{app:heiles}
This section collects some additional numerical experiments for the H\'enon--Heiles system. \Cref{fig:heilesUnsupervisedSolutions} shows the solutions produced with the five training configurations considered in the paper. We see that the regularization is significantly improving the performance of the MLP, but not so much for the \texttt{SympFlow}, which already performs well just with the residual. The mixed training procedure leads to some improvements over the regularized one. \Cref{fig:heilesSupervisedSolutions} collects the solutions for the supervised models, whereas \Cref{fig:referenceSection} is a reference Poincarè section for H\'enon--Heiles obtained with Runge--Kutta (5,4) time integration.
\begin{figure}
    \centering
    \includegraphics[scale=.5]{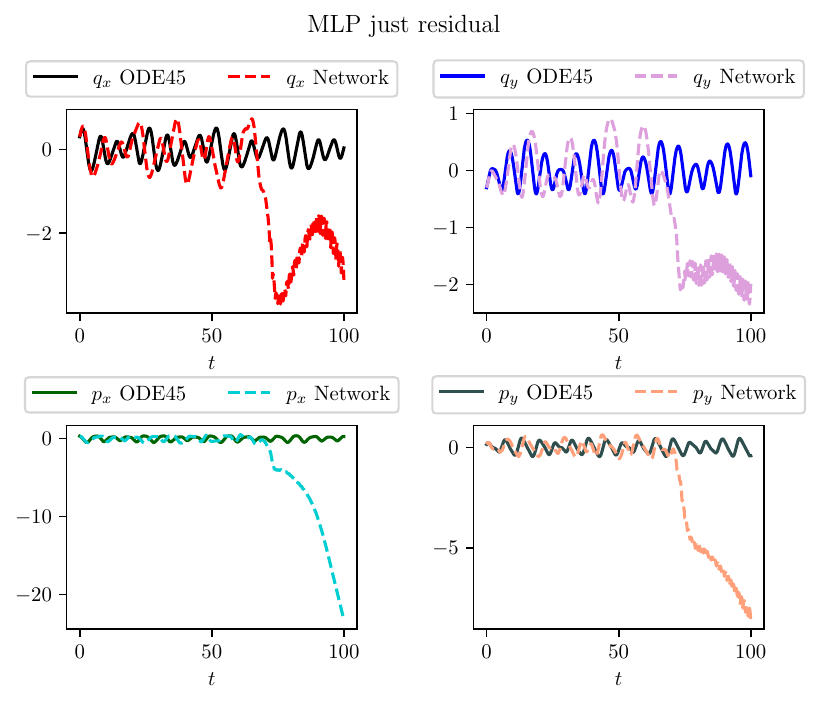}
    \includegraphics[scale=.5]{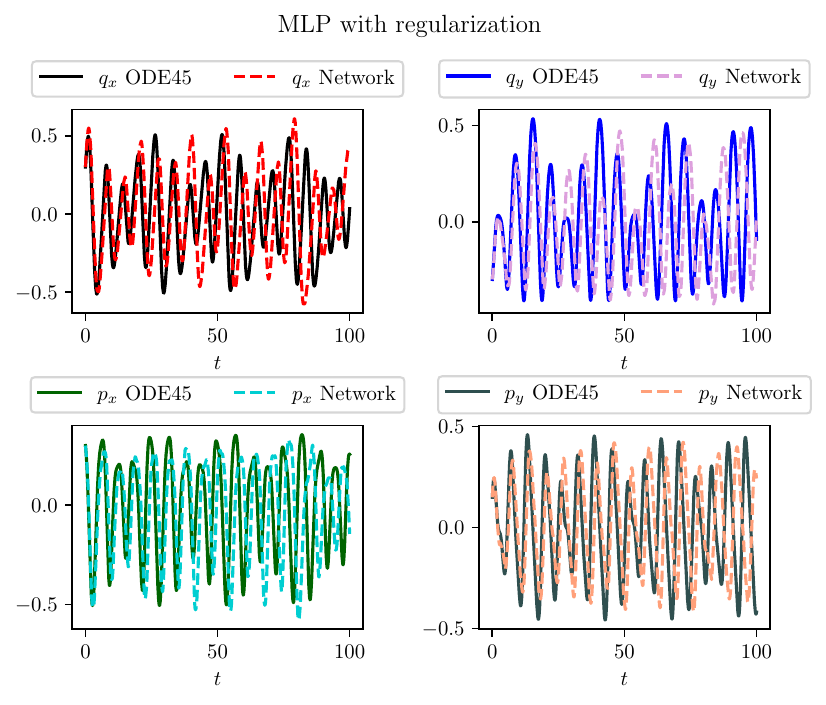}

    \includegraphics[scale=.5]{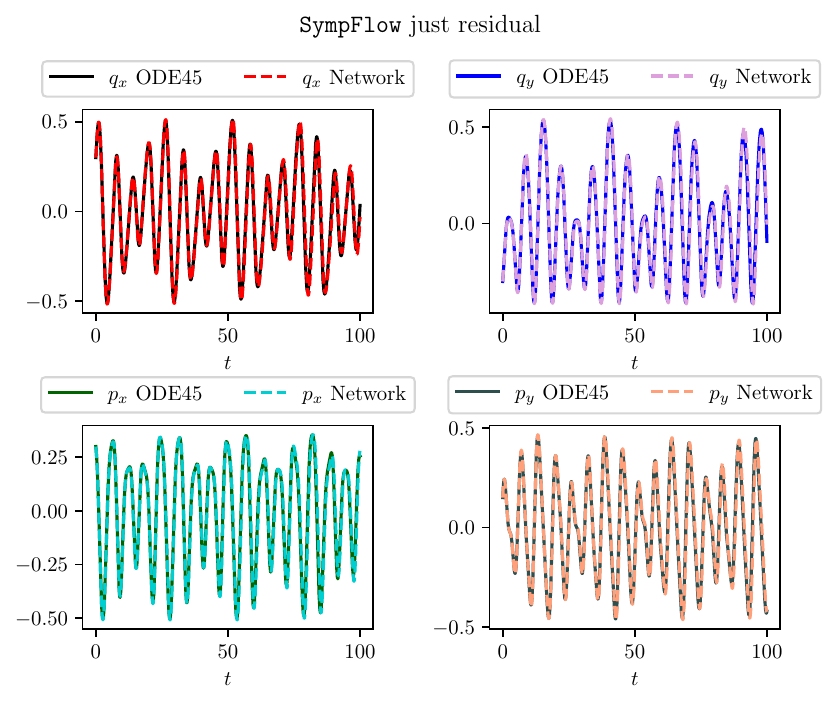}
    \includegraphics[scale=.5]{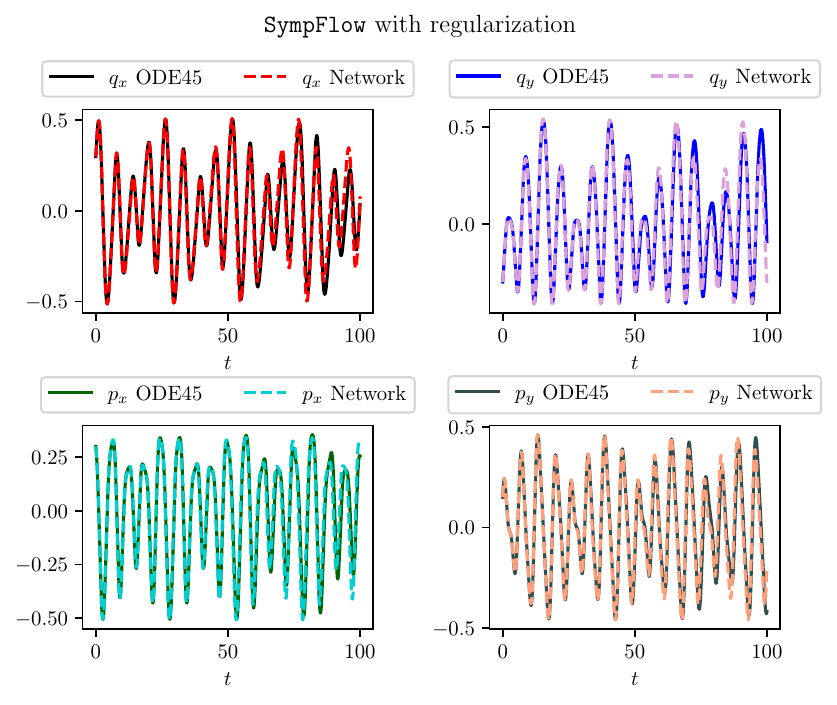}

    \includegraphics[scale=.5]{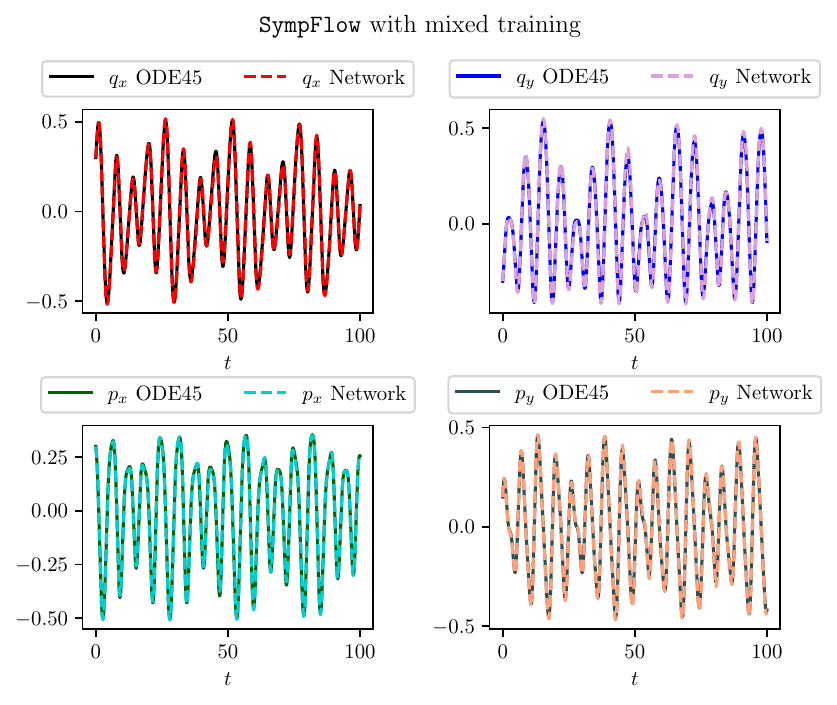}
    \caption{\textbf{Unsupervised experiment --- H\'enon--Heiles: }Comparison of the solutions obtained with the different methods, with integration time $[0,T=100]$.}
    \label{fig:heilesUnsupervisedSolutions}
\end{figure}

\begin{figure}
    \centering
    \includegraphics[scale=.5]{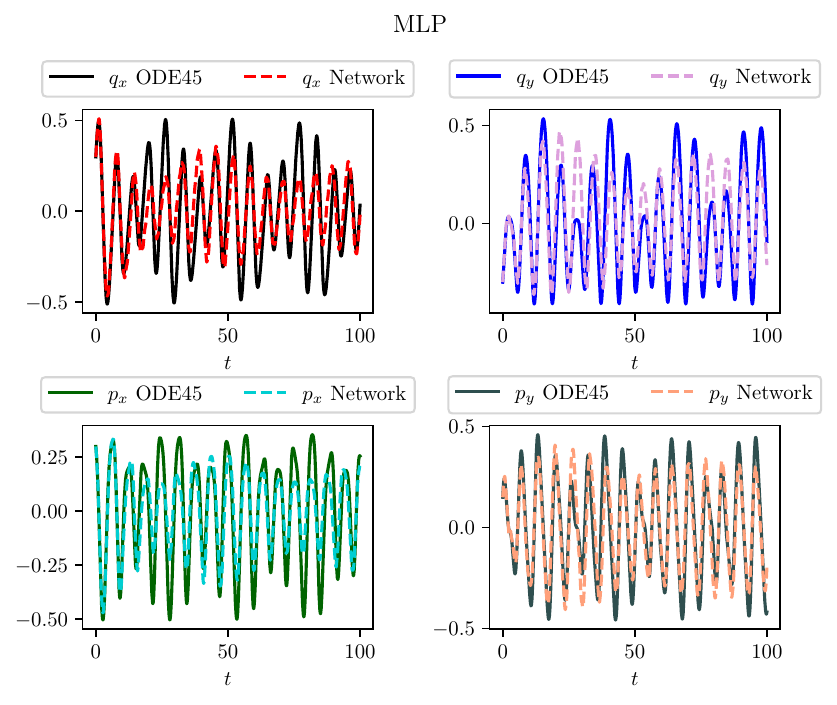}
    \includegraphics[scale=.5]{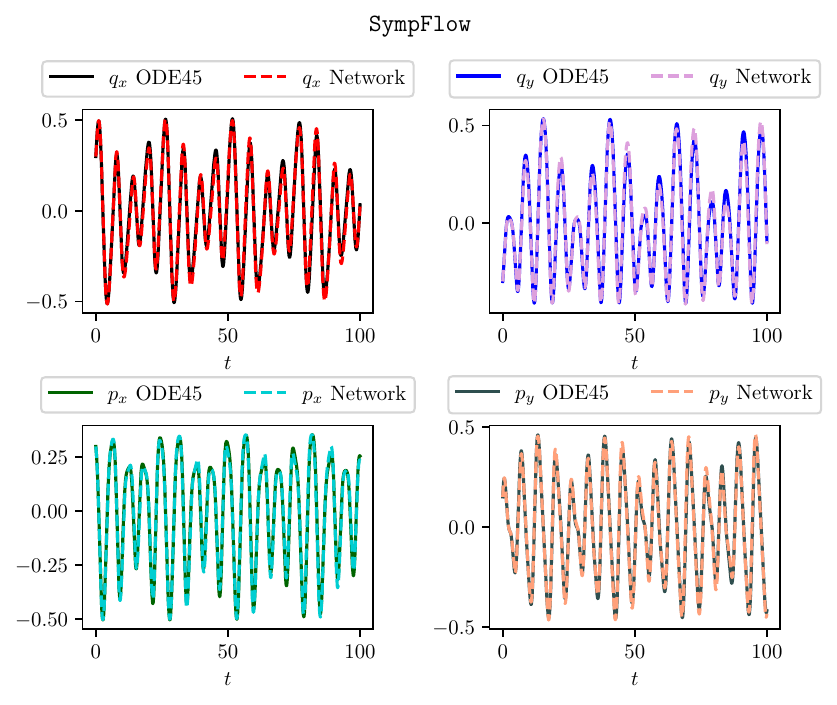}
    \caption{\textbf{Supervised experiment --- H\'enon--Heiles: }Comparison of the solutions obtained with the different methods, with integration time $[0,T=100]$.}
    \label{fig:heilesSupervisedSolutions}
\end{figure}

\begin{figure}
    \centering
    \includegraphics[scale=1]{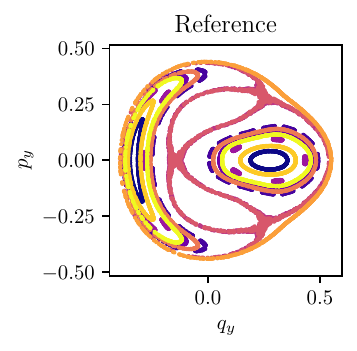}
    \caption{Poincarè section for H\'enon--Heiles obtained with Runge--Kutta (5,4) time integration.}
    \label{fig:referenceSection}
\end{figure}

\two{\section{Model comparison with varying depth}\label{app:timing}
We now train and compare \texttt{SympFlow} and the MLP with a varying number of layers. We consider models with $4$, $8$, and $16$ layers, focusing on the unsupervised experiments of the simple harmonic oscillator. For both models and all the layer numbers, we consider the training procedure with and without Hamiltonian regularization.}

\two{\Cref{tab:timing} collects the average time per training epoch, and the average time per evaluation of one second of the solution for a single initial condition at inference time. These timings correspond to unsupervised experiments for the simple harmonic oscillator. We train for $10,000$ epochs, where for each we sample $700$ new collocation points to evaluate the loss function, and we also test the accuracy over $5,000$ test samples. The average training time is obtained by timing the full training loop and dividing by $10,000$. For inference, we predict the solution up to time $T=1,000$ for $N=100$ initial conditions. The total time to produce these $100$ solutions is then divided by $N\cdot T$, hence producing an average inference time per second of simulation. \texttt{SympFlow} is more expensive to evaluate, and further work is needed to improve its implementation efficiency. This additional cost is mostly due to the need to compute gradients for the forward pass, compared to a conventional MLP. On top of the evaluation of \texttt{SympFlow}, it is also evident that the Hamiltonian regularization contributes to an increased overhead cost at training time. In contrast, it has no impact on the inference time.  Still, this additional cost is worth it given the considerably improved long-term reliability of the produced solution, compared to that of the MLP.}
\begin{table}[t]
  \small
  \centering
  \caption{\two{\textbf{Timings.} This table collects the average training time per epoch and inference time per unit of time of four models with varying numbers of layers $L\in\{4,8,16\}$. The experiments are done on a Quadro RTX 6000 24220MiB GPU. The time is expressed in seconds ($s$), and it is measured with the Python package \texttt{time}.}}
  \setlength{\tabcolsep}{3pt}
  \begin{tabular}{c|ccc|ccc}
    \toprule
    \multirow{2}{*}{\textbf{Model}}
      & \multicolumn{3}{c|}{\textbf{Training time ($s$)}} 
      & \multicolumn{3}{c}{\textbf{Inference time ($s$)}} \\
      \cmidrule(lr){2-4}\cmidrule(l){5-7}
      & \textbf{4} & \textbf{8} & \textbf{16} & \textbf{4} & \textbf{8} & \textbf{16} \\
    \midrule
    \texttt{SympFlow} just residual & 0.3914 & 0.7231 & 1.3931 & 0.2730 & 0.5328 & 1.0528 \\
    \texttt{SympFlow} with regularization & 0.4659 & 0.8395 & 1.6222  & 0.2747 & 0.5292 & 1.0427  \\
    MLP just residual & 0.0232 & 0.0326 & 0.05112 & 0.0197 & 0.0212 & 0.0226    \\
    MLP with regularization &0.0264 & 0.0369 & 0.0584 & 0.0199 & 0.0213 & 0.0226 \\
    \bottomrule
  \end{tabular}
  \label{tab:timing}
\end{table}

\two{In \Cref{tab:accuracies}, we instead compare the average relative $\ell^2$ error of the produced solution and the average relative absolute energy error for the considered models, after having applied them up to time $k\Delta t$, $\Delta t=1$, for $k=1,10,100$. These errors are defined as in the supervised experiments of the simple harmonic oscillator, see \Cref{eq:averageSolError} for the solution error, and \Cref{eq:averageHamError} for the error in the Hamiltonian. The models considered here are the same leading to the timings in \Cref{tab:timing}. We see that the \texttt{SympFlow} models consistently outperform the corresponding MLPs. As expected, for any number of layers, the energy behavior of \texttt{SympFlow} is stable, especially when compared with the one produced by MLPs. We also notice that these experiments do not display any significant improvement when the number of layers is increased. This behavior is more evident in the MLP models than for \texttt{SympFlow}. However, this may reflect that the deeper networks have not fully converged within $10,000$ epochs. The choice of stopping at this fixed number of epochs is made to have comparable timings of the $12$ experiments. Furthermore, focusing on the effects of regularization, we see that the performance of \texttt{SympFlow} tends to be slightly worse when regularized. On the other hand, the long-time energy behavior of the MLPs is consistently more stable with a regularized loss function.} 

\begin{table}[t]
  \small
  \centering
  \caption{\two{\textbf{Accuracy.} This table collects the relative solution and energy average errors of four models with varying numbers of layers $L\in\{4,8,16\}$. We compute the errors after applying the networks to reach the prediction time $k\Delta t$, with $\Delta t=1$ and $k\in\{1, 10, 100 \}$. The errors correspond to an average computed over $100$ test initial conditions. The number on the side of the model name stands for the number of layers in the network. The solution errors are scaled by a factor of $10^{-3}$, 
  whereas the energy errors by $10^{-2}$.}}
  \setlength{\tabcolsep}{3pt}     
  \begin{tabular}{c|ccc|ccc}
    \toprule
    \multirow{2}{*}{\textbf{Model}}
      & \multicolumn{3}{c|}{\textbf{Solution error $(\cdot 10^{-3})$}}
      & \multicolumn{3}{c}{\textbf{Energy error $(\cdot 10^{-2})$}} \\
      \cmidrule(lr){2-4}\cmidrule(l){5-7}
      & $\Delta t$ & $10\Delta t$  & $100\Delta t$ & $\Delta t$ & $10\Delta t$  & $100\Delta t$ \\
    \midrule
\texttt{SympFlow} with regularization, 4 & $7.277$ & $17.604$ & $85.731$ & $1.431$ & $1.552$ & $0.506$\\
\texttt{SympFlow} with regularization, 8 & $1.983$ & $17.910$ & $173.982$ & $0.206$ & $0.580$ & $0.515$\\
\texttt{SympFlow} with regularization, 16 & $1.554$ & $12.506$ & $117.964$ & $0.163$ & $0.281$ & $0.221$\\
\texttt{SympFlow} just residual, 4 & $1.915$ & $6.698$ & $42.731$ & $0.190$ & $0.726$ & $0.370$\\
\texttt{SympFlow} just residual, 8 & $0.902$ & $9.086$ & $92.343$ & $0.098$ & $0.142$ & $0.124$\\
\texttt{SympFlow} just residual, 16 & $0.848$ & $6.376$ & $66.430$ & $0.087$ & $0.138$ & $0.124$\\
MLP with regularization, 4 & $3.292$ & $14.426$ & $124.304$ & $0.422$ & $1.786$ & $12.972$\\
MLP with regularization, 8 & $5.663$ & $31.249$ & $247.627$ & $0.669$ & $2.357$ & $12.800$\\
MLP with regularization, 16 & $8.603$ & $43.347$ & $276.006$ & $1.102$ & $6.347$ & $38.693$\\
MLP just residual, 4 & $3.986$ & $20.827$ & $199.441$ & $0.480$ & $2.158$ & $16.440$\\
MLP just residual, 8 & $5.680$ & $40.154$ & $325.044$ & $0.759$ & $4.178$ & $35.408$\\
MLP just residual, 16 & $7.398$ & $52.419$ & $436.470$ & $0.987$ & $7.287$ & $49.032$\\
 \bottomrule
  \end{tabular}
  \label{tab:accuracies}
\end{table}

\two{\Cref{fig:loss} provides a comparison of the loss curves while training, and over a fixed test set. We observe that \texttt{SympFlow} converges faster than the MLP, as it achieves better loss function values after a relatively low number of epochs. This is easier to appreciate in the models trained without regularization, where the loss functions of the two models coincide. We also observe that, for this problem, neither \texttt{SympFlow} nor the MLP appear to benefit from an increased number of layers, as the final training and test losses are relatively close. Still, we remark that the final loss for \texttt{SympFlow} is consistently smaller than that of the MLP, which supports the interest in further exploring this type of constrained architecture for these tasks. We note that the test loss only includes the residual component. This is why the regularized \texttt{SympFlows} have a test loss which is smaller than the training one.}

\begin{figure}
\centering
\begin{subfigure}{0.3\textwidth}
\centering
\includegraphics[width=\linewidth]{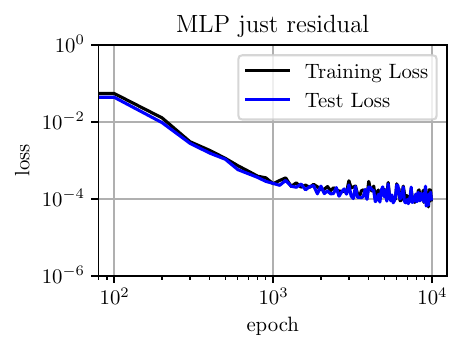}
\caption{4 layers}
\label{fig:pinnNoReg_4layers}
\end{subfigure}
\begin{subfigure}{0.3\textwidth}
\centering
\includegraphics[width=\linewidth]{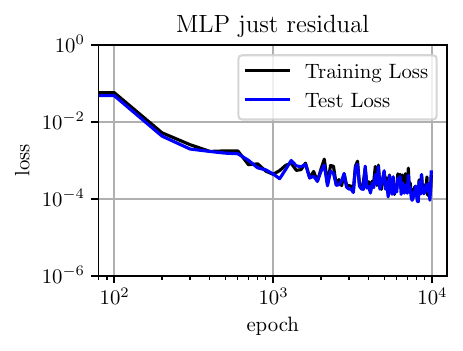}
\caption{8 layers}
\label{fig:pinnNoReg_8layers}
\end{subfigure}
\begin{subfigure}{0.3\textwidth}
\centering
\includegraphics[width=\linewidth]{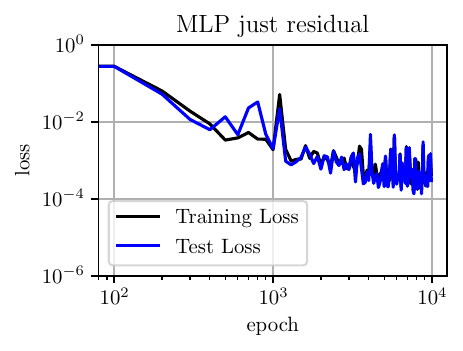}
\caption{16 layers}
\label{fig:pinnNoReg_16layers}
\end{subfigure}

\begin{subfigure}{0.3\textwidth}
\centering
\includegraphics[width=\linewidth]{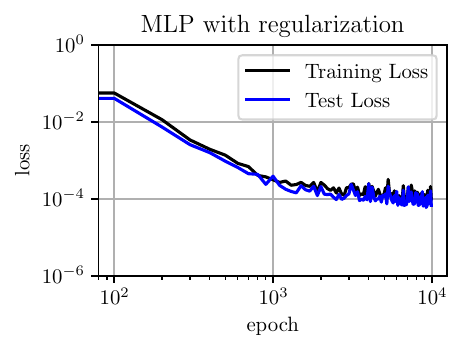}
\caption{4 layers}
\label{fig:pinnReg_4layers}
\end{subfigure}
\begin{subfigure}{0.3\textwidth}
\centering
\includegraphics[width=\linewidth]{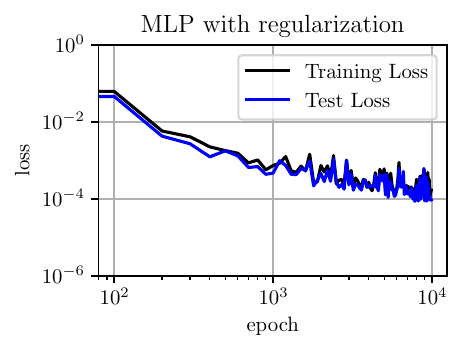}
\caption{8 layers}
\label{fig:pinnReg_8layers}
\end{subfigure}
\begin{subfigure}{0.3\textwidth}
\centering
\includegraphics[width=\linewidth]{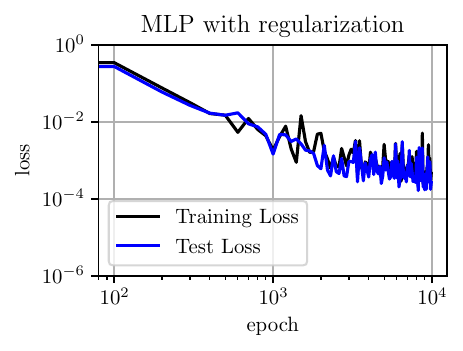}
\caption{16 layers}
\label{fig:pinnReg_16layers}
\end{subfigure}

\begin{subfigure}{0.3\textwidth}
\centering
\includegraphics[width=\linewidth]{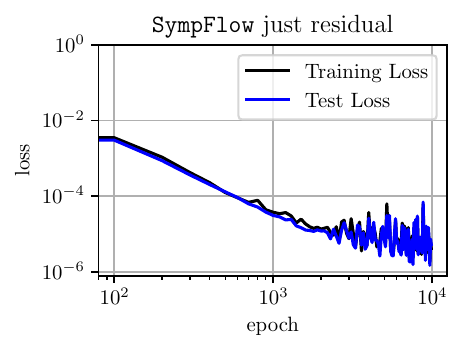}
\caption{4 layers}
\label{fig:noHamReg_4layers}
\end{subfigure}
\begin{subfigure}{0.3\textwidth}
\centering
\includegraphics[width=\linewidth]{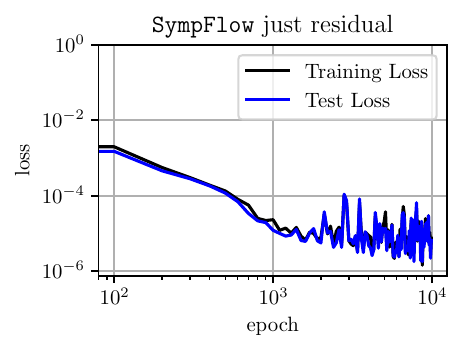}
\caption{8 layers}
\label{fig:noHamReg_8layers}
\end{subfigure}
\begin{subfigure}{0.3\textwidth}
\centering
\includegraphics[width=\linewidth]{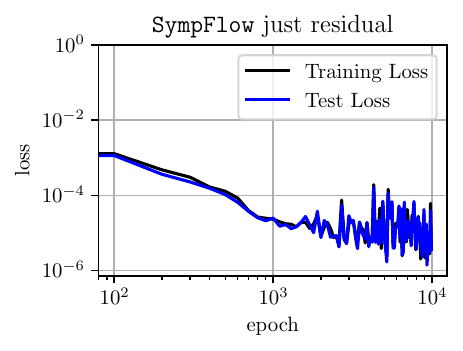}
\caption{16 layers}
\label{fig:noHamReg_16layers}
\end{subfigure}

\begin{subfigure}{0.3\textwidth}
\centering
\includegraphics[width=\linewidth]{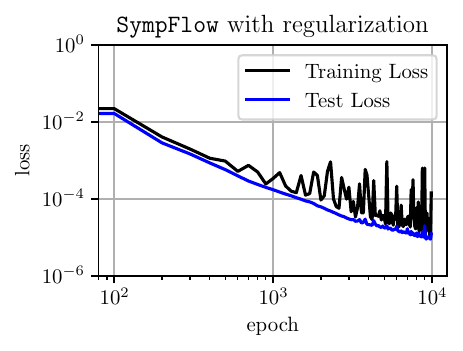}
\caption{4 layers}
\label{fig:hamReg_4layers}
\end{subfigure}
\begin{subfigure}{0.3\textwidth}
\centering
\includegraphics[width=\linewidth]{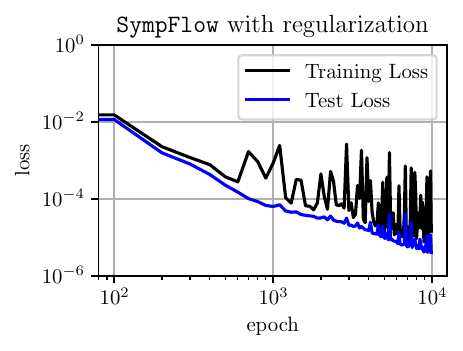}
\caption{8 layers}
\label{fig:hamReg_8layers}
\end{subfigure}
\begin{subfigure}{0.3\textwidth}
\centering
\includegraphics[width=\linewidth]{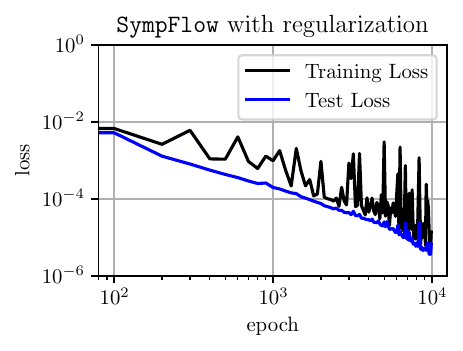}
\caption{16 layers}
\label{fig:hamReg_16layers}
\end{subfigure}
\caption{\textbf{Training and test loss:} We plot the training and test loss curves for the four models trained with a varying number of layers $L\in\{4,8,16\}$.}
\label{fig:loss}
\end{figure}

\section{Comparison with Symplectic Recurrent Neural Network}\label{app:srnn}

\two{We now include a comparison of the models we have considered with the Symplectic Recurrent Neural Network (SRNN) model introduced in \cite{Chen2020Symplectic}. We perform such a comparison for the supervised experiments targeting the simple harmonic oscillator.}

\two{The idea behind SRNN is to parameterize the unknown Hamiltonian energy $H:\mathbb{R}^{2d}\to\mathbb{R}$ with a scalar-valued neural network $H_\theta:\mathbb{R}^{2d}\to\mathbb{R}$. Then, the approximate solutions are generated with a symplectic integrator $\varphi_\theta^h:\mathbb{R}^{2d}\to\mathbb{R}^{2d}$. The authors of \cite{Chen2020Symplectic} propose to minimize the loss function
\[
\mathcal{L}_{\text{SRNN}}(\theta) = \frac{1}{NM} \sum_{n=1}^N \sum_{m=1}^M \left\|\varphi_{\theta}^{h_n^m}\circ \cdots \circ \varphi_\theta^{h_n^1}(x_n^0) - x_n^m\right\|_{2}^2.
\]
Since the role of this $M$ and the one used for the supervised experiments of our continuous-in-time networks is different, we focus on the case $M=1$, and $N=1500$, and compare the results corresponding to a varying noise magnitude in the training trajectories. To be able to use explicit symplectic numerical methods, the focus is on separable Hamiltonians, meaning that the neural network $H_\theta$ is defined based on two other functions $K_\theta,V_\theta: \mathbb{R}^d\to\mathbb{R}$ as $H_\theta(q,p)=K_\theta(p)+V_\theta(q)$. We model $K_\theta$ and $V_\theta$ as MLPs with an architecture leading to a comparable number of weights to those of \texttt{SympFlow}. The integrator $\varphi_\theta^h$ that we use is then the Symplectic Euler method. We now compare this model with \texttt{SympFlow}. We remark that this separability assumption is not necessary for \texttt{SympFlow}.}

\two{We include the results in \Cref{tab:srnn}. For this comparison, we reduce the $\Delta t$ from one, the usual value we consider in the paper, to $\Delta t=0.1$. This choice is because only for smaller time steps does a numerical method provide a solution which resembles the exact solution of the parametric Hamiltonian. \texttt{SympFlow} does not suffer from this limitation, given that $\Delta t$ is not to be interpreted as a time step, but as an upper bound on the region where the network is trained.} We observe that both SRNN and \texttt{SympFlow} perform well overall, as a consequence of their symplectic structure. In these experiments, \texttt{SympFlow} achieves lower solution errors in most settings, while SRNN is mainly better at zero noise (with a few isolated wins at other noise levels).

\begin{table}[t]
   \small
  \centering
  \caption{\two{\textbf{Comparison with SRNN.} This table collects the relative solution and energy average errors of the SRNN and \texttt{SympFlow} models with varying numbers of noise magnitudes $\varepsilon\in\{0,0.005,0.01,0.02,0.03,0.04,0.05\}$. The training set corresponds to $N=1500$ pairs $((x_n,h_n),y_n)$, $n=1,...,N$, of an initial condition $x_n$, a time step $h_n$ and the updated position $y_n\approx \phi_{H,h_n}(x_n)$, where $H$ is the Hamiltonian of the simple harmonic oscillator. We sample $h_n\in [0,\Delta t=0.1]$ uniformly. We compute the errors after having applied the networks to reach the prediction time $k\Delta t$, with $\Delta t=0.1$ and $k\in\{1,10,100\}$. The errors correspond to an average computed over $100$ test initial conditions. The number on the side of the model name stands for noise intensity. The solution errors are scaled by a factor of $10^{-3}$, whereas the energy errors by $10^{-2}$. Lower errors for a fixed noise and prediction level are highlighted.}}
  \setlength{\tabcolsep}{3pt}
  \begin{tabular}{l|ccc|ccc}
    \toprule
    \multirow{2}{*}{\textbf{Model}}
      & \multicolumn{3}{c|}{\textbf{Solution error $(\cdot 10^{-3})$}}
      & \multicolumn{3}{c}{\textbf{Energy error $(\cdot 10^{-2})$}} \\
      \cmidrule(lr){2-4}\cmidrule(l){5-7}
      & $\Delta t$ & $10\Delta t$  & $100\Delta t$ & $\Delta t$ & $10\Delta t$  & $100\Delta t$ \\
    \midrule
\texttt{SympFlow}, $\varepsilon=0.0$ & 1.562 & 19.021 & 167.652 & 0.144 & 1.136 & 2.179 \\
SRNN, $\varepsilon=0.0$ & \textbf{1.347} & \textbf{17.166} & \textbf{143.446} & \textbf{0.119} & \textbf{0.892} & \textbf{0.786} \\
    \midrule
\texttt{SympFlow}, $\varepsilon=0.005$ & \textbf{1.708} & \textbf{21.037} & \textbf{175.532} & \textbf{0.178} & \textbf{1.632} & 2.401 \\
SRNN, $\varepsilon=0.005$ & 1.930 & 25.487 & 185.259 & 0.212 & 1.759 & \textbf{2.184} \\
    \midrule
\texttt{SympFlow}, $\varepsilon=0.01$ & \textbf{2.507} & \textbf{29.709} & 184.588 & 0.324 & \textbf{3.166} & \textbf{4.190} \\
SRNN, $\varepsilon=0.01$ & 2.722 & 33.205 & \textbf{184.311} & \textbf{0.313} & 3.354 & 5.154 \\
    \midrule
\texttt{SympFlow}, $\varepsilon=0.02$ & \textbf{2.623} & \textbf{28.812} & \textbf{184.513} & \textbf{0.271} & \textbf{2.478} & \textbf{3.750} \\
SRNN, $\varepsilon=0.02$ & 4.923 & 51.195 & 297.099 & 0.526 & 4.691 & 9.688 \\
    \midrule
\texttt{SympFlow}, $\varepsilon=0.03$ & \textbf{2.469} & \textbf{27.012} & \textbf{162.378} & \textbf{0.329} & \textbf{2.638} & \textbf{3.417} \\
SRNN, $\varepsilon=0.03$ & 6.155 & 74.184 & 415.789 & 0.737 & 7.302 & 14.453 \\
    \midrule
\texttt{SympFlow}, $\varepsilon=0.04$ & \textbf{3.493} & \textbf{35.479} & \textbf{150.782} & \textbf{0.456} & \textbf{3.882} & 6.734 \\
SRNN, $\varepsilon=0.04$ & 5.106 & 54.381 & 350.609 & 0.545 & 4.062 & \textbf{6.124} \\
    \midrule
\texttt{SympFlow}, $\varepsilon=0.05$ & \textbf{4.246} & \textbf{48.107} & \textbf{235.879} & \textbf{0.487} & \textbf{4.646} & \textbf{7.506} \\
SRNN, $\varepsilon=0.05$ & 13.174 & 132.718 & 499.740 & 2.011 & 18.751 & 31.791 \\
    \bottomrule
  \end{tabular}
  \label{tab:srnn}
\end{table}

\two{
\begin{remark}
\texttt{SympFlow} and SRNN have several similarities. The main difference is that \texttt{SympFlow} has layers associated with the exact flows of different time-dependent Hamiltonian systems. In contrast, the SRNN has a single time-independent Hamiltonian governing the entire network, which is generated by simulating this Hamiltonian system using a symplectic integrator. To improve the robustness to noise, in \cite{Chen2020Symplectic}, the authors focused on considering a larger $M$. Given the experiments in \Cref{tab:srnn}, we see that our \texttt{SympFlow} has some inherent regularization in this sense. It will be worth exploring in the future different training strategies, such as those involving larger $M$ as proposed in \cite{Chen2020Symplectic}.
\end{remark}
}

\newpage

\bibliographystyle{plain}
\bibliography{references,referencesSupp}

\end{document}